\DeclareMathOperator*{\argmax}{arg\,max}
\DeclareMathOperator*{\argmin}{arg\,min}
\DeclareMathOperator*{\hertz}{Hz}
\DeclareMathOperator*{\support}{supp}
\newtheorem{thm}{Theorem}
\newtheorem{lemma}{Lemma}
\newtheorem{corollary}{Corollary}
\newtheorem{prop}{Proposition}
\definecolor{purple1}{RGB}{100, 85, 255}
\definecolor{orange1}{RGB}{233, 146, 0}
\newcommand*{\QEDB}{\hfill\ensuremath{\square}}%
\begin{document}

\title{Robust Estimation of Self-Exciting {Generalized Linear Models} with Application to Neuronal Modeling}

\author{Abbas~Kazemipour,~\IEEEmembership{Student Member,~IEEE,
}~Min~Wu,~\IEEEmembership{Fellow,~IEEE,
}~and~Behtash~Babadi,~\IEEEmembership{Member,~IEEE
}
\thanks{The authors are with the Department of Electrical and Computer Engineering, University of Maryland, College Park, MD 20742 USA (e-mail: kaazemi@umd.edu; minwu@umd.edu; behtash@umd.edu).}
\thanks{This material is based upon work supported in part by the National Science Foundation under Grant No. 1552946.}
\thanks{Corresponding author: B. Babadi (e-mail: behtash@umd.edu).}
\thanks{Copyright (c) 2017 IEEE. Personal use of this material is permitted. However, permission to use this material for any other purposes must be obtained from the IEEE by sending an email to pubs-permissions@ieee.org.}}

\maketitle

\begin{abstract}
We consider the problem of estimating self-exciting generalized linear models from limited binary observations, where the history of the process serves as the covariate. We analyze the performance of two classes of estimators, namely the $\ell_1$-regularized maximum likelihood and greedy estimators, for a canonical self-exciting process and characterize the sampling tradeoffs required for stable recovery in the non-asymptotic regime. Our results extend those of compressed sensing for linear and generalized linear models with i.i.d. covariates to those with highly inter-dependent covariates. We further provide simulation studies as well as application to real spiking data from the mouse's lateral geniculate nucleus and the ferret's retinal ganglion cells which agree with our theoretical predictions. 
\end{abstract}

\begin{IEEEkeywords} compressed sensing, generalized linear models, sparsity, spontaneous activity, neural signal processing. \end{IEEEkeywords}

\IEEEpeerreviewmaketitle

\section{Introduction}

The theory of compressed sensing (CS) has provided a novel framework for measuring and estimating statistical models governed by sparse underlying parameters {\cite{donoho2006compressed, candes2006compressive, candes2006stable, candes2008introduction, needell2009cosamp, bruckstein2009sparse}}. In particular, for linear models with random covariates and sparsity of the parameters, the CS theory provides sharp trade-offs between the number of measurement, sparsity, and estimation accuracy. Typical theoretical guarantees imply that when the number of measurements are roughly proportional to sparsity, then stable recovery of these sparse models is possible.

Beyond those described by linear models, observations from binary phenomena form a large class of data in natural and social sciences. Their ubiquity in disciplines such as neuroscience, physiology, seismology, criminology, and finance has urged researchers to develop formal frameworks to model and analyze these data. In particular, the theory of point processes provides a statistical machinery for modeling and prediction of such phenomena. Traditionally, these models have been employed to predict the likelihood of self-exciting processes such as earthquake occurrences \cite{ogata1988statistical, vere1970stochastic}, but have recently found applications in several other areas. For instance, these models have been used to characterize heart-beat dynamics \cite{barbieri2005point, valenza2013point} and violence among gangs \cite{egesdal}. Self-exciting point process models have also found significant applications in analysis of neuronal data {\cite{brown2001analysis,smith2003estimating, brown2004multiple,  paninski2004maximum, truccolo2005point, paninski2007statistical, pillow2011model}.} 

In particular, point process models provide a principled way to regress binary spiking data with respect to extrinsic stimuli and neural covariates, and thereby forming predictive statistical models for neural spiking activity. Examples include place cells in the hippocampus \cite{brown2001analysis}, spectro-temporally tuned cells in the primary auditory cortex \cite{calabrese2011generalized}, and spontaneous retinal or thalamic neurons spiking under tuned intrinsic frequencies \cite{Borowska,liets2003spontaneous}. Self-exciting point processes have also been utilized in assessing the functional connectivity of neuronal ensembles \cite{kim2011granger,brown_func_conn}. When fitted to neuronal data, these models exhibit three main features: first, the underlying parameters are nearly sparse or compressible  \cite{Brown_pp, brown_func_conn}; second, the covariates are often highly structured and correlated; and third, the input-output relation is highly nonlinear. Therefore, the theoretical guarantees of compressed sensing do not readily translate to prescriptions for point process estimation.

Estimation of these models is typically carried out by Maximum Likelihood (ML) or regularized ML estimation in discrete time, where the process is viewed as a Generalized Linear Model (GLM). In order to adjust the regularization level, empirical methods such as cross-validation are typically employed \cite{brown_func_conn}. In the signal processing and information theory literature, sparse signal recovery under Poisson statistics has been considered in \cite{poi_sp_del} with application to the analysis of ranking data. In \cite{cs_poi}, a similar setting has been studied, with motivation from imaging by photon-counting devices. Finally, in theoretical statistics, high-dimensional $M$-estimators with decomposable regularizers, such as the $\ell_1$-norm, have been studied for GLMs \cite{Negahban}. 

A key underlying assumption in the existing theoretical analysis of estimating GLMs is the independence and identical distribution (i.i.d.) of covariates. This assumption does not hold for self-exciting processes, since the history of the process takes the role of the covariates. Nevertheless, regularized ML estimators show remarkable performance in fitting GLMs to neuronal data with history dependence and highly non-i.i.d. covariates.  In this paper, we close this gap by presenting new results on robust estimation of compressible GLMs, relaxing the common assumptions of i.i.d. covariates and exact sparsity. 

In particular, we will consider a canonical GLM and will analyze two classes of estimators for its underlying parameters: the $\ell_1$-regularized maximum likelihood and greedy estimators. We will present theoretical guarantees that extend those of CS theory and characterize fundamental trade-offs between the number of measurements, model compressibility, and estimation error of GLMs in the non-asymptotic regime. Our results reveal that when the number of measurements scale sub-linearly with the product of the ambient dimension and a generalized measure of sparsity (modulo logarithmic factors), then stable recovery of the underlying models is possible, even though the covariates solely depend on the history of the process. We will further discuss the extensions of these results to more general classes of GLMs. Finally, we will present applications to simulated as well as real data from two classes of neurons exhibiting spontaneous activity, namely the mouse's lateral geniculate nucleus and the ferret's retinal ganglion cells, which agree with our theoretical predictions. Aside from their theoretical significance, our results are particularly important in light of the technological advances in neural prostheses, which require robust neuronal system identification based on compressed data acquisition.

The rest of the paper is organized as follows: In Section \ref{prelim}, we present our notational conventions, preliminaries and problem formulation. In Section \ref{theoretical}, we discuss the estimation procedures and state the main theoretical results. Section \ref{simulations} provides numerical simulations as well as application to real data. In Section \ref{discussions}, we discuss the implications of our results and outline future research directions. Finally, we present the proofs of the main theoretical results and give a brief background on relevant statistical tests in Appendices \ref{appprf}--\ref{appks}.

\section{Preliminaries and Problem Formulation} \label{prelim}
We {first} give a brief introduction to self-exciting GLMs (see \cite{Daley} for a detailed treatment). {We will use the following {notation} throughout the paper. Parameter vectors are {denoted} by bold-face {Greek} letters. For example, $\boldsymbol{\theta}=[\theta_1,\theta_2,\cdots,\theta_p]'$ denotes a $p$-dimensional parameter vector, with $[\cdot]'$ denoting the transpose operator. We also use the notation {$\mathbf{x}_i^j$} to represent the $(j-i+1)$-dimensional vector $[x_i,x_{i+1},\cdots,x_j]'$ for any $i, j \in \mathbb{Z}$ with $i \le j$.} { For a vector $\boldsymbol{\theta}$, we define its decomposition into positive and negative parts given by:
\[
\boldsymbol{\theta} = \boldsymbol{\theta}^+ - \boldsymbol{\theta}^{-},
\]
where $\boldsymbol{\theta}^\pm = \max\{\pm\boldsymbol{\theta}, \mathbf{0}\}$.
It can be shown that
\[
\| \boldsymbol{\theta}^\pm\|_1= \mathbf{1}' \boldsymbol{\theta}^\pm = \frac{\|\boldsymbol{\theta}\|_1 \pm \mathbf{1}'\boldsymbol{\theta}}{2}
\]
are convex in $\boldsymbol{\theta}$.
}

We consider a sequence of observations in the form of binary spike trains obtained by discretizing continuous-time observations (e.g. electrophysiology recordings), using bins of length $\Delta$. We assume that  not more than one event fall into any given bin. In practice, this can always be achieved by choosing $\Delta$ small enough. The binary observation at bin $i$ is denoted by $x_i$. The observation sequence can be modeled as the outcome of conditionally independent Poisson or Bernoulli trials, with a spiking probability given by $\mathbb{P}(x_i = 1) =: \lambda_{i|H_i}$, where $\lambda_{i|H_i}$ is the spiking probability at bin $i$ given the history of the process $H_i$ up to bin $i$.

These models are widely-used in neural data analysis and are motivated by the continuous time point processes with history dependent conditional intensity functions \cite{Daley}. For instance, given the history of a continuous-time point process $H_t$ up to time $t$, a conditional intensity of $\lambda(t | H_t) = \lambda$ corresponds to the {homogeneous Poisson} process.  As another example, a conditional intensity of $\lambda(t | H_t) = \mu + \int_{-\infty}^t \theta(t - \tau) dN(\tau)$ corresponds to a process known as the Hawkes process \cite{Hawkes_orig} with base-line rate $\mu$ and history dependence kernel $\theta(\cdot)$. Under the assumption of the orderliness of a continuous-time point process, a discretized approximation to these processes can be obtained by binning the process by bins of length $\Delta$, and defining the spiking probability by $\lambda_i := \lambda(i \Delta | H_{i\Delta}) \Delta + o(\Delta)$. In this paper, we consider discrete {random processes} characterized by the spiking probability $\lambda_{i|H_i}$, which are either inherently discrete or employed as an approximation to continuous-time point process models.

Throughout the rest of the paper, we drop the dependence of $\lambda_{i|H_i}$ on $H_i$ to simplify notation, denote it by $\lambda_i$ and refer to it {as} spiking probability. Given the sequence of {binary} observed data {$\mathbf{x}_1^n$}, the negative log-likelihood function under the {Bernoulli} statistics can be expressed as:
\begin{equation}
\label{L_def}
\mathfrak{L}(\boldsymbol{\theta}) = -\frac{1}{n} \sum\limits_{i=1}^n \left \{ x_i\log\lambda_{i}+(1-x_i)\log(1-\lambda_{i}) \right \}.
\end{equation}
Another common likelihood model used in the analysis of neuronal spiking data corresponds to Poisson statistics \cite{Brown_pp}, for which the negative log-likelihood takes the following form:
\begin{equation}
\label{poi_ll}
\mathfrak{L}(\boldsymbol{\theta}) :=  -\frac{1}{n} \sum\limits_{i=1}^n \left \{ x_i\log\lambda_{i}-\lambda_{i} \right \}.
\end{equation}
Throughout the paper, we will focus on binary observations governed by Bernoulli statistics, whose negative log-likelihood is given in Eq. (\ref{L_def}). In applications such as electrophysiology in which neuronal spiking activities are recorded at a high sampling rate, the binning size $\Delta$ is very small and the Bernoulli and Poisson statistics coincide.

When the discrete process is viewed as an {approximation} to a continuous-time process, these log-likelihood functions are known as the Jacod log-likelihood approximations \cite{Daley}. We will present our analysis for the negative log-likelihood given by (\ref{L_def}), but our results can be extended to other statistics including (\ref{poi_ll}) (See {the remarks of Section \ref{theoretical}}).

Throughout this paper {${\mathbf{x}_{-p+1}^{n}}$} will be considered as the observed spiking sequence which will be used for estimation purposes. A popular class of models for $\lambda_i$ is given by GLMs. In its general form, a GLM consists of two main components: an observation model and an equation expressing some (possibly nonlinear) function of the observation mean as a \textit{linear} combination of the covariates. In neural systems, the covariates consist of external stimuli as well as the history of the process. Inspired by spontaneous neuronal activity, we consider \textit{fully} self-exciting processes, in which the covariates are only functions of the process history. As for a canonical {GLM} inspired by the Hawkes process, we consider a process for which the spiking probability is a \textit{linear} function of the process history:
\begin{equation}
\label{CIF_models}
\lambda_i := \mu + \boldsymbol{\theta}' {\mathbf{x}_{i-p}^{i-1}},
\end{equation}
where $\mu$ is a positive constant representing the base-line rate, and $\boldsymbol{\theta}=[\theta_1,\theta_2,\cdots,\theta_p]'$ is a parameter vector denoting the history dependence of the process. {We further assume that the process is non-degenerate, i.e., it will not terminate in an infinite sequence of zeros.} We refer to this {GLM, viewed as a random process,} as the \textit{canonical self-exciting process}. Other popular models in the computational neuroscience literature include the log-link model where $\lambda_i = \exp ( \mu + \boldsymbol{\theta}'{\mathbf{x}_{i-p}^{i-1}} )$ and the logistic-link model where $\lambda_i = \frac{\exp(\mu + \boldsymbol{\theta}'{\mathbf{x}_{i-p}^{i-1}})}{1+\exp (\mu + \boldsymbol{\theta}'{\mathbf{x}_{i-p}^{i-1}})}$. The parameter vector $\boldsymbol{\theta}$ can be thought of as the binary equivalent of autoregressive (AR) parameters in linear AR models.

When fitted to neuronal spiking data, the parameter vector $\boldsymbol{\theta}$ exhibits a degree of sparsity \cite{Brown_pp, brown_func_conn}, that is, only certain lags in the history have a significant contribution in determining the statistics of the process. These lags can be thought of as the preferred or intrinsic {delays} in the spontaneous response of a neuron. {To be more precise, for a sparsity level $s < p$, we denote by $\boldsymbol{\theta}_s$ the best $s$-term approximation to $\boldsymbol{\theta}$. We also define
\begin{equation}
\sigma_s(\boldsymbol{\theta}) := \|\boldsymbol{\theta}-\boldsymbol{\theta}_s\|_1,
\end{equation}  
which is a scalar function of $\boldsymbol{\theta}$ and $s$, and} captures the compressibility of the parameter vector $\boldsymbol{\theta}$ in the $\ell_1$ sense. For a fixed $\xi \in (0,1)$, we say that $\boldsymbol{\theta}$ is \emph{$(s,\xi)$-compressible} if $\sigma_s(\boldsymbol{\theta}) = \mathcal{O}(s^{1-\frac{1}{\xi}})$ \cite{needell2009cosamp}. Note that when $\xi = 0$, the parameter vector $\boldsymbol{\theta}$ is exactly $s$-sparse.

Finally, in this paper, we are concerned with the compressed sensing regime where $n \ll p$, i.e., the observed data has a much smaller length than the ambient dimension of the parameter vector. The main estimation problem of this paper is the following: \emph{given observations ${\mathbf{x}_{-p+1}^n}$ from the \textcolor{black}{canonical self-exciting process}, the goal is to estimate the unknown baseline rate $\mu$ and the $p$-dimensional $(s,\xi)$-compressible history dependence parameter vector $\boldsymbol{\theta}$ in a stable fashion (where the estimation error is controlled) when $n \ll p$.}

\section{Theoretical Results}\label{theoretical}
In this section, we consider two estimators for $\boldsymbol{\theta}$, namely, the $\ell_1$-regularized ML estimator and a greedy estimator, and present the main theoretical results of this paper on the estimation error of these estimators. Note that when $\mu$ is not known, the following results can be applied to the augmented parameter vector $[\mu,\boldsymbol{\theta}']'$. We analyze the case of known $\mu$ for simplicity of presentation.

\subsection{$\ell_1$-Regularized ML Estimation}

The natural estimator for the parameter vector is the ML estimator, which is widely used in neuroscience \cite{Brown_pp}, which by virtue of (\ref{L_def}) is given by:
\begin{equation}
\label{ML_est_pp_L}
\widehat{\boldsymbol{\theta}}_{{\sf ML}}=\argmin\limits_{\boldsymbol{\theta}\in \boldsymbol{\Theta} } \mathfrak{L}(\boldsymbol{\theta}),
\end{equation}
where $\boldsymbol{\Theta}$ is the relaxed closed convex feasible region for which $0 \le \lambda_i \le 1$ given by the conditions:
\vspace{-1mm}
\begin{align}\label{eq:star}
\begin{tabular}{l} ${0 < \pi_{\min} \le \mu -\mathbf{1}' \boldsymbol{\theta}^-}$,\\
 ${\mu +\mathbf{1}' \boldsymbol{\theta}^+ \leq \pi_\max < 1/2}$,
 \end{tabular} {\tag{$\star$}}
\end{align}
for some constants $\pi_{\min}$ and $\pi_{\max}$. {This first inequality incurs minimal loss of generality, as $\pi_{\min}$ can be chosen to be arbitrarily small. The restriction of $\pi_{\max} < 1/2$ ensures that the process is fast mixing and has mainly been adopted for technical convenience. This assumption incurs some loss of generality, as it excludes processes for which the maximum spiking probability exceeds $1/2$. However, due to the low spiking probability of typical neuronal activity, this loss is tolerable for the applications of interest in this paper (see Section \ref{simulations})}.

In the regime of interest when $n \ll p$, the ML estimator is ill-posed and is typically regularized with a smooth norm. In order to capture the compressibility of the parameters, we consider  the $\ell_1$-regularized ML estimator:
\begin{equation}
\label{sp_est_pp_L}
\widehat{\boldsymbol{\theta}}_{{\sf sp}}:=\argmin\limits_{\boldsymbol{\theta}\in \boldsymbol{\Theta}} \quad \mathfrak{L}(\boldsymbol{\theta})+ \gamma_n\|\boldsymbol{\theta}\|_1.
\end{equation}
where $\gamma_n > 0$ is a regularization parameter. It is easy to verify that {the objective function and constraints in Eq.} (\ref{sp_est_pp_L}) are convex in $\boldsymbol{\theta}$ and hence $\widehat{\boldsymbol{\theta}}_{\sf sp}$ can be obtained using standard numerical solvers. Note that the solution to (\ref{sp_est_pp_L}) might not be unique. However, we will provide error bounds that hold for all possible solutions of (\ref{sp_est_pp_L}), with high probability.

It is known that ML estimates are asymptotically unbiased under mild conditions, and with $p$ fixed, the solution converges to the true parameter vector as $n \rightarrow \infty$. However, it is not clear how fast the convergence rate is for finite $n$ or when $p$ is not fixed and is allowed to scale with $n$. This makes the analysis of ML estimators, and in general regularized M-estimators, very challenging \cite{Negahban}. Nevertheless, such an analysis has significant practical implications, as it will reveal sufficient conditions on $n$ with respect to $p$ as well as a criterion to choose $\gamma_n$, which result in a stable estimation of $\boldsymbol{\theta}$. Finally, note that we are fixing the ambient dimension $p$ throughout the analysis. In practice, the history dependence is typically negligible beyond a certain lag and hence for a large enough $p$, {GLMs} fit the data very well.

\subsection{Greedy Estimation}
Although there exist fast solvers to convex problems of the type given by Eq. (\ref{sp_est_pp_L}), these algorithms are polynomial time in $n$ and $p$, and may not scale well with high-dimensional data.  This motivates us to consider greedy solutions for the estimation of $\boldsymbol{\theta}$. In particular, we will consider a generalization of the Orthogonal Matching Pursuit (OMP) \cite{zhang_omp,OMP} for general convex cost functions. A flowchart of this algorithm is given in Table \ref{tab:1}, which we denote by the Point Process Orthogonal Matching Pursuit (POMP) algorithm. At each iteration, the component in which the objective function has the largest deviation is chosen and added to the current support. The algorithm proceeds for a total of $s^{\star}$ steps, resulting in an estimate with $s^\star$ components.

The main idea behind the generalized OMP is in the greedy selection stage, where the absolute value of the gradient of the cost function at the current solution is considered as the selection metric. Consider an estimate $\widehat{\boldsymbol{\theta}}^{(k-1)}$ at the $(k-1)$-st stage of the generalized OMP for a quadratic cost function of the form $\| \mathbf{b} - \mathbf{A} \boldsymbol{\theta}\|_2^2$, with $\mathbf{b}$ and $\mathbf{A}$ denoting the observation vector and covariates matrix, respectively. Then, the gradient takes the form $\mathbf{A}' (\mathbf{b} - \mathbf{A} \widehat{\boldsymbol{\theta}}^{(k-1)})$ which is exactly the correlation vector between the residual error and the columns of $\mathbf{A}$ as in the original OMP algorithm. 

\begin{table}
\centering
\framebox{$\begin{array}{l}
\text{Input: } \mathfrak{L}(\boldsymbol{\theta}) , s^\star\\
\text{Output: } \widehat{\boldsymbol{\theta}}_{\sf POMP}^{(s^\star)}\\
\text{Initialization:}\Big\{\begin{array}{l}
\text{Start with the index set } S^{(0)}=\emptyset\\
\text{and the initial estimate }\widehat{\boldsymbol{\theta}}^{(0)}_{{\sf POMP}} = 0
\end{array}\\
\textbf{for } k=1,2,\cdots,s^\star\\
\text{  }\begin{array}{l}
j = \argmax \limits_i \left| \left( \nabla \mathfrak{L} \; \left(\widehat{\boldsymbol{\theta}}_{{\sf POMP}}^{(k-1)}\right) \right)_i\right|\\
S^{(k)}=S^{(k-1)}\cup \{j\}\\
\widehat{\boldsymbol{\theta}}_{{\sf POMP}}^{(k)} = \argmin \limits_{\support ({\boldsymbol{\theta}}) \subset S^{(k)}} \mathfrak{L}(\boldsymbol{\theta})
\end{array}\\
\textbf{end }\\
\end{array}$
}
\caption{\small{Point Process Orthogonal Matching Pursuit (POMP)}}
\label{tab:1}
\end{table}

\subsection{Theoretical Guarantees}

Recall that the parameter vector $\boldsymbol{\theta} \in \mathbb{R}^p$ is assumed to be $(s,\xi)$-compressible, so that $\sigma_s(\boldsymbol{\theta}) = \|\boldsymbol{\theta}-\boldsymbol{\theta}_S\|_1 = \mathcal{O} (s^{1-\frac{1}{\xi}})$, and the observed data are given by the vector ${\mathbf{x}_{-p+1}^n} \in \{0,1\}^{n+p-1}$, all in the regime of $s, n \ll p$. In the remainder of this paper, we assume that $\boldsymbol{\theta} \in \boldsymbol{\Theta}$. The main theoretical result regarding the performance of the $\ell_1$-regularized ML estimator is given by the following theorem:
\begin{thm}
\label{negahban_lambda}
If $\sigma_s(\boldsymbol{\theta}) = \mathcal{O}(\sqrt{s})$, there exist constants $d_1,d_2,d_3$ and $d_4$ such that for $n>d_1s^{2/3}p^{2/3} \log p$ and a choice of $\gamma_n=d_2 \sqrt{\frac{\log p}{n}}$, any solution $\widehat{\boldsymbol{\theta}}_{{\sf sp}}$ to (\ref{sp_est_pp_L}) satisfies the bound
\begin{equation}
\label{thm2}
\left \|\widehat{\boldsymbol{\theta}}_{{\sf sp}}-\boldsymbol{\theta}\right \|_2 \leq d_3 \sqrt{\frac{s \log p}{n}}+ \sqrt{d_3\sigma_s(\boldsymbol{\theta})}\sqrt[4]{\frac{\log p}{n}},
\end{equation}
with probability greater than $1-\mathcal{O}\left(\frac{1}{n^{d_4}}\right)$.
\end{thm}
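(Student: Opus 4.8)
The plan is to place the analysis within the framework of regularized $M$-estimation with a decomposable regularizer \cite{Negahban}, adapted to the martingale structure of the self-exciting process. Writing $\widehat{\Delta} := \widehat{\boldsymbol{\theta}}_{\sf sp}-\boldsymbol{\theta}$, the whole argument reduces to two ingredients: (i) a high-probability bound on the gradient deviation $\|\nabla\mathfrak{L}(\boldsymbol{\theta})\|_\infty$ at the truth, which fixes an admissible $\gamma_n$; and (ii) a \emph{restricted strong convexity} (RSC) inequality $\mathfrak{L}(\boldsymbol{\theta}+\widehat{\Delta})-\mathfrak{L}(\boldsymbol{\theta})-\langle\nabla\mathfrak{L}(\boldsymbol{\theta}),\widehat{\Delta}\rangle\ge \kappa\|\widehat{\Delta}\|_2^2$ holding for every error vector in the cone singled out by the optimality of $\widehat{\boldsymbol{\theta}}_{\sf sp}$. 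Granting both, I would first use the optimality of $\widehat{\boldsymbol{\theta}}_{\sf sp}$ together with decomposability of the $\ell_1$ norm to show that, whenever $\gamma_n\ge 2\|\nabla\mathfrak{L}(\boldsymbol{\theta})\|_\infty$, the error satisfies the cone condition $\|\widehat{\Delta}_{S^c}\|_1\le 3\|\widehat{\Delta}_S\|_1+4\sigma_s(\boldsymbol{\theta})$, where $S$ is the support of the best $s$-term approximation $\boldsymbol{\theta}_s$; feeding this cone membership into the RSC inequality and the basic inequality then produces the two-term estimate, the first term $d_3\sqrt{s\log p/n}$ accounting for the sparse part and $\sqrt{d_3\sigma_s(\boldsymbol{\theta})}\,\sqrt[4]{\log p/n}$ for the compressibility residual.

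For the gradient bound I would compute $\nabla\mathfrak{L}(\boldsymbol{\theta})=-\frac{1}{n}\sum_{i=1}^n \frac{x_i-\lambda_i}{\lambda_i(1-\lambda_i)}\,\mathbf{x}_{i-p}^{i-1}$ and observe that at the true $\boldsymbol{\theta}$ each coordinate is a normalized sum of \emph{bounded martingale differences}: since $\mathbb{E}[x_i\mid H_i]=\lambda_i$ and the entries of $\mathbf{x}_{i-p}^{i-1}$ are $H_i$-measurable, each summand has conditional mean zero, while $\boldsymbol{\theta}\in\boldsymbol{\Theta}$ forces $\lambda_i\in[\pi_{\min},\pi_{\max}]$ and hence keeps $1/(\lambda_i(1-\lambda_i))$ uniformly bounded. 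A Bernstein-type concentration inequality for bounded martingales then gives $\mathbb{P}(|(\nabla\mathfrak{L}(\boldsymbol{\theta}))_j|>t)\le 2\exp(-cnt^2)$ coordinatewise; a union bound over the $p$ coordinates with $t\asymp\sqrt{\log p/n}$ yields $\|\nabla\mathfrak{L}(\boldsymbol{\theta})\|_\infty\le \tfrac{1}{2}d_2\sqrt{\log p/n}$ with probability at least $1-\mathcal{O}(n^{-d_4})$, which simultaneously justifies the prescribed $\gamma_n=d_2\sqrt{\log p/n}$ and supplies the stated confidence level.

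\textbf{The hard part: RSC under dependent covariates.} I expect the decisive difficulty to lie in the RSC inequality, because the covariates $\mathbf{x}_{i-p}^{i-1}$ are far from i.i.d.---consecutive covariate vectors overlap in $p-1$ coordinates and the entire design is generated by the process itself. The second-order remainder of $\mathfrak{L}$ along the segment joining $\boldsymbol{\theta}$ and $\widehat{\boldsymbol{\theta}}_{\sf sp}$ is governed by the weighted empirical autocovariance $\frac{1}{n}\sum_i w_i\,\mathbf{x}_{i-p}^{i-1}(\mathbf{x}_{i-p}^{i-1})'$ with weights $w_i=x_i/\lambda_i^2+(1-x_i)/(1-\lambda_i)^2$ that are bounded below by a positive constant uniformly on $\boldsymbol{\Theta}$; RSC therefore reduces to a restricted eigenvalue lower bound, over the error cone, for the empirical autocovariance $\widehat{\Sigma}:=\frac{1}{n}\sum_i \mathbf{x}_{i-p}^{i-1}(\mathbf{x}_{i-p}^{i-1})'$. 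I would handle this in two stages. First, I would lower bound the restricted minimum eigenvalue of the \emph{population} autocovariance $\Sigma:=\mathbb{E}[\widehat{\Sigma}]$ by a constant $\kappa_0>0$; here the restriction $\pi_{\max}<1/2$ is indispensable, as it renders the process fast mixing through a one-step contraction that is uniform over histories, so that long-range correlations decay and $\Sigma$ is dominated by its diagonal plus a benign rank-one stationary-mean term. Second, I would control the fluctuation $\widehat{\Sigma}-\Sigma$ and transfer the restricted eigenvalue from $\Sigma$ to $\widehat{\Sigma}$.

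The concentration of $\widehat{\Sigma}-\Sigma$ is where the dependence exacts its price and constitutes the main technical obstacle. In place of the i.i.d.\ matrix Bernstein bound I would use a mixing-based concentration---a blocking or coupling decomposition that exploits the uniform geometric ergodicity guaranteed by $\pi_{\max}<1/2$---to bound the relevant fluctuations of $\widehat{\Sigma}$; the loss of independence across the $p$-step memory degrades the achievable deviation rate, and balancing this degraded rate against the curvature $\kappa_0$ after restricting to the cone (on which $\|\widehat{\Delta}\|_1\lesssim\sqrt{s}\,\|\widehat{\Delta}\|_2+\sigma_s(\boldsymbol{\theta})$) is what forces the sub-linear sample requirement $n>d_1 s^{2/3}p^{2/3}\log p$ rather than the $s\log p$ scaling of the i.i.d.\ theory. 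Extracting the precise joint dependence on $s$ and $p$ from this mixing-to-concentration conversion---and verifying that the residual term is absorbed into $\tfrac{1}{2}\kappa_0\|\widehat{\Delta}\|_2^2$---is the delicate calculation; the hypothesis $\sigma_s(\boldsymbol{\theta})=\mathcal{O}(\sqrt{s})$ enters precisely to keep the compressibility contribution from inflating the $\ell_1$ radius of $\widehat{\Delta}$ beyond the $\sqrt{s}\,\|\widehat{\Delta}\|_2$ scale, so that the cone argument and the final two-term bound close.
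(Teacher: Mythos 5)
Your high-level architecture coincides with the paper's: the error bound via Negahban et al.'s RSC framework (Proposition 2 in the paper), the gradient bound via the martingale structure plus a concentration inequality for bounded martingale differences and a union bound (Proposition 3), and RSC in two stages---a population lower bound plus concentration under dependence (Lemma 1). However, two steps in your RSC argument have genuine gaps. First, your stage-1 route---that fast mixing makes $\Sigma$ ``dominated by its diagonal plus a benign rank-one stationary-mean term''---does not work. Under $(\star)$ one has $\|\boldsymbol{\theta}\|_1 \le \pi_{\max}-\pi_{\min} < 1/2$, but already for $\theta_1$ close to $1/2$ (and all other components zero) the autocovariances satisfy $c_k = \theta_1^{k} c_0$, so $\sum_{k\neq 0}|c_k| \approx 2c_0\,\theta_1/(1-\theta_1) > c_0$ and Gershgorin-type diagonal dominance gives a negative lower bound; the rank-one mean term $\pi_\star^2\mathbf{1}\mathbf{1}'$ cannot rescue this since it vanishes on directions orthogonal to $\mathbf{1}$. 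The paper's actual argument is spectral: it derives the power spectral density of the process from Yule--Walker-type relations (Proposition 1) and invokes the Grenander--Szeg\H{o} eigenvalue theorem (Proposition 5) to get $\lambda_{\min}(\mathbf{R}) \ge \inf_\omega S(\omega) \ge \kappa_l = \pi_\star(1-\pi_\star)/\left(2\pi(1+2\pi_{\max})^4\right) > 0$. Note also that this population bound needs no assumption $\pi_{\max}<1/2$; you attribute that assumption partly to stage 1, whereas it is essential only in stage 2.

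Second, you defer precisely the computation that produces the theorem's distinctive requirement $n \gtrsim s^{2/3}p^{2/3}\log p$: you assert a ``blocking or coupling'' bound will be balanced against the curvature, but never extract the rate. In the paper this step is concrete and is where all the work lies: (i) using the cone constraint together with $\sigma_s(\boldsymbol{\theta})=\mathcal{O}(\sqrt{s})$, the supremum deviation $\mathfrak{D}_n$ of the empirical quadratic form over the cone intersected with $\mathbb{S}_2(1)$ is shown to be $\mathcal{O}(sp/n)$-Lipschitz in the normalized Hamming metric; (ii) Kontorovich's concentration inequality for dependent sequences (Proposition 7) applies because the mixing coefficients obey $\eta_{ij} \le (2\pi_{\max})^{n-j+1}$ with $2\pi_{\max}<1$---this is exactly where $\pi_{\max}<1/2$ is indispensable---yielding a tail of the form $\exp\left(-c\,n^3t^2/(s^2p^2)\right)$; and (iii) integrating this tail gives $\mathbb{E}[\mathfrak{D}_n] = \mathcal{O}\left(ps/n^{3/2}\right)$, and requiring this expectation to fall below $\kappa_l/4$ is what forces $n \gtrsim s^{2/3}p^{2/3}\log p$. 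Without carrying out (i)--(iii), or an executed equivalent, your proposal reproduces the i.i.d.-style skeleton of the argument but not the content that distinguishes this theorem from the i.i.d. case.
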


Similarly, the following theorem characterizes the performance bounds for the POMP estimate:

\begin{thm}
\label{thm_OMP}
If $\boldsymbol{\theta}$ is $(s,\xi)$-compressible for some $\xi < 1/2$, there exist constants $d_1',d_2',d_3'$ and $d_4'$ such that for $n>d_1's^{2/3} p^{2/3} \left(\log s \right)^{2/3} \log p$, the POMP estimate satisfies the bound
\begin{equation}
\label{thm2_OMP}
\left \|\widehat{\boldsymbol{\theta}}_{{\sf POMP}}-\boldsymbol{\theta}\right\|_2 \leq d_2' \sqrt{\frac{s \log s \log p}{n}} + d_3' \frac{\log s}{s^{{\frac{1}{\xi}-2}}}
\end{equation}
after $s^\star =\mathcal{O}(s \log s)$ iterations with probability greater than $1-\mathcal{O}\left(\frac{1}{n^{d'_4}}\right)$.
\end{thm}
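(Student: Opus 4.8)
The plan is to adapt the analysis of Orthogonal Matching Pursuit for compressible signals (in the spirit of \cite{zhang_omp}) to the self-exciting GLM, letting the gradient of the negative log-likelihood at the true parameter play the role of the measurement noise and the Hessian of $\mathfrak{L}$ play the role of the Gram matrix. The POMP iteration alternates a greedy selection governed by $\nabla\mathfrak{L}$ with a restricted re-estimation $\widehat{\boldsymbol{\theta}}^{(k)} = \argmin_{\support(\boldsymbol{\theta})\subset S^{(k)}}\mathfrak{L}(\boldsymbol{\theta})$, so the two quantities I must control are the magnitude of the gradient at $\boldsymbol{\theta}$ and the curvature of $\mathfrak{L}$ restricted to small supports. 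Throughout, the central difficulty is that the covariates $\mathbf{x}_{i-p}^{i-1}$ are overlapping, shifted windows of a single binary process and are therefore highly dependent; every concentration estimate must be built on the martingale structure of the process and on the fast mixing guaranteed by the constraint $\pi_{\max}<1/2$ in (\ref{eq:star}). The first two steps below are shared with the analysis of Theorem \ref{negahban_lambda}, while the greedy induction is the piece specific to POMP.

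First I would control the noise term. Since $\lambda_i$ depends only on the history $\mathbf{x}_{i-p}^{i-1}$, the summands of $\nabla\mathfrak{L}(\boldsymbol{\theta})=-\frac{1}{n}\sum_i \frac{x_i-\lambda_i}{\lambda_i(1-\lambda_i)}\,\mathbf{x}_{i-p}^{i-1}$, evaluated at the true parameter, form a martingale difference sequence with respect to the natural filtration, so the gradient has zero mean. Applying a martingale concentration inequality (Azuma--Hoeffding, or Freedman to exploit the small conditional variance coming from $\lambda_i\le\pi_{\max}$) coordinate-wise and taking a union bound over the $p$ coordinates, I would establish that $\|\nabla\mathfrak{L}(\boldsymbol{\theta})\|_\infty \le c_1\sqrt{(\log p)/n}$ with probability at least $1-\mathcal{O}(n^{-d_4'})$. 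This quantity is the floor below which a gradient component can no longer be trusted to indicate a true support coordinate, and it is what seeds the first error term $d_2'\sqrt{s\log s\log p/n}$.

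Second, I would establish restricted strong convexity and smoothness of $\mathfrak{L}$ over supports of size $\mathcal{O}(s\log s)$. The Hessian is $\nabla^2\mathfrak{L}(\boldsymbol{\theta})=\frac{1}{n}\sum_i w_i\,\mathbf{x}_{i-p}^{i-1}(\mathbf{x}_{i-p}^{i-1})'$ with weights $w_i=x_i/\lambda_i^2+(1-x_i)/(1-\lambda_i)^2$ bounded away from $0$ and $\infty$ by (\ref{eq:star}), so the curvature question reduces to showing that the empirical autocovariance of the process, restricted to any index set of bounded size, has eigenvalues uniformly pinned between two positive constants. I would compute the population autocovariance directly from the canonical recursion (\ref{CIF_models}), then bound the deviation of each $k\times k$ sub-block of the empirical matrix from its mean by martingale concentration and absorb the $\binom{p}{k}$ supports with a union bound. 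Given this restricted eigenvalue control together with the noise bound, the greedy induction runs in the usual OMP fashion: at each stage, restricted strong convexity forces the coordinate chosen by $j=\argmax_i|(\nabla\mathfrak{L}(\widehat{\boldsymbol{\theta}}^{(k-1)}))_i|$ to produce a definite decrease in $\mathfrak{L}$ whenever the residual on the true large support exceeds the noise floor; iterating for $s^\star=\mathcal{O}(s\log s)$ steps captures all components above that floor, and the hypothesis $\xi<1/2$ (so that $\sigma_s(\boldsymbol{\theta})=\mathcal{O}(s^{1-1/\xi})$ decays rapidly) makes the uncaptured tail summable and yields the second error term $d_3'(\log s)/s^{1/\xi-2}$.

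The hard part will be the second step: establishing the restricted eigenvalue / RIP-type bounds for the \emph{dependent}, overlapping covariates. Unlike the i.i.d. designs of standard compressed sensing, consecutive covariate vectors here share $p-1$ coordinates, so off-diagonal correlations do not vanish and the effective number of independent samples per coordinate is reduced. The restriction $\pi_{\max}<1/2$ is essential precisely here, since it forces the canonical process to mix geometrically and hence renders the population autocovariance well-conditioned; quantifying this correlation decay and propagating it through the union bound over sparse supports is what I expect to degrade the sample complexity beyond the standard $s\log p$ and produce the non-standard $s^{2/3}p^{2/3}(\log s)^{2/3}\log p$ threshold, with the extra $p^{2/3}$ factor traceable to this curvature step rather than to the gradient bound of the first step.
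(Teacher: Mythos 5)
Your architecture---a martingale gradient bound, restricted strong convexity over supports of size $\mathcal{O}(s\log s)$, then the greedy guarantee of \cite{zhang_omp}---is the same as the paper's, but there is a genuine gap in your first step: you control the gradient at the wrong point. The guarantee you must invoke (the paper's Proposition \ref{prop_omp}, specialized from Theorem 2.1 of \cite{zhang_omp}) bounds $\|\widehat{\boldsymbol{\theta}}^{(s^\star)}_{{\sf POMP}}-\boldsymbol{\theta}_s\|_2$ by $\sqrt{6}\,\epsilon_{s^\star}/\kappa$ with $\epsilon_{s^\star}\le\sqrt{s^\star+s}\,\|\nabla\mathfrak{L}(\boldsymbol{\theta}_s)\|_\infty$; that is, the noise floor is set by the gradient at the $s$-sparse \emph{target} $\boldsymbol{\theta}_s$, not at the true compressible $\boldsymbol{\theta}$. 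At $\boldsymbol{\theta}_s$ the summands are no longer martingale differences: conditioned on the history, $x_i-(\mu+\boldsymbol{\theta}_s'\mathbf{x}_{i-p}^{i-1})$ has conditional mean $(\boldsymbol{\theta}-\boldsymbol{\theta}_s)'\mathbf{x}_{i-p}^{i-1}\neq 0$, so $\nabla\mathfrak{L}(\boldsymbol{\theta}_s)$ carries a deterministic bias of order $c_2\,\sigma_s(\boldsymbol{\theta})$ with $c_2=\frac{1}{\pi_{\min}(1-\pi_\max)}$ (this is also exactly where the assumption $\pi_{\min}>0$ becomes essential, a point your plan never touches) on top of the $\mathcal{O}(\sqrt{\log p/n})$ fluctuation. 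That bias, amplified by $\sqrt{s^\star+s}=\mathcal{O}(\sqrt{s\log s})$ and $1/\kappa$, is precisely what produces the second error term $d_3'\log s/s^{1/\xi-2}$, and it is why the hypothesis $\xi<1/2$ is needed (so that $1/\xi-2>0$ and the term decays in $s$). With only your zero-mean floor $c_1\sqrt{\log p/n}$ the greedy induction cannot be closed, because the mismatch between the data-generating process (driven by $\boldsymbol{\theta}$) and the sparse target re-enters at every iteration through the gradient; your attribution of the second term to ``the uncaptured tail being summable'' does not recover its actual form, since the tail by itself contributes only $\|\boldsymbol{\theta}-\boldsymbol{\theta}_s\|_2\le\sigma_s(\boldsymbol{\theta})=\mathcal{O}(s^{1-1/\xi})$ through the final triangle inequality $\|\widehat{\boldsymbol{\theta}}_{{\sf POMP}}-\boldsymbol{\theta}\|_2\le\|\widehat{\boldsymbol{\theta}}_{{\sf POMP}}-\boldsymbol{\theta}_s\|_2+\|\boldsymbol{\theta}_s-\boldsymbol{\theta}\|_2$.

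There is a secondary problem in your curvature step: both the tool and the union bound are off. The entries of the restricted empirical second-moment matrix, $\frac{1}{n}\sum_i x_{i-j}x_{i-l}$, are not sums of martingale differences (their conditional means do not vanish), so Azuma/Freedman-type arguments do not apply to them; the paper instead treats the deviation as a Lipschitz function of the whole sample path with respect to the normalized Hamming metric and applies concentration for dependent (mixing) sequences (Kontorovich's inequality, Proposition \ref{conc_dep}), the geometric decay of the mixing coefficients $\eta_{ij}\le(2\pi_\max)^{n-j+1}$ being exactly where $\pi_\max<1/2$ is used. Moreover, the paper does not union-bound over $\binom{p}{k}$ supports at all: it concentrates the supremum $\mathfrak{D}_n$ of the quadratic-form deviation over the $\ell_1$-cone in which the error provably lives, whose Hamming--Lipschitz constant is $\mathcal{O}(sp/n)$, and it is this constant (with $s$ replaced by $s^\star+s=\mathcal{O}(s\log s)$ in the POMP case) that yields the stated threshold $n\gtrsim s^{2/3}p^{2/3}(\log s)^{2/3}\log p$. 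A support-by-support union bound, even executed with the correct mixing-based concentration, produces a different $(s,p)$ trade-off in the sample requirement, so as written your plan would at best prove a variant of the theorem rather than the bound under the stated sampling condition.
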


{Full proofs of Theorems \ref{negahban_lambda} and \ref{thm_OMP} are given in Appendix \ref{appprf}.}

\noindent  \textit{\textbf{Remarks.}} An immediate comparison of the sufficient condition $n = \mathcal{O}(s^{2/3} p^{2/3} \log p)$ of Theorem \ref{negahban_lambda} with those of \cite{Negahban} for GLM models with i.i.d. covariates given by $n = \mathcal{O}(s \log p)$ reveals that a loss of order $\mathcal{O}(p^{2/3} s^{-1/3})$ is incurred due to the inter-dependence of the covariates. However, the sample space of $n$ i.i.d. covariates is $np$-dimensional, whereas in our problem the sample space is only $(n+p)$-dimensional. Hence, the aforementioned loss can be viewed as the price of self-averaging of the process accounting for the low-dimensional nature of the covariate sample space. \textcolor{black}{To the best of our knowledge, the dominant loss of $\mathcal{O}(p^{2/3})$ in both theorems does not seem to be significantly improvable, as self-exciting processes are known to converge quite slowly to their ergodic state \cite{reynaud2007some}. {On a related note, the analysis of the sampling requirements of linear AR models reveals a loss of {$\mathcal{O}(p^{1/2})$} in the number of measurements \cite{arpaper}.}}

The sufficient condition of Theorem \ref{thm_OMP} given by $n = \mathcal{O}(s^{2/3} p^{2/3} \left(\log s \right)^{2/3} \log p)$ implies an extra loss of $\left(\log s \right)^{2/3}$ due to the greedy nature of the solution. Theorem \ref{thm_OMP} also requires a high compressibility level of the parameter vector $\boldsymbol{\theta}$  ($\xi < 1/2$), whereas Theorem \ref{negahban_lambda} does not impose any extra restrictions on $\xi \in (0,1)$. Intuitively speaking, this comparison reveals the trade-off between computational complexity and compressibility requirements for convex optimization vs. greedy techniques, which is well-known for linear models \cite{bruckstein2009sparse}.

The constants $d_i, d'_i$, $i=1,\cdots,4$, $\alpha$ and $\beta$ are explicitly given in the proof of the theorems in Appendix \ref{appprf}. As for a typical numerical example, for $\pi_\min = 0.01$ and $\pi_\max=0.49$, the constants of Theorem \ref{negahban_lambda} can be chosen as $d_1 \approx 10^3, d_2 = 50, d_3 \approx 10^4$ and $d_4 = 4$. We will next give a sketch of the proof of these theorems.

\noindent \textit{\textbf{Proof Sketches of Theorems 1 and 2.}} The main ingredient in the proofs of Theorems \ref{negahban_lambda} and \ref{thm_OMP} is inspired by the beautiful treatment of Negahban et al. in \cite{Negahban} in establishing the notion of Restricted Strong Convexity (RSC). By the convexity of the negative Jacod log-likelihood given by Eq. (\ref{L_def}), it is clear that a small change in $\boldsymbol{\theta}$ results in a small change in the negative Jacod log-likelihood. However, the converse is not necessarily true. Intuitively speaking, the RSC condition guarantees that the converse holds: a small change in the log-likelihood implies a small change in the parameter vector, i.e., the log-likelihood is not too \textit{flat} around the true parameter vector. A depiction of the RSC condition for $p=2$, adopted from \cite{Negahban}, is given in Figure \ref{rscfig}. In Figure \ref{rscfig}(a), the RSC does not hold since a change along $\theta_2$ does not change the log-likelihood, whereas the log-likelihood in Figure \ref{rscfig}(b) satisfies the RSC.

\begin{figure}[h]
\centering
\includegraphics[width=.95\columnwidth]{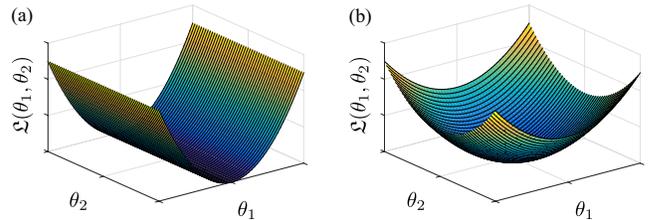}
\caption{\small{Illustration of RSC (a) RSC does not hold (b) RSC does hold.}}\label{rscfig}
\vspace{-2mm}
\end{figure}
More formally, if the log-likelihood is twice differentiable at $\boldsymbol{\theta}$, the RSC is equivalent to existence of a lower quadratic bound on the negative log-likelihood:
\begin{equation}
\label{RSC}
\mathfrak{D_L}({\boldsymbol{\psi}},\boldsymbol{\theta}):= \mathfrak{L}(\boldsymbol{\theta}+{\boldsymbol{\psi}})-  \mathfrak{L}(\boldsymbol{\theta})-{\boldsymbol{\psi}}'\nabla\mathfrak{L}(\boldsymbol{\theta})\geq \kappa\|{\boldsymbol{\psi}}\|_2^2,
\end{equation}

for a positive constant $\kappa > 0$ and all { ${\boldsymbol{\psi}}\in\mathbb{R}^p$} in a carefully-chosen neighborhood of $\boldsymbol{\theta}$ depending on $s$ and $\xi$. Based on the results of \cite{Negahban} and \cite{zhang_omp}, when the RSC is satisfied, sufficient conditions akin to those in Theorems \ref{negahban_lambda} and \ref{thm_OMP} can be obtained by estimating the {Euclidean} extent of the solution set around the true parameter vector (see Propositions \ref{negahban_thm} and \ref{prop_omp} in Appendix \ref{appprf}).

The major technical challenge for the \textcolor{black}{canonical self-exciting process}, as opposed to the GLM models with i.i.d. covariates in \cite{Negahban}, lies in the fact that the covariates are highly inter-dependent as they are formed by the history of the process. Hence, it is not straightforward to establish RSC with high probability, as the large deviation techniques used for i.i.d. random vectors {do} not hold. We establish the RSC for the \textcolor{black}{canonical self-exciting process} in two steps (see Lemma \ref{lemma1} in Appendix \ref{appprf}). First, we show that RSC holds for the expected value of the negative log-likelihood $\mathbb{E} [ \mathfrak{L}(\boldsymbol{\theta})]$, and then by invoking results on concentration of dependent random variables show that the negative log-likelihood $\mathfrak{L}(\boldsymbol{\theta})$ resides in a sufficiently small neighborhood of $\mathbb{E} [ \mathfrak{L}(\boldsymbol{\theta})]$ with high probability, and hence satisfies the RSC.

The remainder of the proof of Theorem \ref{negahban_lambda} establishes that upon satisfying the RSC, the estimation error can be suitably bounded (Proposition \ref{negahban_thm}, Appendix \ref{appprf}). Similarly, Theorem 2 is proven using the RSC together with the results adopted from \cite{zhang_omp} on the performance of OMP for convex cost functions (Proposition \ref{prop_omp}, Appendix \ref{appprf}).

\noindent \textit{\textbf{Extensions.}} For simplicity and clarity of presentation, we have opted to present the proofs for the case of known $\mu$ and for the {canonical self-exciting process}. The following corollary extends our results to the case of unknown $\mu$:

\begin{corollary}\label{cor:1} The claims of Theorems \ref{negahban_lambda} and \ref{thm_OMP} hold when $\mu$ is not known, except for possibly slightly different constants.
\end{corollary}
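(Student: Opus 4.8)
The plan is to reduce the unknown-$\mu$ case to the known-$\mu$ case by treating $\mu$ as an additional coordinate of the parameter vector. Define the augmented parameter $\widetilde{\boldsymbol{\theta}} := [\mu,\boldsymbol{\theta}']' \in \mathbb{R}^{p+1}$ and the augmented covariate $\widetilde{\mathbf{x}}_i := [1,(\mathbf{x}_{i-p}^{i-1})']'$, so that the spiking probability (\ref{CIF_models}) reads $\lambda_i = \widetilde{\boldsymbol{\theta}}'\widetilde{\mathbf{x}}_i$ and the negative log-likelihood (\ref{L_def}) retains exactly the same functional form in the augmented variables. The feasible set (\ref{eq:star}) is unchanged, and the estimators (\ref{sp_est_pp_L}) and the POMP procedure of Table \ref{tab:1} are applied to $\widetilde{\boldsymbol{\theta}}$, with the important caveat that the $\ell_1$ penalty continues to act on the $\boldsymbol{\theta}$-block only and leaves the baseline coordinate $\mu$ unpenalized.

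Most of the machinery then carries over verbatim. Since only a single scalar is appended, the compressibility level changes from $s$ to $s+1$ and the ambient dimension from $p$ to $p+1$; neither alters the order of the sample-complexity thresholds $n = \mathcal{O}(s^{2/3}p^{2/3}\log p)$ (and its logarithmic refinement in Theorem \ref{thm_OMP}) nor the error rates, so the differences are absorbed into the constants $d_i,d_i'$. Leaving $\mu$ unpenalized fits the decomposable-regularizer framework of \cite{Negahban}: one simply enlarges the model subspace by the baseline coordinate, so that its dimension becomes $s+1$, after which the compatibility constant relating $\|\cdot\|_1$ to $\|\cdot\|_2$ on this subspace and the choice $\gamma_n = d_2\sqrt{\log p/n}$ are unaffected up to constants (the $\mu$-component of $\nabla\mathfrak{L}(\widetilde{\boldsymbol{\theta}})$ is bounded and concentrates just as the other components do). Finally, because $|\widehat{\mu}-\mu|$ and $\|\widehat{\boldsymbol{\theta}}-\boldsymbol{\theta}\|_2$ are coordinate projections of $\|\widehat{\widetilde{\boldsymbol{\theta}}}-\widetilde{\boldsymbol{\theta}}\|_2$, the bounds (\ref{thm2}) and (\ref{thm2_OMP}) applied to $\widetilde{\boldsymbol{\theta}}$ immediately yield the stated bounds for $\boldsymbol{\theta}$.

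The one step that genuinely requires re-examination, and the expected main obstacle, is re-establishing the Restricted Strong Convexity condition (the analog of Lemma \ref{lemma1}) for the augmented Hessian. The difficulty is that the appended covariate is the deterministic constant $1$, which is perfectly correlated with itself across all bins; I must therefore rule out a degenerate direction in $\mathbb{R}^{p+1}$ that mixes $\mu$ with the $\boldsymbol{\theta}$-block and along which the expected log-likelihood is flat. My plan is to lower-bound the smallest eigenvalue of the restricted expected augmented Hessian by using the non-degeneracy of the process together with the strict two-sided bounds $0 < \pi_{\min} \le \lambda_i$ and $\lambda_i \le \pi_{\max} < 1/2$ from (\ref{eq:star}): these guarantee that a pure perturbation of $\mu$ shifts every $\lambda_i$ and hence strictly increases $\mathfrak{L}$, while the genuine randomness of the $x_i$ keeps the $\mu$--$\boldsymbol{\theta}$ cross-block from collapsing the quadratic form. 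Once RSC holds for $\mathbb{E}[\mathfrak{L}(\boldsymbol{\theta})]$, the second (concentration) step of Lemma \ref{lemma1} transfers unchanged, since the new coordinate is bounded by $1$ and $p+1 = \mathcal{O}(p)$, $\log(p+1) = \mathcal{O}(\log p)$ leave every deviation bound intact up to constants. Invoking Propositions \ref{negahban_thm} and \ref{prop_omp} on the augmented problem then completes the argument.
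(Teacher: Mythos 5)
Your reduction is exactly the one the paper uses: its entire proof of Corollary \ref{cor:1} is to replace $\boldsymbol{\theta}$ by $[\mu,\boldsymbol{\theta}']'$, append a leading $1$ to the covariates, and assert that ``all the proof steps can be repeated in the same fashion.'' Where you go further is in flagging the one step that does \emph{not} repeat verbatim, namely the augmented analogue of Lemma \ref{lemma1}; that is indeed the right place to worry, and the paper is silent about it.

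However, your plan for closing that step rests on a claim that is false in the uniform sense you need. You argue that ``the genuine randomness of the $x_i$ keeps the $\mu$--$\boldsymbol{\theta}$ cross-block from collapsing the quadratic form,'' but consider the mixed direction $a=1$, $\boldsymbol{\psi}=-\frac{1}{\pi_\star p}\mathbf{1}$. Writing $\mathbf{C}$ for the covariance matrix of the process, the augmented expected quadratic form decomposes as
\begin{equation*}
\mathbb{E}\left[\left(a+\boldsymbol{\psi}'\mathbf{x}_{\cdot-p}^{\cdot-1}\right)^2\right]
=\left(a+\pi_\star\mathbf{1}'\boldsymbol{\psi}\right)^2+\boldsymbol{\psi}'\mathbf{C}\boldsymbol{\psi}
=0+\boldsymbol{\psi}'\mathbf{C}\boldsymbol{\psi}
\leq \lambda_{\max}(\mathbf{C})\,\frac{1}{\pi_\star^2 p}=\mathcal{O}(1/p),
\end{equation*}
since $\lambda_{\max}(\mathbf{C})$ is bounded independently of $p$ (Proposition \ref{eig_conv} together with $\|\boldsymbol{\theta}\|_1\le\pi_\max-\pi_\min<1/2$, which keeps the continuous part of the spectrum bounded above), while $\|[a,\boldsymbol{\psi}']'\|_2\geq 1$. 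So the augmented analogue of Step 1 of Lemma \ref{lemma1} --- a $p$-independent minimum-eigenvalue bound valid over \emph{all} directions, which is what Propositions \ref{prop:hawkes_properties} and \ref{eig_conv} deliver in the known-$\mu$ case --- is simply false, and no appeal to non-degeneracy or to the two-sided bounds (\ref{eq:star}) can rescue it, because both hold along this direction. The only repair is to invoke the cone constraint (\ref{eq:cone}) already in Step 1 (in the original proof it is needed only in the concentration step): on the cone, $\|\boldsymbol{\psi}\|_1=\mathcal{O}(\sqrt{s}+\sigma_s(\boldsymbol{\theta}))$, so if $\pi_\star\bigl(\sqrt{s}+\sigma_s(\boldsymbol{\theta})\bigr)$ is small enough the cross term $\pi_\star\mathbf{1}'\boldsymbol{\psi}$ cannot cancel $a$ and a constant $\kappa$ survives; but when that quantity is not small, near-cancelling directions lie \emph{inside} the cone and the restricted eigenvalue provably degrades with $s$ (or even with $p$), which is more than ``slightly different constants.'' In fairness, the paper's one-sentence proof glosses over exactly the same issue; your write-up has the merit of locating it, but the argument you sketch would not close it without an additional smallness condition coupling $\pi_\star$, $s$, and $\sigma_s(\boldsymbol{\theta})$.
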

\begin{proof}
The proof is given in Appendix \ref{appprf}.
\end{proof}

The {canonical self-exciting process} can be generalized to a larger class of {GLMs} by generalization of its spiking probability function. In a more general form we can consider a spiking probability function given by
\begin{equation*}
\label{nhp}
\lambda_i = \phi\left(\mu+\boldsymbol{\theta}'{\mathbf{x}_{i-p}^{i-1}}\right),
\end{equation*}
where $\phi(\cdot)$ is a possibly nonlinear function for which $0 < \lambda_i < 1$. In their continuous form, such processes are referred to as the \textit{nonlinear Hawkes process} \cite{nlhp}. Two of the commonly-used models in neural data analysis are the log-link and logistic-link models. Our prior numerical studies in \cite{kazemipour} revealed a similar performance improvement of  the $\ell_1$-regularized ML and the greedy solution over the ML estimate for the log-link model. Stationarity of these discrete processes can be proved similar to the canonical self-exciting process (see Appendix \ref{app:hawkes_psd}). The latter fact is key to extending our proofs to other models and is summarized by the following corollary:

\begin{corollary}\label{cor:2} Theorems \ref{negahban_lambda} and \ref{thm_OMP} hold when the spiking probability is given by $\lambda_i = \phi\left(\mu+\boldsymbol{\theta}'{\mathbf{x}_{i-p}^{i-1}}\right)$ for some continuous, bounded, convex and twice-differentiable function $\phi(\cdot)$ (e.g., $\phi(x) = \exp(x)$ or $\phi(x) = {\sf logit}^{-1}(x)$)  for which $0 < \lambda_i < 1/2$, except for  different constants.
\end{corollary}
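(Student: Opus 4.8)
The plan is to observe that the entire argument behind Theorems \ref{negahban_lambda} and \ref{thm_OMP} interacts with the model \emph{only} through two ingredients: the Restricted Strong Convexity (RSC) bound of Eq. (\ref{RSC}), established in Lemma \ref{lemma1}, and a high-probability bound on $\|\nabla \mathfrak{L}(\boldsymbol{\theta})\|_\infty$ that is used to calibrate $\gamma_n$. Once these are in force, Propositions \ref{negahban_thm} and \ref{prop_omp} deliver the error bounds verbatim. Hence it suffices to re-establish both ingredients for the nonlinear link $\lambda_i = \phi(u_i)$ with $u_i := \mu + \boldsymbol{\theta}'\mathbf{x}_{i-p}^{i-1}$, and to track that only the constants $d_i, d_i'$ change. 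Stationarity and mixing of the nonlinear process, which the concentration step requires, are supplied by Appendix \ref{app:hawkes_psd}.

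First I would compute the Hessian of the negative log-likelihood (\ref{L_def}) under the general link. A direct differentiation via the chain rule yields
\[
\nabla^2 \mathfrak{L}(\boldsymbol{\theta}) = \frac{1}{n}\sum_{i=1}^n w_i(x_i)\,\mathbf{x}_{i-p}^{i-1}\big(\mathbf{x}_{i-p}^{i-1}\big)', \qquad w_i(x_i) = \ell_i''(u_i),
\]
where $\ell_i$ is the per-bin negative log-likelihood and $w_i(x_i)$ is a scalar weight built from $\phi,\phi',\phi''$ evaluated at $u_i$. Taking the conditional expectation and using $\mathbb{E}[x_i\mid H_i]=\phi(u_i)$, the terms involving $\phi''$ cancel, leaving the Fisher information weight
\[
\mathbb{E}\big[w_i(x_i)\mid H_i\big]=\frac{\big(\phi'(u_i)\big)^2}{\phi(u_i)\big(1-\phi(u_i)\big)}=:I_\phi(u_i)\ \ge 0 .
\]
Thus the expected Hessian equals $\tfrac{1}{n}\sum_i \mathbb{E}\big[I_\phi(u_i)\,\mathbf{x}_{i-p}^{i-1}(\mathbf{x}_{i-p}^{i-1})'\big]$, which has exactly the structure of the canonical case (where $\phi(x)=x$ gives $I_\phi=1/(\lambda_i(1-\lambda_i))$) with the weight $I_\phi(u_i)$ inserted in place of a constant.

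The decisive step is to pin $I_\phi$ between two positive constants. The feasibility region constrains $\lambda_i=\phi(u_i)\in[\pi_{\min},\pi_{\max}]$ with $\pi_{\min}>0$ and $\pi_{\max}<1/2$; since $\phi$ is continuous and (for the links of interest) strictly monotone, this confines $u_i$ to a compact interval on which $\phi,\phi',\phi''$ are bounded and $\phi'$ is bounded away from zero. Consequently there exist $0<c\le C<\infty$ with $c\le I_\phi(u_i)\le C$. Because the canonical Fisher weight $1/(\lambda_i(1-\lambda_i))$ is likewise sandwiched between positive constants on $[\pi_{\min},\pi_{\max}]$, the two expected Hessians are equivalent up to absolute constants, so the restricted-eigenvalue lower bound obtained for the expected log-likelihood in Lemma \ref{lemma1} transfers directly, only rescaled. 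The upper bound $C$, together with the boundedness of the entries of $w_i(x_i)$ and of $\nabla\mathfrak{L}$ (all products of bounded weights with binary covariates), lets the concentration-of-dependent-variables step of Lemma \ref{lemma1} proceed unchanged, so that $\mathfrak{L}$ again lies in a small neighborhood of $\mathbb{E}[\mathfrak{L}]$ with high probability and inherits RSC with a constant $\kappa$ rescaled by $c$; the same boundedness fixes the scale of $\gamma_n$.

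I expect the main obstacle to be precisely this uniform control of $I_\phi$: unlike the linear link, for $\phi(x)=\exp(x)$ the weight $I_\phi(u)=\lambda/(1-\lambda)$ degenerates to $0$ as $\lambda\to0$, so the lower bound $c>0$ is a genuine consequence of the constraint $\pi_{\min}\le\lambda_i$ (and of $\phi'$ not vanishing), not an automatic property of the link. Securing a compact range for $u_i$—which implicitly requires the feasible set for the nonlinear process to confine the linear predictor—and checking that $\phi$ and $1-\phi$ are log-concave, so that each $\ell_i''(u_i)\ge 0$ and the objective in Eq. (\ref{sp_est_pp_L}) is convex, are the only points needing care; both hold for $\phi(x)=\exp(x)$ and $\phi(x)={\sf logit}^{-1}(x)$. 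With $c$ and $C$ in hand, the remaining estimates are identical to those of Theorems \ref{negahban_lambda} and \ref{thm_OMP} up to a relabeling of the constants, which is exactly the claim of the corollary.
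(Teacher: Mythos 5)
Your overall strategy coincides with the paper's own (terse) proof of this corollary: establish stationarity/mixing of the nonlinear process, generalize Lemma \ref{lemma1}, and then invoke Propositions \ref{negahban_thm} and \ref{prop_omp} verbatim; your identification of the only two model-dependent ingredients (the RSC bound and the calibration of $\gamma_n$ via $\|\nabla\mathfrak{L}(\boldsymbol{\theta})\|_\infty$) is exactly right. However, one step fails as literally stated. The RSC property (\ref{RSC}) is a lower bound on the Bregman divergence $\mathfrak{D_L}(\boldsymbol{\psi},\boldsymbol{\theta})$, which by Taylor's theorem equals $\tfrac{1}{2}\boldsymbol{\psi}'\nabla^2\mathfrak{L}(\boldsymbol{\theta}+\nu\boldsymbol{\psi})\,\boldsymbol{\psi}$ for some intermediate $\nu\in[0,1]$ --- this is precisely how the paper's proof of Lemma \ref{lemma1} proceeds. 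Your Fisher-information cancellation is valid only when the Hessian is evaluated at the true parameter: the identity $\mathbb{E}[\ell_i''(\tilde u_i)\mid H_i]=I_\phi(\tilde u_i)$ requires $\mathbb{E}[x_i\mid H_i]=\phi(\tilde u_i)$, which holds for $\tilde u_i=u_i$ but not at the intermediate points $\tilde u_i=\mu+(\boldsymbol{\theta}+\nu\boldsymbol{\psi})'\mathbf{x}_{i-p}^{i-1}$ that the Taylor remainder actually involves; there the $\phi''$ terms do not cancel. The issue is not cosmetic: for $\phi=\exp$ the spike term of the weight, $(-\log\phi)''$, is identically zero, so the $x_i=1$ summands contribute nothing pointwise, and a lower bound on the expected Hessian at $\boldsymbol{\theta}$ alone does not control the Hessian along the segment, which is what RSC demands.

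The repair uses ingredients you already list, assembled differently. By convexity of $\phi$ and the log-concavity of $\phi$ and $1-\phi$ that you note, the pointwise weight satisfies $\ell_i''(\tilde u)\ge (1-x_i)\,\frac{(\phi'(\tilde u))^2}{(1-\phi(\tilde u))^2}\ge (1-x_i)\,c'$ uniformly over the feasible segment, with $c'>0$ because $\phi'$ is bounded away from zero on the compact range of the linear predictor (the segment stays in the feasible set by convexity of $\boldsymbol{\Theta}$). Hence $\mathfrak{D_L}(\boldsymbol{\psi},\boldsymbol{\theta})\ge \frac{c'}{n}\sum_{i=1}^n (1-x_i)\left(\boldsymbol{\psi}'\mathbf{x}_{i-p}^{i-1}\right)^2$, and Steps 1--2 of the paper's proof of Lemma \ref{lemma1} go through with $\left(\boldsymbol{\psi}'\mathbf{x}_{i-p}^{i-1}\right)^2$ replaced by $(1-x_i)\left(\boldsymbol{\psi}'\mathbf{x}_{i-p}^{i-1}\right)^2$: conditioning on $H_i$ gives $\mathbb{E}\left[(1-x_i)\left(\boldsymbol{\psi}'\mathbf{x}_{i-p}^{i-1}\right)^2\right]\ge (1-\pi_\max)\,\boldsymbol{\psi}'\mathbf{R}\boldsymbol{\psi}$, and the concentration step (Propositions \ref{prop:hawkes_concentration} and \ref{conc_dep}) applies unchanged since the modified summands remain bounded and $\mathcal{O}(sp/n)$-Lipschitz in the Hamming metric. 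With this substitution your argument, and the corollary, go through with rescaled constants. One further caveat, which the paper's own proof also elides: the lower bound $\inf_\omega S(\omega)\ge\kappa_l$ in Proposition \ref{prop:hawkes_properties} is derived from the Yule--Walker structure specific to the linear link, so the covariance lower bound $\boldsymbol{\psi}'\mathbf{R}\boldsymbol{\psi}\ge\kappa_l$ for the nonlinear process requires its own justification rather than a direct transfer.
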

\begin{proof}
The proof is given in Appendix \ref{appprf}.
\end{proof}

\section{Application to Simulated and Real Data}\label{simulations}

In this section, we study the performance of the conventional ML estimator, the $\ell_1$-regularized ML estimator, and the POMP estimator on simulated data as well as real spiking data recorded from the mouse's lateral geniculate nucleus (LGN) neurons and the ferret's retinal ganglion cells (RGC). {We have archived a MATLAB implementation of the estimators used in this paper using the CVX package \cite{cvx} on the open source repository GitHub and made it publicly available \cite{hawkes_code}.}

\vspace{-3mm}
\subsection{Simulation Studies}

In order to simulate spiking data governed by the canonical self-exciting process, we sequentially generate spikes using (\ref{CIF_models}).   We have used $\mu = 0.1$, $\pi_\min = 0.01$, $\pi_\max = 0.49$, $p=1000$, $s=3$ and $n = 950$ for simulation purposes. Figure \ref{hawkes_fig} shows $500$ samples of the {canonical self-exciting process} generated using a history dependence parameter vector shown in Figure \ref{estimate_plot_synthetic}(a). The parameter vector $\boldsymbol{\theta}$ is compressible with a sparsity level of $s = 3$ and $\sigma_3(\boldsymbol{\theta}) = 0.05$. A value of $\gamma_n = 0.1$ is used to obtain the $\ell_1$-regularized ML estimate, which is slightly tuned around the theoretical estimate given by Theorem \ref{negahban_lambda}. Figures \ref{estimate_plot_synthetic}(b), \ref{estimate_plot_synthetic}(c), and \ref{estimate_plot_synthetic}(d) show the estimated history dependence parameter vectors using ML, $\ell_1$-regularized ML, and POMP, respectively. It can be readily visually observed that regularized ML and POMP significantly outperform the ML estimate in finding the correct values of $\boldsymbol{\theta}$. More specifically, the components at lags $405$ and $800$ (indicated by the gray arrows) are underestimated by the ML estimator, and their contribution is distributed among several falsely identified smaller lag components.

\begin{figure}[h]
\vspace{-2mm}
\centering
\includegraphics[width=.95\columnwidth]{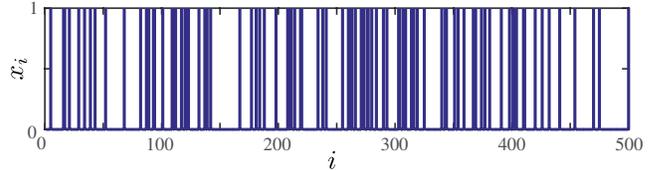}
\caption{{\small{A sample of the simulated {canonical self-exciting process}.}}}\label{hawkes_fig}
\vspace{-3mm}
\end{figure}

\begin{figure}[h]
\begin{center}
\includegraphics[width=.9\columnwidth]{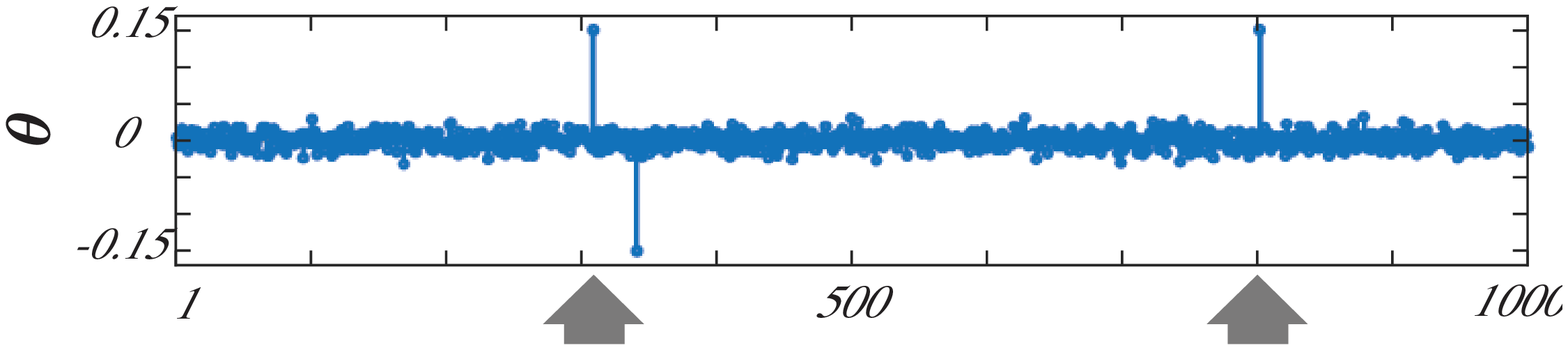}
\subcaption*{(a) True}
\vspace{3mm}
\includegraphics[width=.9\columnwidth]{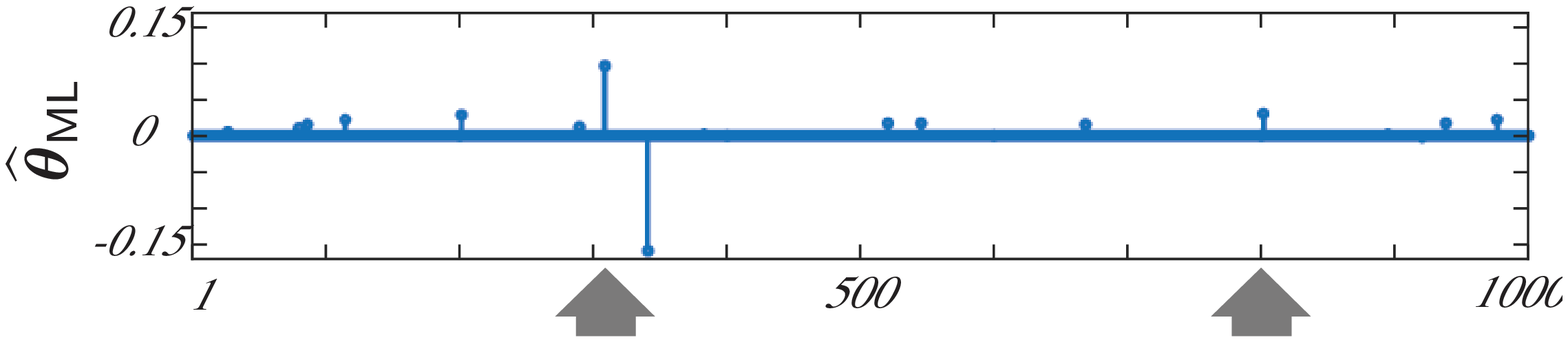}
\subcaption*{(b) ML}
\vspace{3mm}
\includegraphics[width=.9\columnwidth]{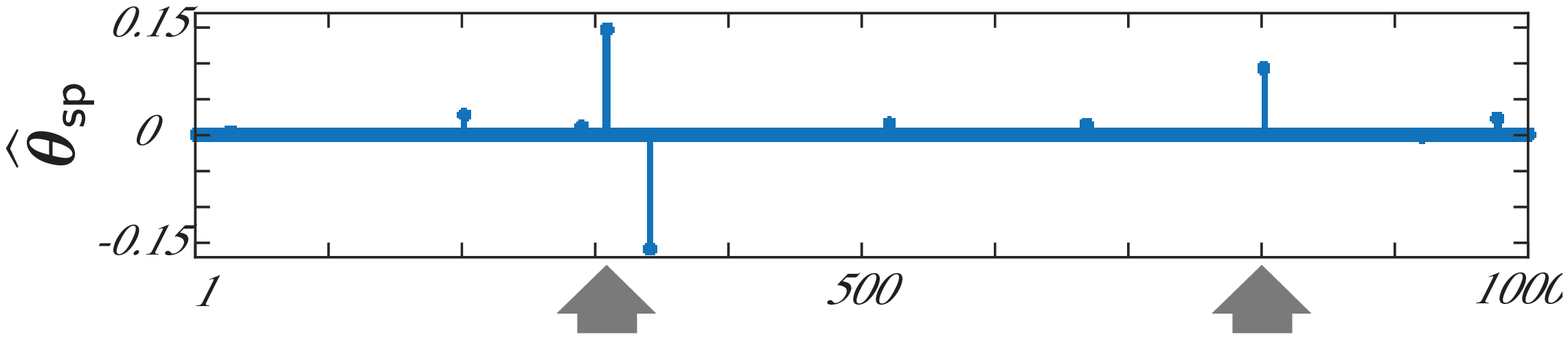}
\subcaption*{(c) $\ell_1$-regularized ML}
\vspace{3mm}
\includegraphics[width=.9\columnwidth]{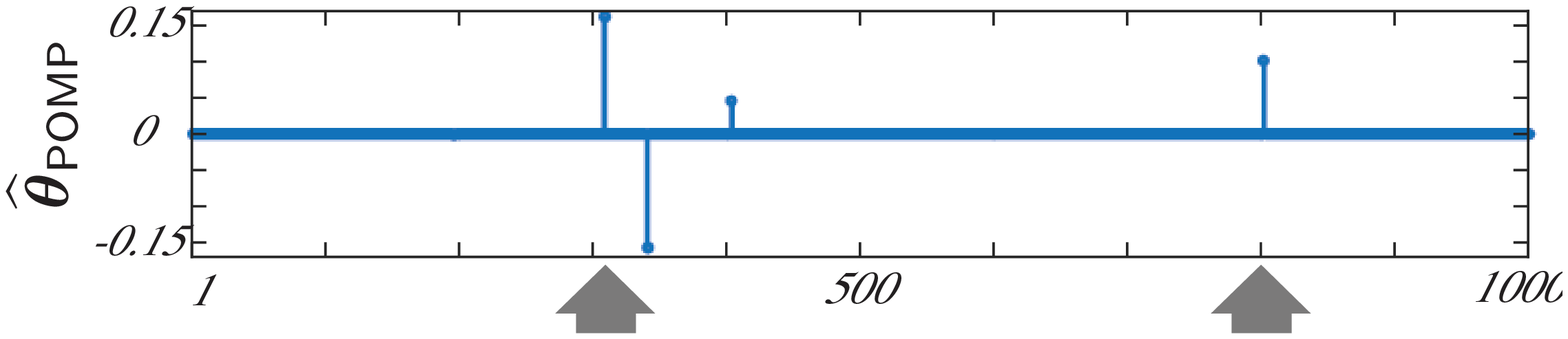}
\subcaption*{(d) POMP}
\end{center}
\vspace{-3mm}
\caption{{\small{(a) True parameters vs. (b) ML, (c) $\ell_1$-regularized ML, and (d) POMP estimates.}}}\label{estimate_plot_synthetic}
\vspace{-2mm}
\end{figure}

\begin{figure}[h]
\centering
\includegraphics[width=.82\columnwidth]{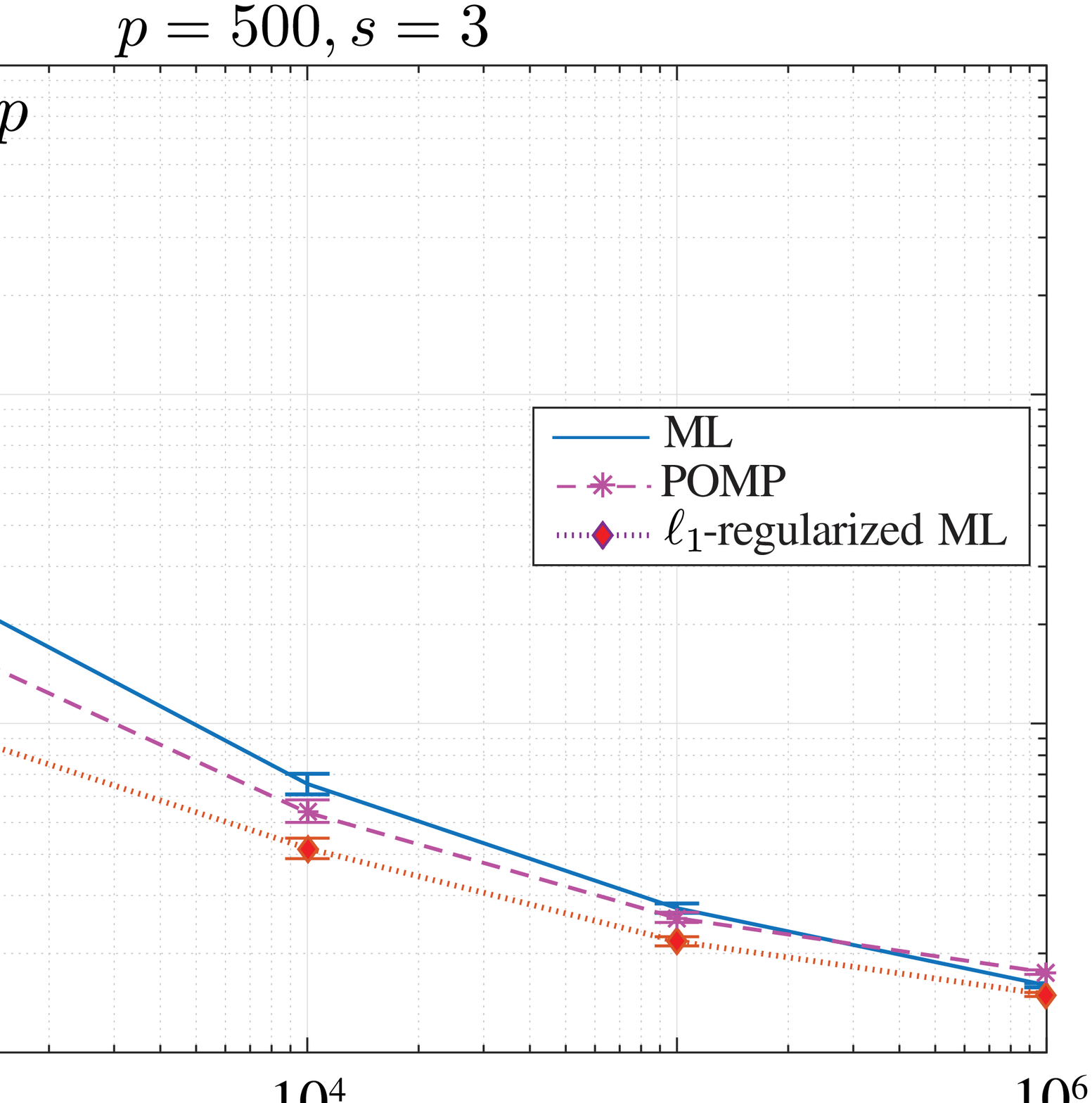}
\caption{{\small{MSE performance of the ML, $\ell_1$-regularized ML and POMP estimators.}}}\label{MSE_high_dim}
\vspace{-5mm}
\end{figure}

In order to quantify this performance gain, we repeated the same experiment {by generating  realizations corresponding to randomly chosen supports of size $s=3$ for $\boldsymbol{\theta}$ and spike trains of length $10^2 \le n \le 10^6$. In each case, the magnitudes of the components of $\boldsymbol{\theta}$ were chosen to satisfy the assumptions (\ref{eq:star}). For a given $\boldsymbol{\theta}$, the mean-square-error (MSE) of the estimate $\widehat{\boldsymbol{\theta}}$ is defined as $\widehat{\mathbb{E}}\{ \| \widehat{\boldsymbol{\theta}} - \boldsymbol{\theta} \|_2^2 \}$, where $\widehat{\mathbb{E}} \{ \cdot \}$ is the sample average over the realizations of the process.} Figure \ref{MSE_high_dim} shows the results of this simulation, where a similar systematic performance gain is observed. The left segment of the plot (shaded in yellow) and the right segment correspond to the compressive ($n < p$) and denoising ($n > p$) regimes, respectively. Error bars on the plot indicate 90\% quantiles of the MSE for this simulation obtained by multiple realizations. As it can be inferred from Figure \ref{MSE_high_dim}, the $\ell_1$-regularized ML and POMP have a systematic performance gain over the ML estimate {in the compressive regime, where $n \ll p$}, with the former outperforming the rest. In the denoising regime, the performance of the $\ell_1$-regularized and ML become closer, while the POMP saturates to a higher MSE floor. The latter observation can be explained by the fact that the POMP can only estimate $s^{\star}$ components (including those of $\boldsymbol{\theta}_s$), and fails to capture the $(p - s^{\star})$ compressible components. This results in an MSE floor above that obtained by ML, for large values of $n$.

\begin{figure}[h]
\begin{center}
\includegraphics[width=.9\columnwidth]{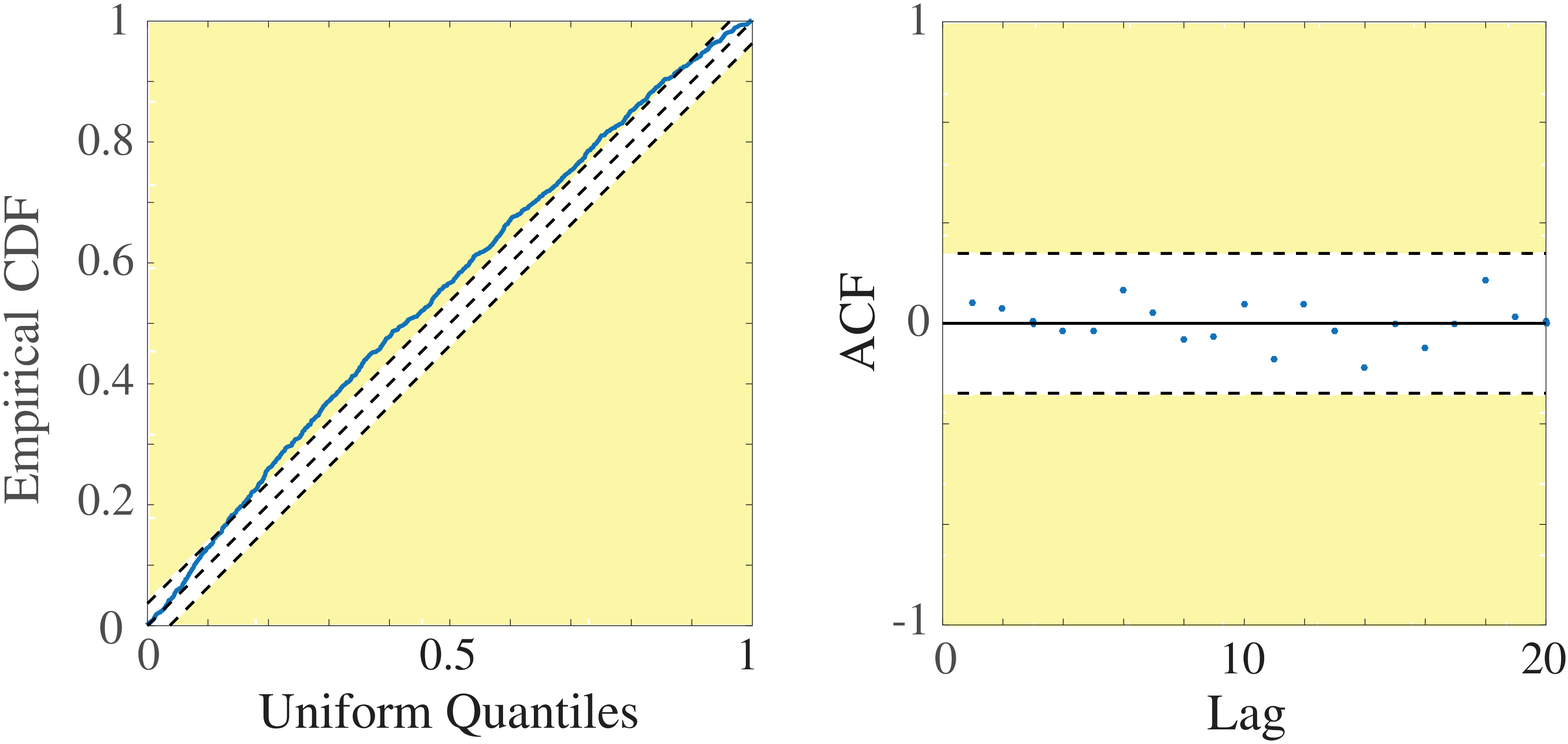}
\subcaption*{(a) ML}
\vspace{3mm}
\includegraphics[width=.9\columnwidth]{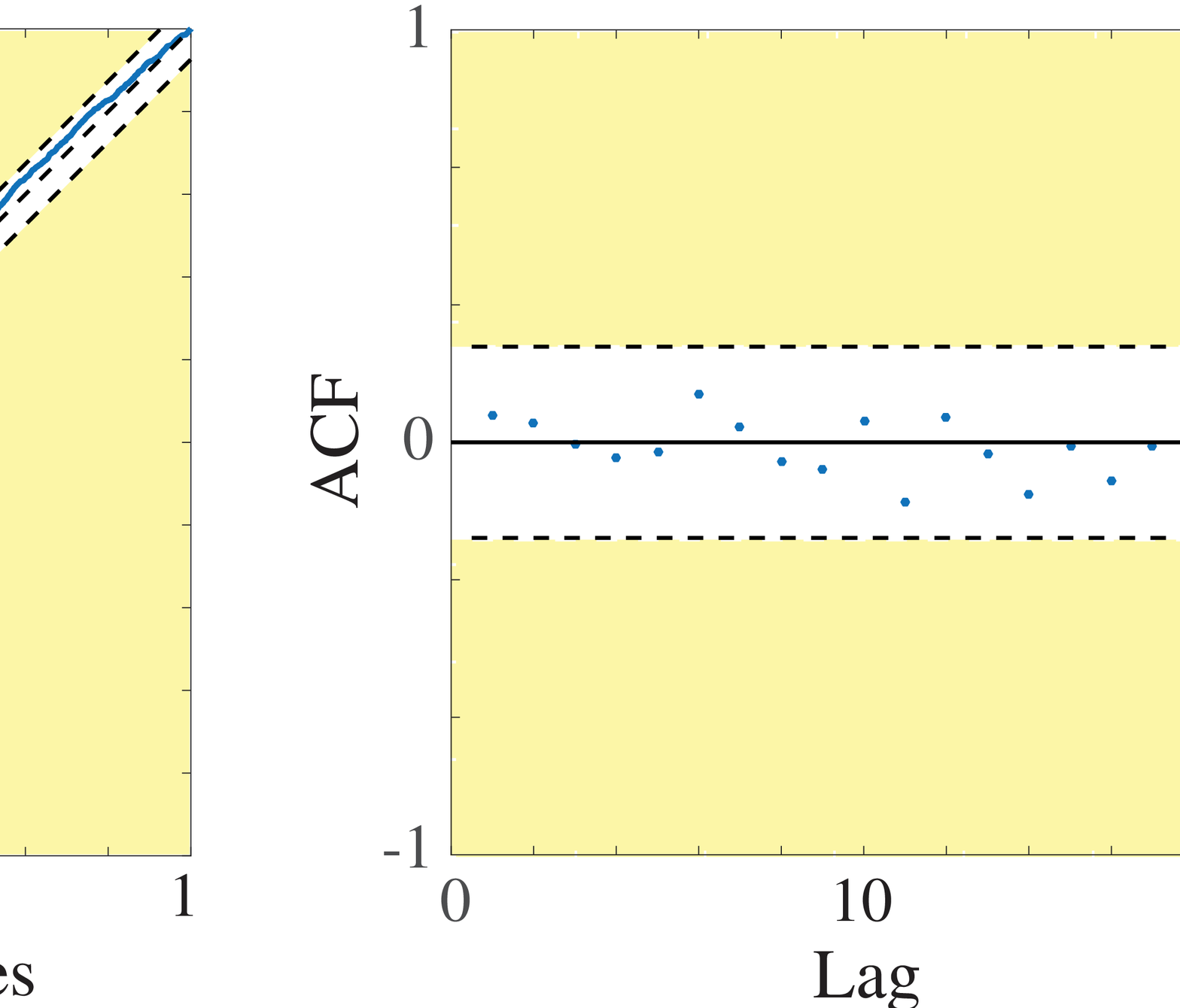}
\subcaption*{(b) $\ell_1$-regularized ML}
\vspace{3mm}
\includegraphics[width=.9\columnwidth]{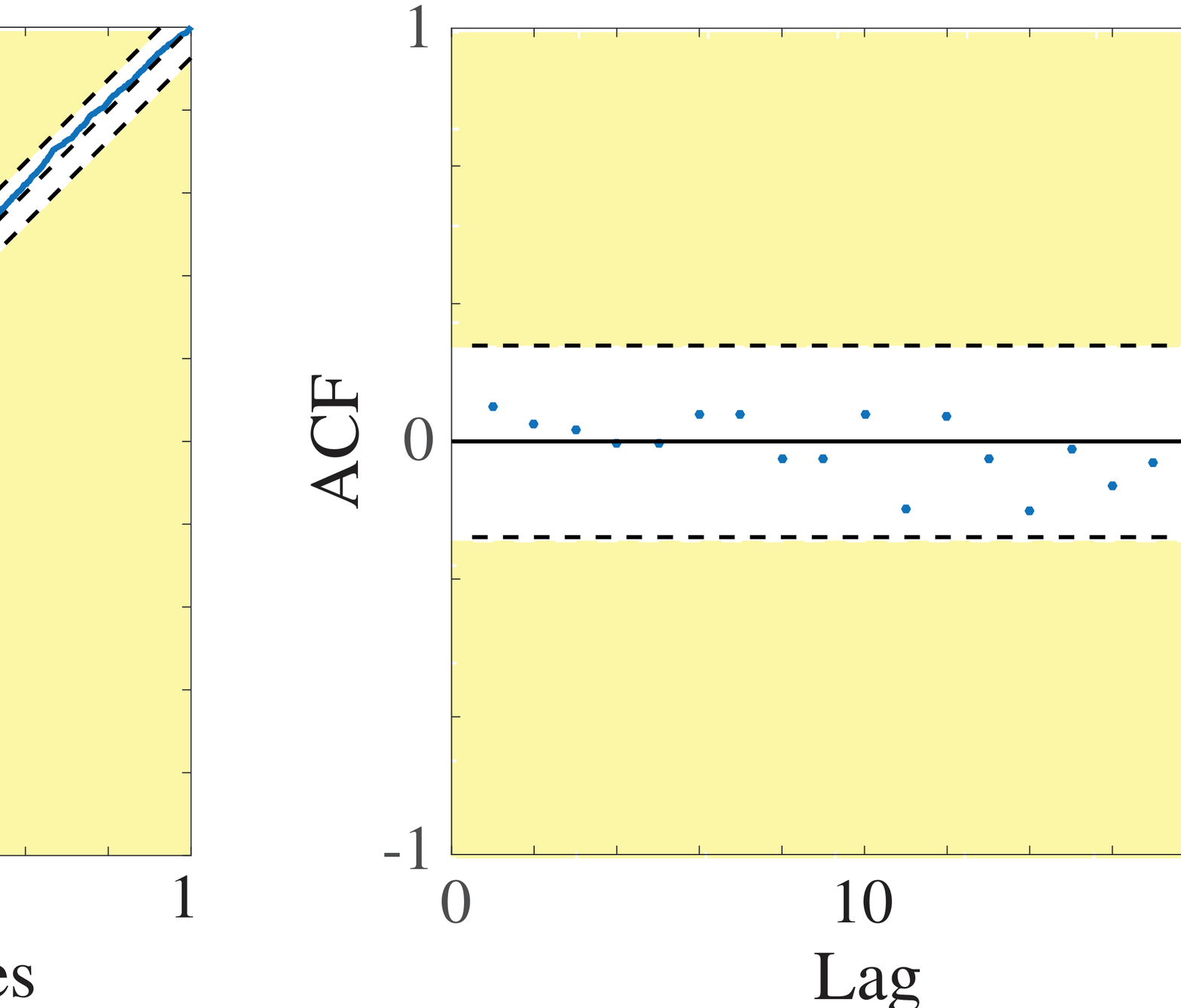}
\subcaption*{(c) POMP}
\end{center}
\vspace{-3mm}
\caption{{\small{KS and ACF tests at {$95\%$ confidence level}, for the ML, $\ell_1$-regularized ML and POMP estimates.}}}
\label{hawkes_ks_acf_synthetic}
\vspace{-5mm}
\end{figure}

{The MSE comparison in Figure \ref{MSE_high_dim}} requires one to know the true parameters. In practice, the true parameters are unknown, and statistical tests are typically used to assess the goodness-of-fit of the estimates to the observed data. We use the Kolmogorov-Smirnov (KS) test and the autocorrelation function (ACF) test to assess the goodness-of-fit. These tests are based on the time-rescaling theorem for point processes \cite{time_rescaling}, which states that if the time axis is rescaled using the {estimated} conditional intensity function of {the} inhomogeneous Poisson process, the resulting point process is a homogeneous Poisson process with unit rate. Thereby, one can test for the validity of the time-rescaling theorem via two statistical tests: the KS test reveals how close the empirical quantiles of the time-rescaled point process to the true quantiles of a unit rate Poisson process, and the ACF test reveals how close the ISI {distribution} of the time-rescaled process is to the true ISI distribution of a unit rate Poisson process. Details of these tests are given in Appendix \ref{appks}. Figure \ref{hawkes_ks_acf_synthetic} shows the KS and ACF tests {at a $95\%$ confidence level} for the ML $\ell_1$-regularized ML, and the POMP estimates from Figure \ref{estimate_plot_synthetic}. The yellow shades mark the regions below the specified confidence levels. The ML estimate {fails to pass the KS test, while the regularized and POMP estimates pass both tests.}

\subsection{Application to Spontaneous Neuronal Spiking Activity}

\subsubsection{{\textbf{Background and motivation}}}
Early studies of spontaneous neuronal activity from the cat's cochlear nucleus \cite{gerstein1960approach} marked a significant breakthrough in computational neuroscience by going beyond the so-called Poisson hypothesis, by which single neurons were assumed to be firing according to homogeneous Poisson statistics. The diversity of the ISIs deduced from the spontaneous activity of the cochlear neurons led to the development of more sophisticated statistical models based on renewal process theory, resulting in the Gamma and inverse Gaussian ISI descriptions of spontaneous neuronal activity \cite{gerstein1964random,tuckwell2005introduction}. Due to the analytical difficulties involved in working with these models, their generalization to a broader range of spiking statistics is not straightforward. 

In light of the more recent discoveries on the role of spontaneous neuronal activity in brain development \cite{xu2011instructive,blankenship2010mechanisms}, its relation to functional architecture \cite{tsodyks1999linking}, and its functional significance in a variety of modalities including retinal \cite{xu2011instructive}, visual \cite{luck1997neural}, auditory \cite{tritsch2007origin}, hippocampal \cite{sombati1995recurrent}, cerebellar \cite{aizenman1999regulation}, and thalamic \cite{pinault1998intracellular} function, the modeling and analysis of this phenomenon has sparked a renewed interest among researchers in recent years. In particular, models based on GLMs have shown to overcome the analytical difficulties of the abovementioned models based on renewal theory, and have been successfuly used in relating the spontaneous neuronal activity to instrinsic and extrinsic neural covariates \cite{paninski2004maximum,paninski2007statistical,time_rescaling,barbieri2001construction} as well as inferring the functional connectivity of neuronal ensembles \cite{kim2011granger,brown_func_conn}. The above-mentioned results rely on the accuracy of the ML estimation of these models. In addition, the estimated parameters are typically sparse. Therefore, the $\ell_1$-regularized ML and POMP estimators are expected to offer a more robust alternative than the ML, especially under the limited observation setting.

{In order to evaluate the performance of these estimators on real data, in the remainder of this section we will compare the performance of the ML, $\ell_1$-regularized ML, and POMP estimators in modeling the spontaneous spiking activity recorded from two different types of neurons, namely the mouse's lateral geniculate nucleus and the ferret's retinal ganglion cells.}

In the following analysis, the regularization parameter $\gamma_n$ was chosen using a two-fold cross-validation refinement around the value obtained from our theoretical results. {The length of the history components $p$ was chosen by first selecting a large enough $p$ as an upper bound for the expected correlation length of neuronal spontaneous activity (estimated as $\sim 1.5~\text{s}$), followed by reducing $p$ to the point where an increase in the history length  does not result in significantly detected history components.}

\subsubsection{{\textbf{Application to LGN spiking activity}}}
\label{nsp}
 We first compare the performance of the estimators on the LGN neurons.  The LGN is part of the thalamus in the brain, which acts as a relay from the retina to the primary visual cortex \cite{thalamus}. The data {were} recorded at $1ms$ resolution from the mouse LGN neurons using single-unit recording \cite{scholl2013emergence}. We used about $5$ seconds of data from one neuron for the analysis. In order to capture the history dependence governing the spontaneous spiking activity of the LGN neuron, we model the spiking probability using the \textcolor{black}{canonical self-exciting process} model with $p=100$ ($\Delta = 1ms$). {Figure \ref{real_fig} shows the spiking data used in the analysis.}

\begin{figure}[h]
\centering
\includegraphics[width=.9\columnwidth]{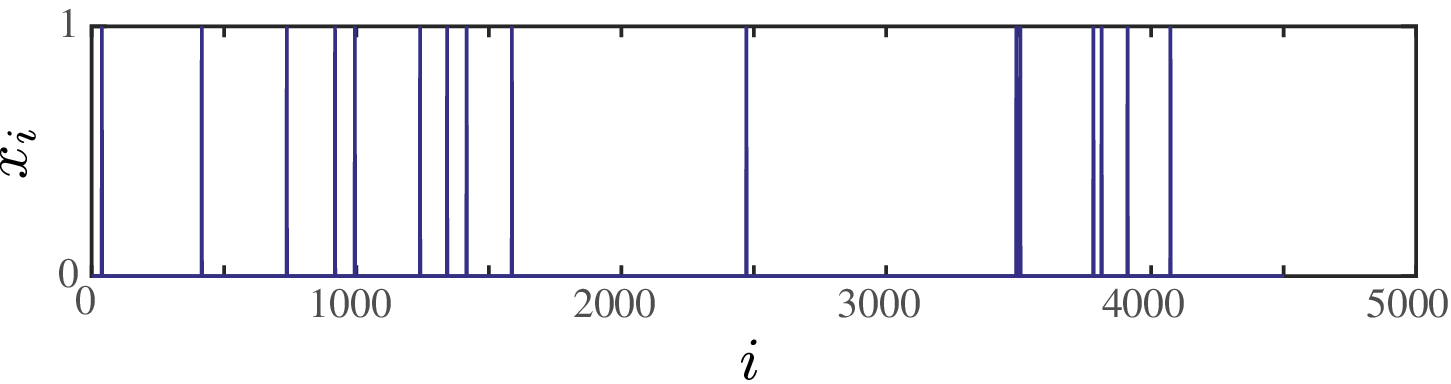}
\caption{\small{The LGN spiking data used in the analysis.}}\label{real_fig}
\vspace{-1mm}
\end{figure} 

\begin{figure}[h]
\begin{center}
\includegraphics[width=.9\columnwidth]{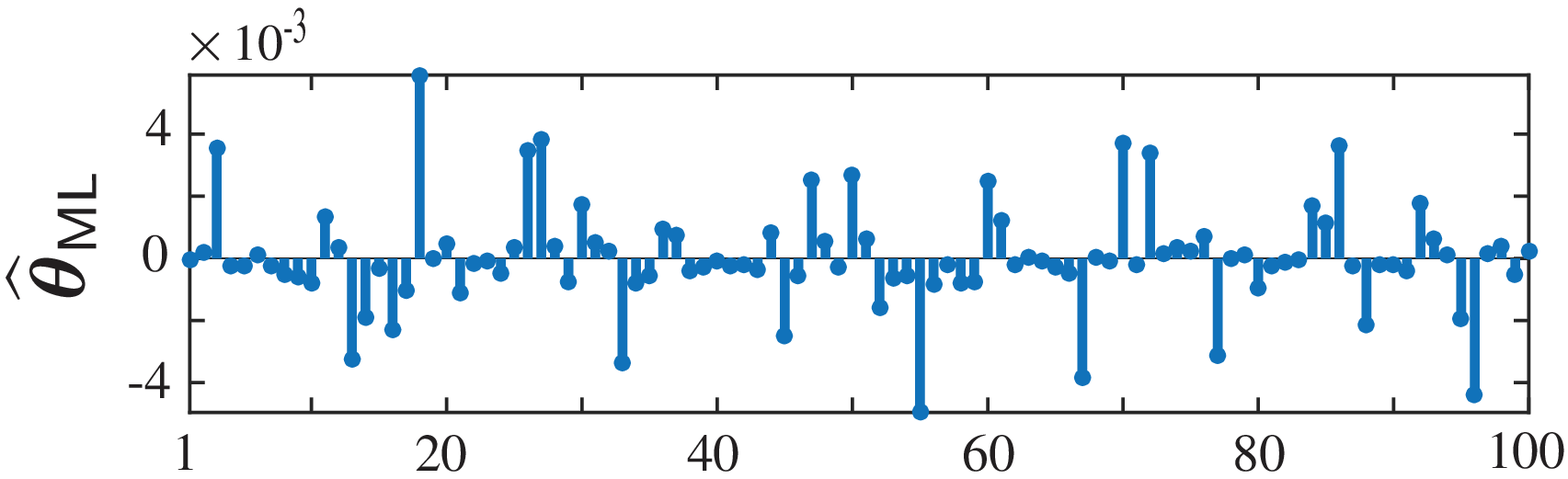}
\subcaption*{(a) ML}
\vspace{-1mm}
\includegraphics[width=.9\columnwidth]{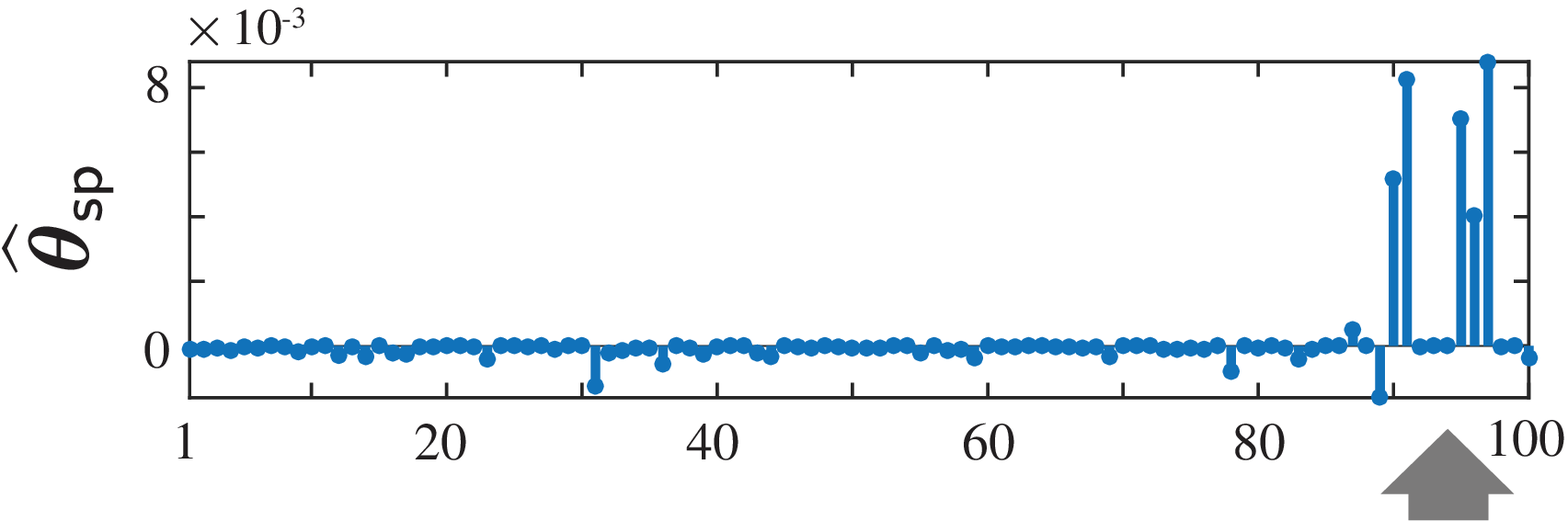}
\subcaption*{(b) $\ell_1$-regularized ML}
\vspace{3mm}
\includegraphics[width=.9\columnwidth]{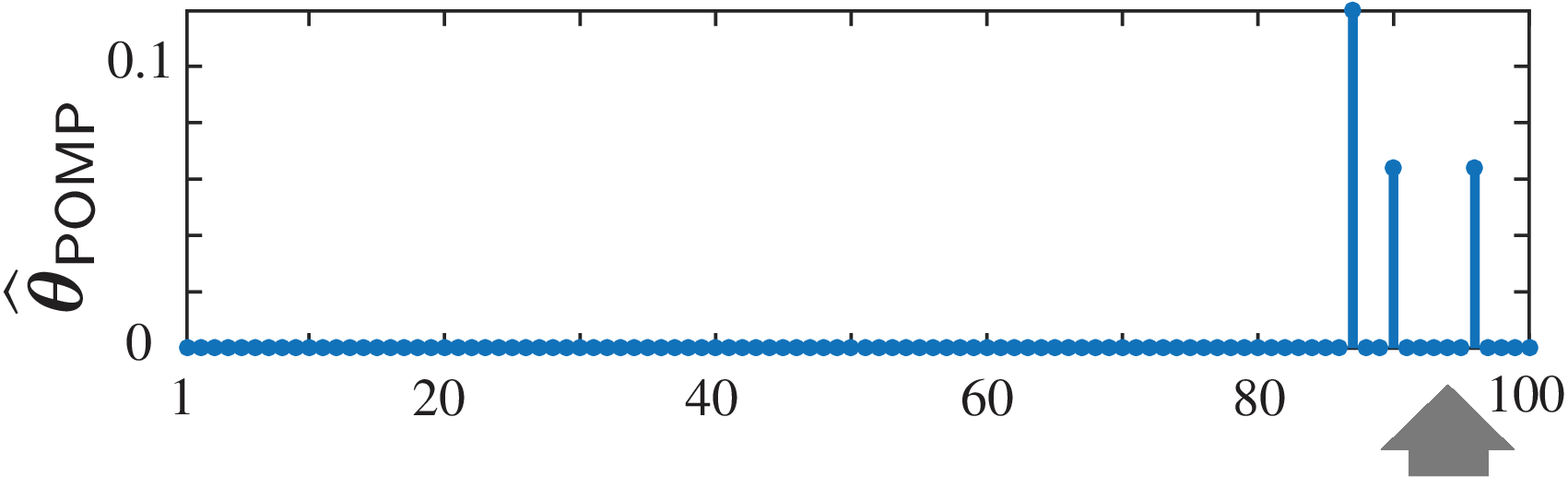}
\subcaption*{(c) POMP}
\end{center}
\vspace{-3mm}
\caption{\small{(a) ML, (b) $\ell_1$-regularized ML, and (c) POMP estimates of the LGN spiking parameters.}}
\label{lgn_mlvssp}
\vspace{-5mm}
\end{figure}

\begin{figure}[h]
\begin{center}
\includegraphics[width=.9\columnwidth]{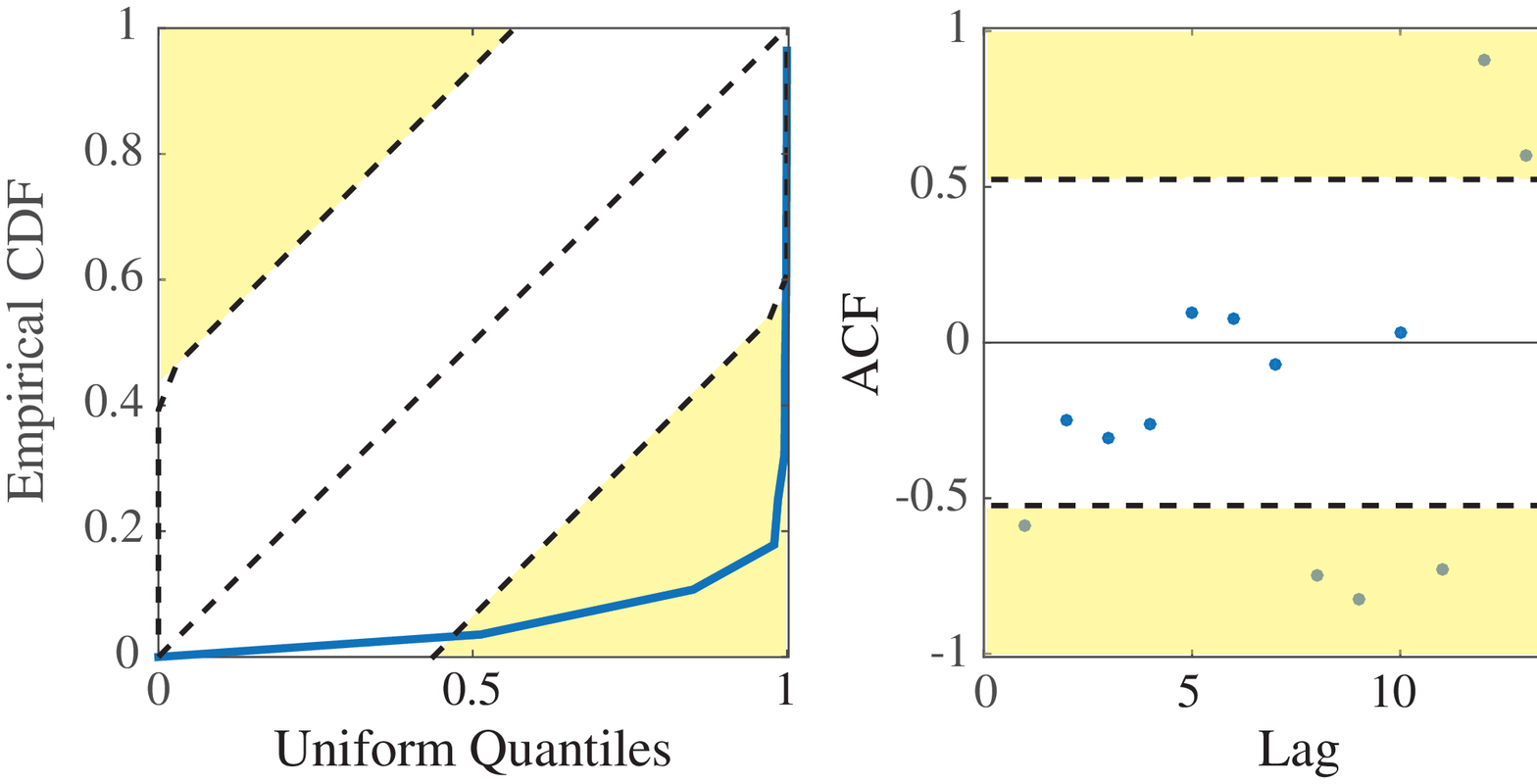}
\subcaption*{(a) ML}
\vspace{3mm}
\includegraphics[width=.9\columnwidth]{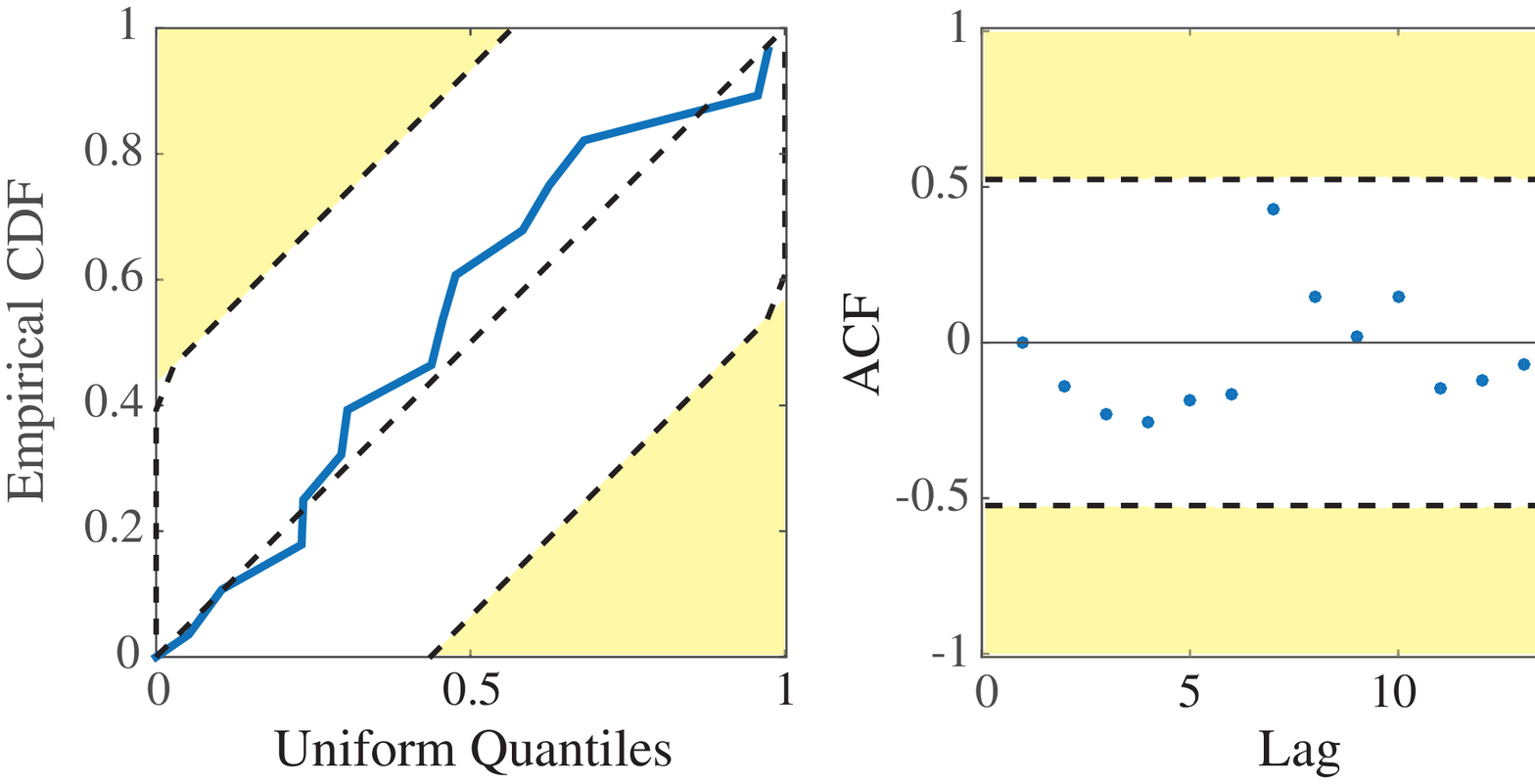}
\subcaption*{(b) $\ell_1$-regularized ML}
\vspace{3mm}
\includegraphics[width=.9\columnwidth]{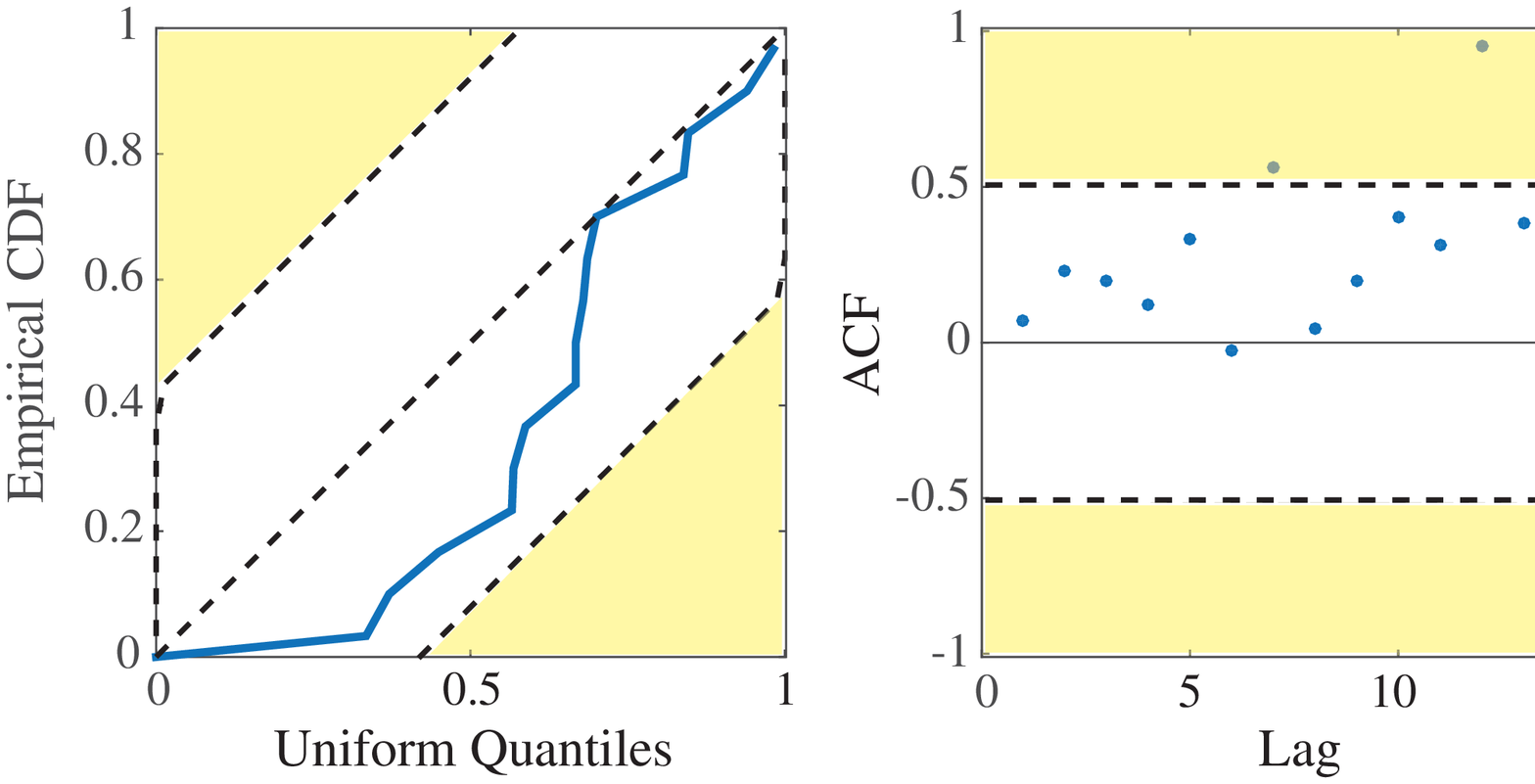}
\subcaption*{(c) POMP}
\end{center}
\vspace{-3mm}
\caption{\small{KS and ACF tests at $99\%$ confidence level, for the ML, $\ell_1$-regularized ML and POMP estimates.}}\label{lgn_ks_sp}
\vspace{-5mm}
\end{figure}

Figure \ref{lgn_mlvssp} shows the estimated history dependence parameter vectors using the three methods. Both the regularized ML (Figure \ref{lgn_mlvssp}(b)) and POMP (Figure \ref{lgn_mlvssp}(c)) estimates capture significant history dependence components around a lag of $90$--$95~\text{ms}$ (marked by the upward arrows). In \cite{Borowska}, an intrinsic neuronal oscillation frequency of around $10 \hertz$ has been reported in around $30\%$ of all classes of mouse retinal cells under experiment, using combined two-photon imaging and patch-clamp recording. Our results are indeed consistent with the above mentioned findings about the intrinsic spiking frequency of retinal neurons. \textcolor{black}{To see this, we consider the power spectral density of the \textcolor{black}{canonical self-exciting process} given by:
\begin{equation}
\label{bart_spec}
S(\omega) = \frac{1}{2\pi} \left( \pi_\star^2 \delta(\omega) + \frac{\pi_\star-\pi_\star^2}{\left(1-\mathbf{1}'\boldsymbol{\theta}\right)^2 \left|1 - \Theta(\omega)\right|^2} \right),
\end{equation}
where $\Theta(\omega)$ is the discrete-time Fourier transform of $\boldsymbol{\theta}$ and $\pi_\star = \mu/(1-\mathbf{1}'\boldsymbol{\theta})$ denotes the stationary distribution probability of spiking. The derivation of the power spectral density is given in Appendix \ref{app:hawkes_psd}. The power spectral density of the \textcolor{black}{canonical self-exciting process} resembles the Bartlett spectrum of the Hawkes process \cite{Bartlett1, Bartlett2, Hawkes_orig}, whose peaks correspond to the significant oscillatory components of the underlying process.} Our estimated parameter vectors $\boldsymbol{\theta}$ using the regularized ML and POMP have significant nonzero components around lags of $ 90 \le k \le 95$. As a result, $S(\omega)$ peaks at $\omega = \frac{2\pi}{k\Delta}$. Hence, $f=\frac{1}{k\Delta}$ is an estimate of the significant intrinsic frequency of the underlying self-exciting process. Using the estimated numerical values, the intrinsic frequency is around $10.5$--$11~\text{Hz}$, which is consistent with experimental findings of \cite{Borowska}. Compared to the method in \cite{Borowska}, our estimates are obtained using much shorter recordings of spiking activity and provide a principled framework to study the oscillatory behavior of LGN neurons using {sparse GLM estimation.}

Note that there is a difference in the orders of magnitudes of the POMP estimate compared to the ML and regularized ML estimates. This is due to the fact that the POMP estimate is exactly $s$-sparse, whereas the ML and regularized ML estimates consist of $p = 100$ non-zero values. In order to assess the goodness-of-fit of these estimates, we invoke the KS and ACF tests. Figure \ref{lgn_ks_sp} shows the corresponding KS and ACF test plots. As it is implied from Figure \ref{lgn_ks_sp}(a), the ML estimate fails both tests due to overfitting, whereas the regularized ML (Figure \ref{lgn_ks_sp}(b)) passes both tests at the specified confidence levels. The POMP estimate (Figure \ref{lgn_ks_sp}(c)), however, passes the KS test while marginally failing the ACF test. The latter observation implies that the seemingly negligible components of the parameter vector captured by the regularized ML estimate seem to be important in explaining the statistics of the observed data.

\subsubsection{{\textbf{Application to RGC spiking activity}}} \label{sec:rgc}
{We will next study the performance of the estimators on spiking} data recorded from the RGCs of neonatal and adult ferrets \cite{wong1993transient}. The retinal ganglion cells are located in the innermost layer of the retina. They integrate information from photoreceptors and project them into the brain \cite{bear2007neuroscience}. The data {were} recorded using a multi-electrode array from the ferret retina at $50~\mu s$ \cite{wong1993transient}. We used $5$ seconds of data from one neuron for the analysis (neuron 2, session 1, adult data set, CARMEN data base \cite{eglen2014data}). {Figure \ref{real_2_fig} shows a segment of the spiking data used in our analysis. The RGC activity in the adult ferret is characterized by bursts of activity with a mean firing rate of $9 \pm 7$~Hz, which are separated by $0.5$--$1~\text{s}$ intervals \cite{wong1993transient}.}

\begin{figure}[t]
\centering
\includegraphics[width=.95\columnwidth]{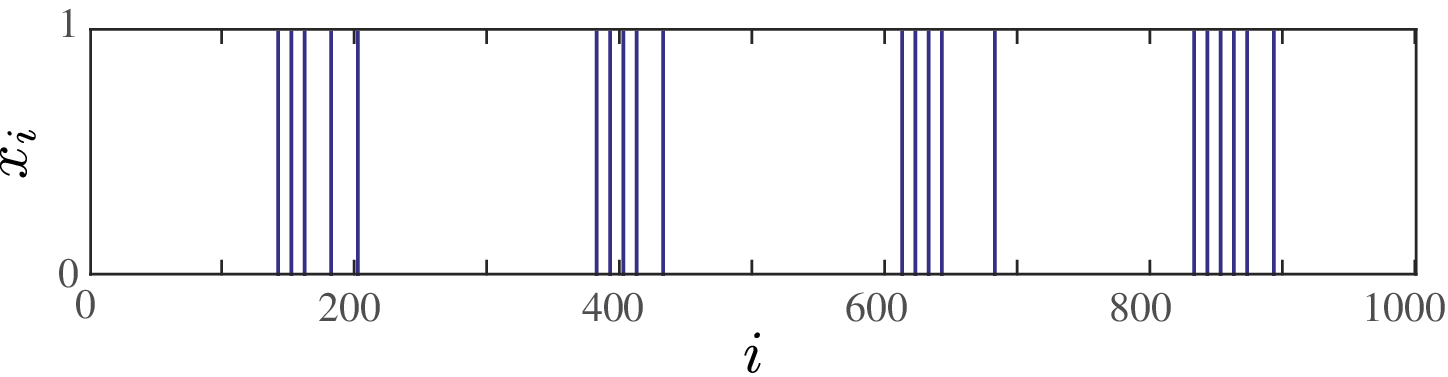}
\caption{\small{Segment of the RGC spiking data used in the analysis.}}\label{real_2_fig}
\vspace{-5mm}
\end{figure} 

\begin{figure}[b!]
\vspace{-4mm}
\begin{center}
\includegraphics[width=.9\columnwidth]{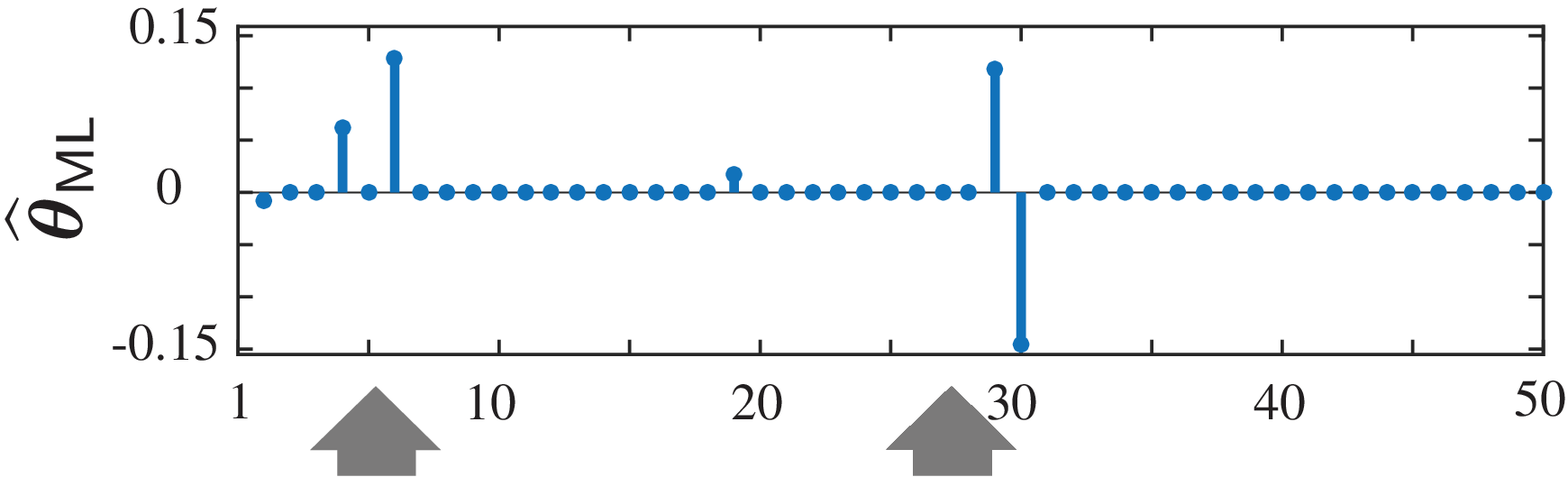}
\subcaption*{(a) ML}
\includegraphics[width=.9\columnwidth]{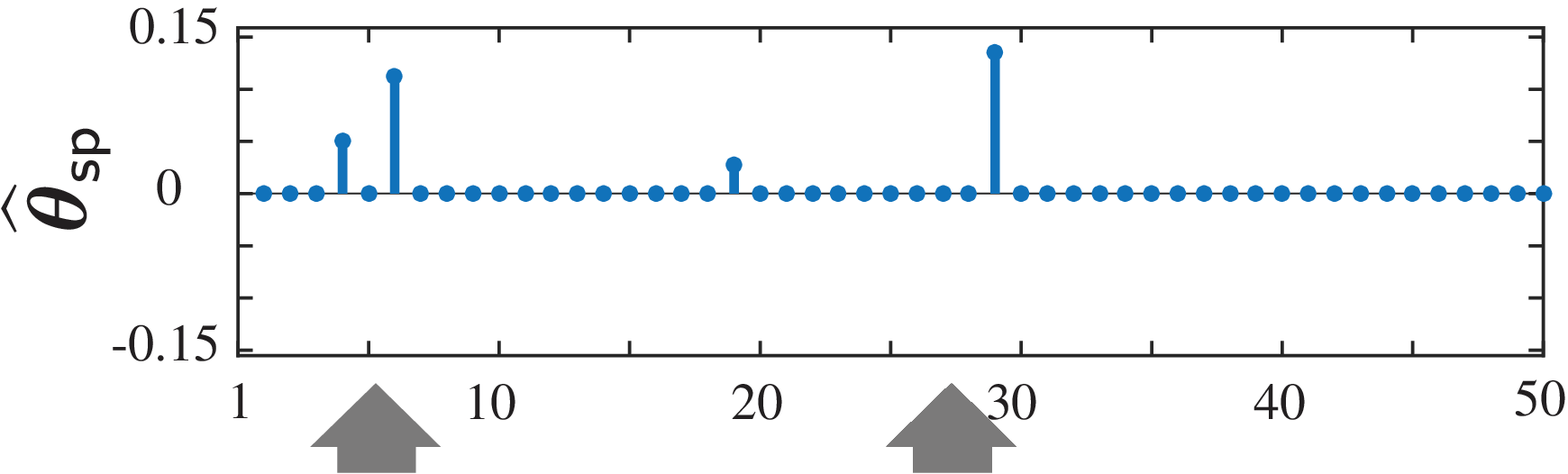}
\subcaption*{(b) $\ell_1$-regularized ML}
\vspace{3mm}
\includegraphics[width=.9\columnwidth]{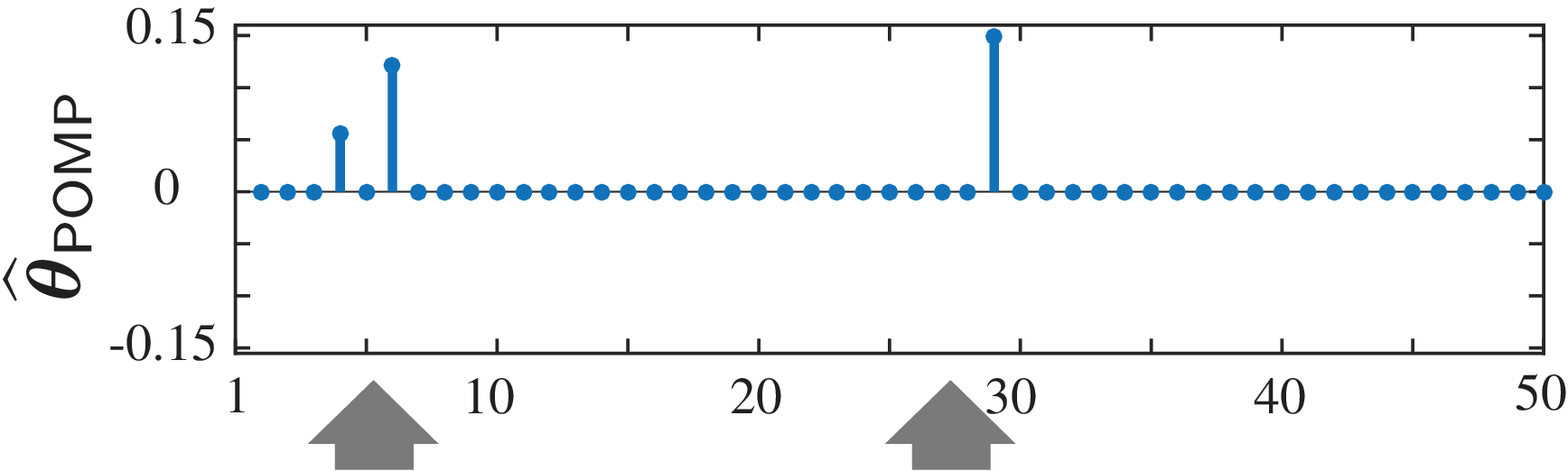}
\subcaption*{(c) POMP}
\end{center}
\vspace{-3mm}
\caption{\small{(a) ML, (b) $\ell_1$-regularized ML, and (c) POMP estimates of the RGC spiking parameters {using the canonical self-exciting process model.}}}
\label{rgc_mlvssp_linear}
\vspace{-2mm}
\end{figure}

In order to capture the history dependence governing the spontaneous spiking activity of the RGC neuron, we model the spiking probability using two different link models to further corroborate the generalization of our results to models beyond the canonical self-exciting process studied in this paper. First, we consider the canonical self-exciting process model. We have chosen  $\pi_\max = 0.49$,  $p=50$ ($\Delta = 25~\text{ms}$) and $s_\star = 3$. The baseline parameter $\mu$ is estimated from the data and is set to be equal to half of empirical mean firing rate of the neuron.

\begin{figure}[t!]
\begin{center}
\includegraphics[width=.9\columnwidth]{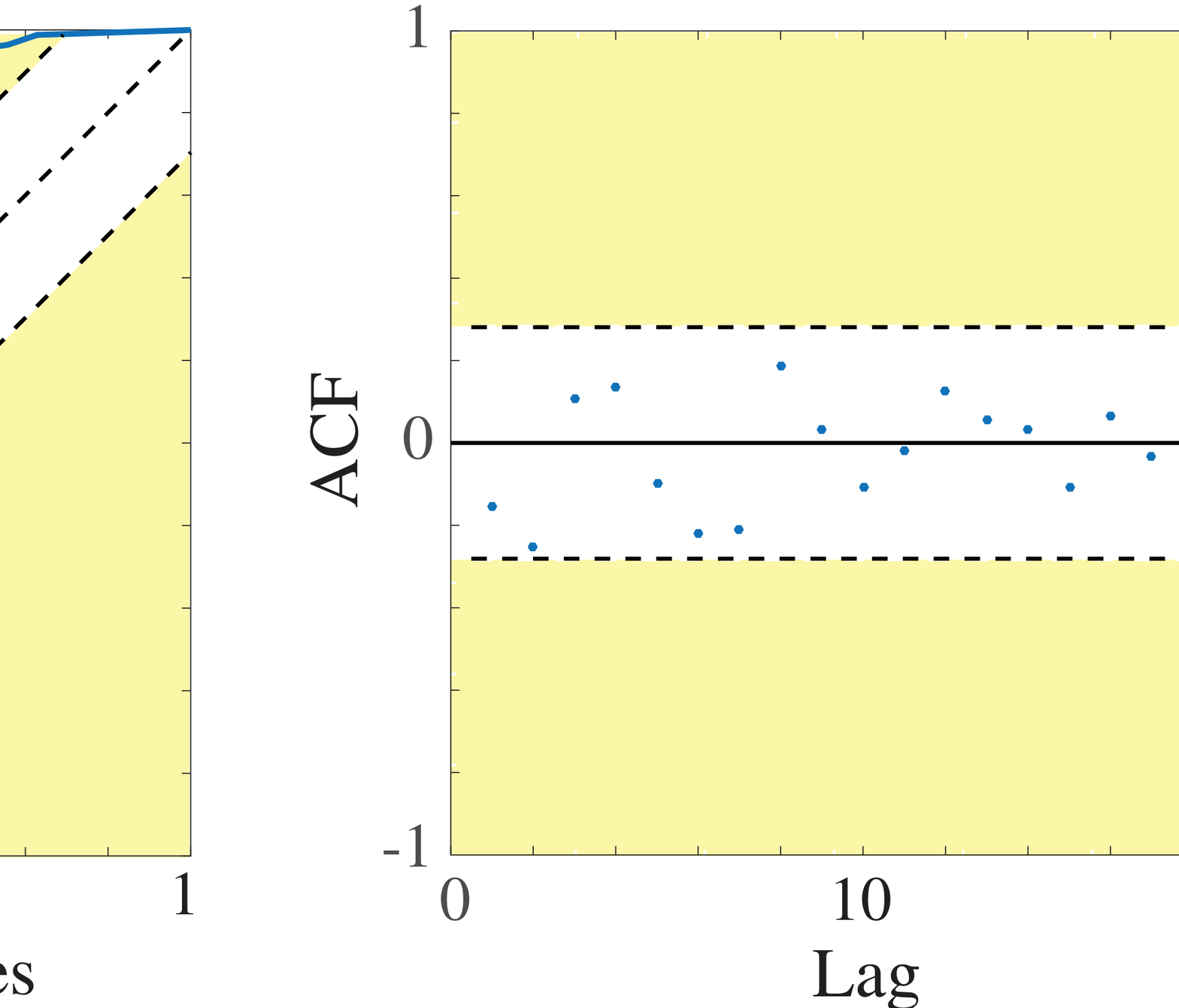}
\subcaption*{(a) ML}
\vspace{3mm}
\includegraphics[width=.9\columnwidth]{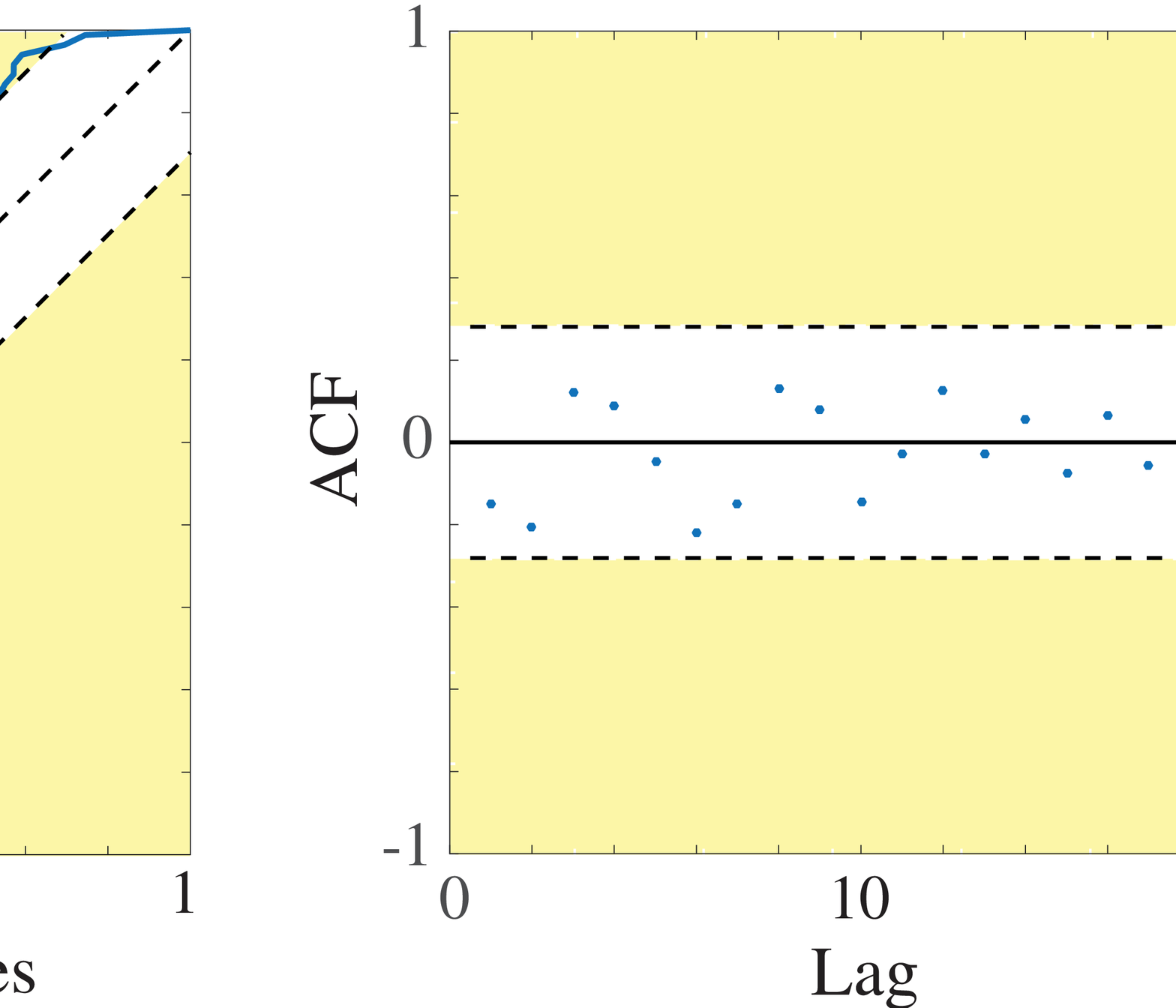}
\subcaption*{(b) $\ell_1$-regularized ML}
\vspace{3mm}
\includegraphics[width=.9\columnwidth]{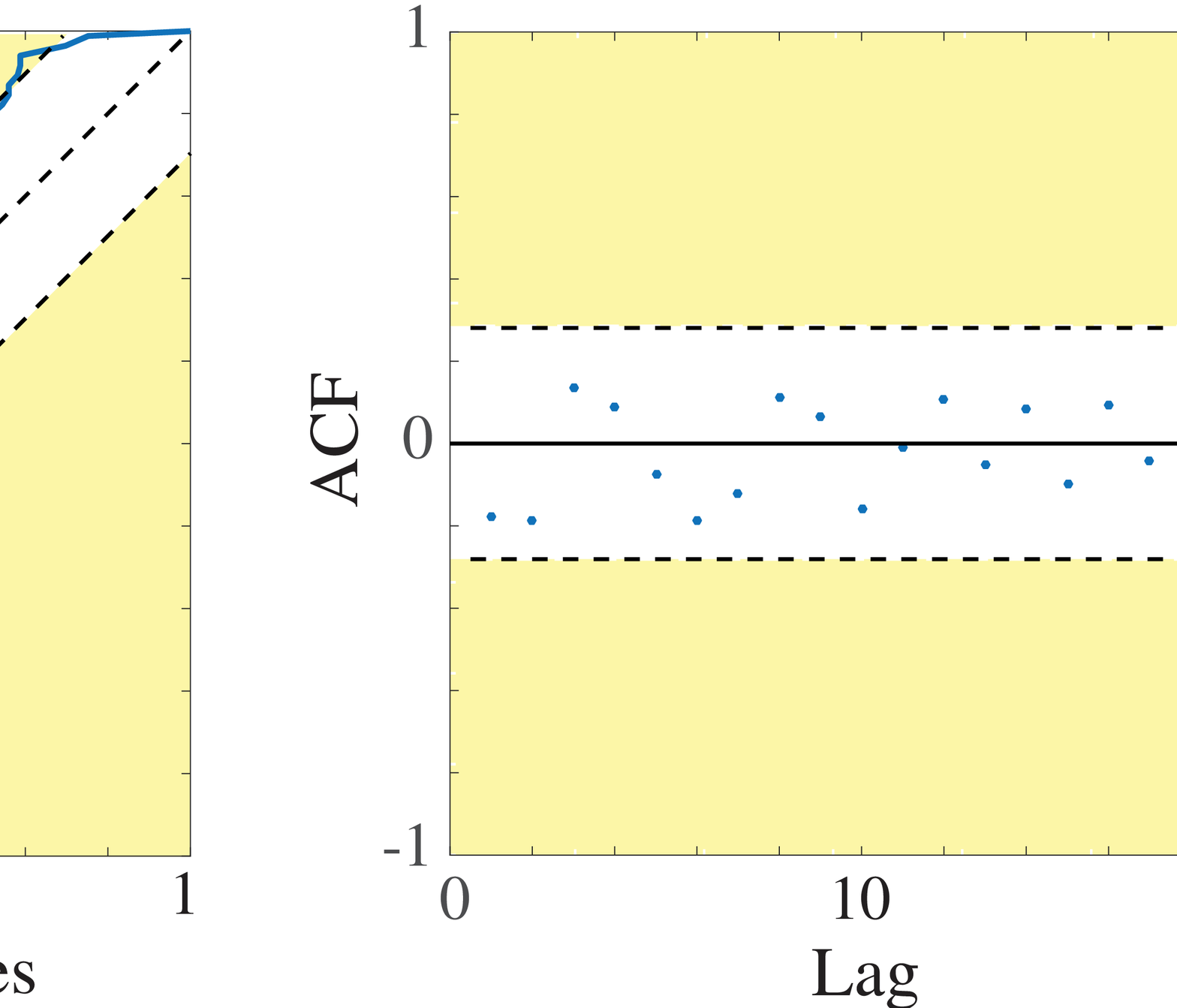}
\subcaption*{(c) POMP}
\end{center}
\vspace{-3mm}
\caption{\small{KS and ACF tests at $95\%$ confidence level, for the ML, $\ell_1$-regularized ML and POMP estimates {using the canonical self-exciting process model.} }}\label{rgc_ks_sp_linear}
\vspace{-6mm}
\end{figure}

\begin{figure}[t!]
\begin{center}
\includegraphics[width=.9\columnwidth]{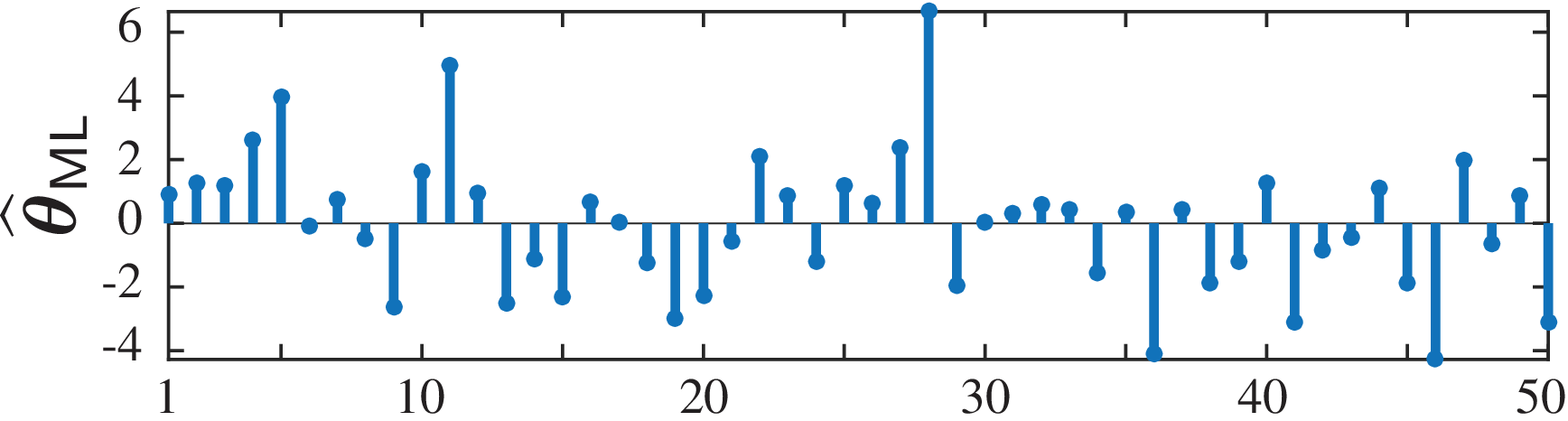}
\subcaption*{(a) ML}
\includegraphics[width=.9\columnwidth]{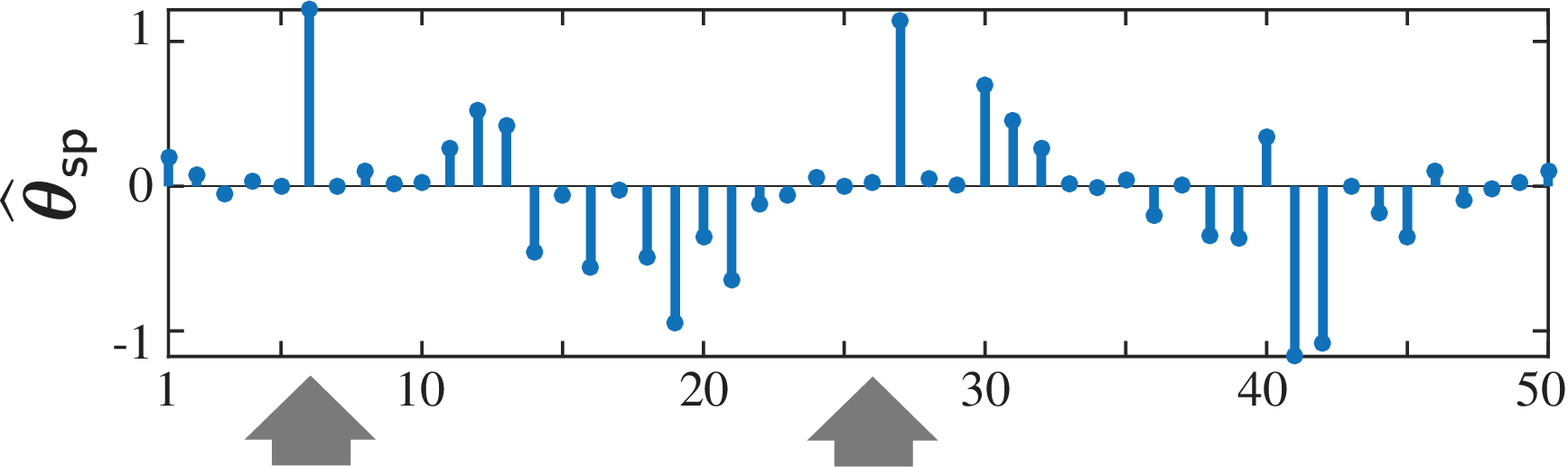}
\subcaption*{(b) $\ell_1$-regularized ML}
\vspace{3mm}
\includegraphics[width=.9\columnwidth]{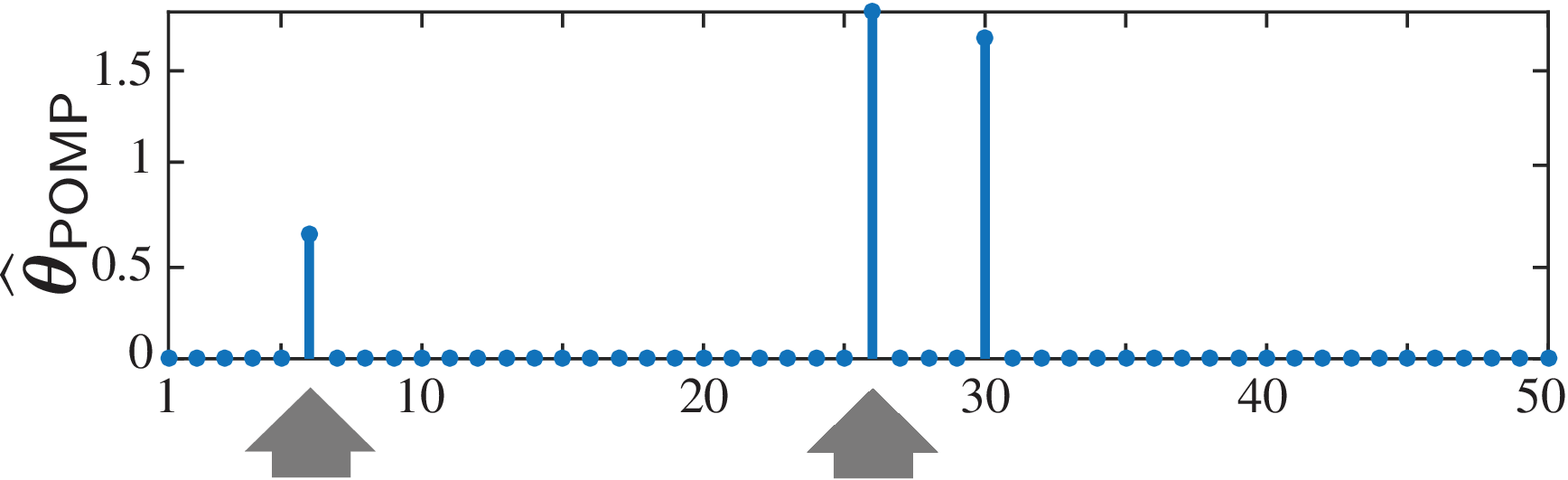}
\subcaption*{(c) POMP}
\end{center}
\vspace{-3mm}
\caption{\small{(a) ML, (b) $\ell_1$-regularized ML, and (c) POMP estimates of the RGC spiking parameters  {using the logistic link model}.}}
\label{rgc_mlvssp}
\vspace{-3mm}
\end{figure}

\begin{figure}[h!]
\vspace{0mm}
\begin{center}
\includegraphics[width=.9\columnwidth]{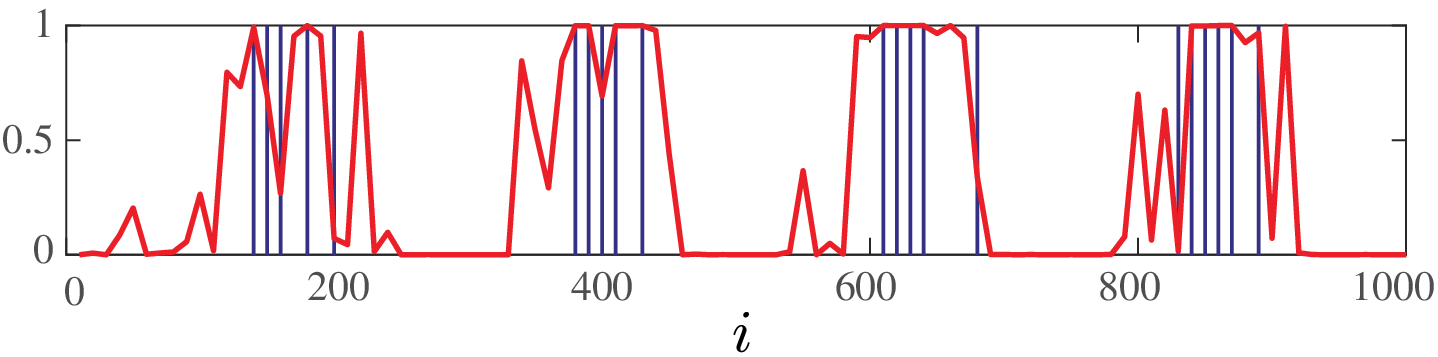}
\subcaption*{(a) ML}
\vspace{2mm}
\includegraphics[width=.9\columnwidth]{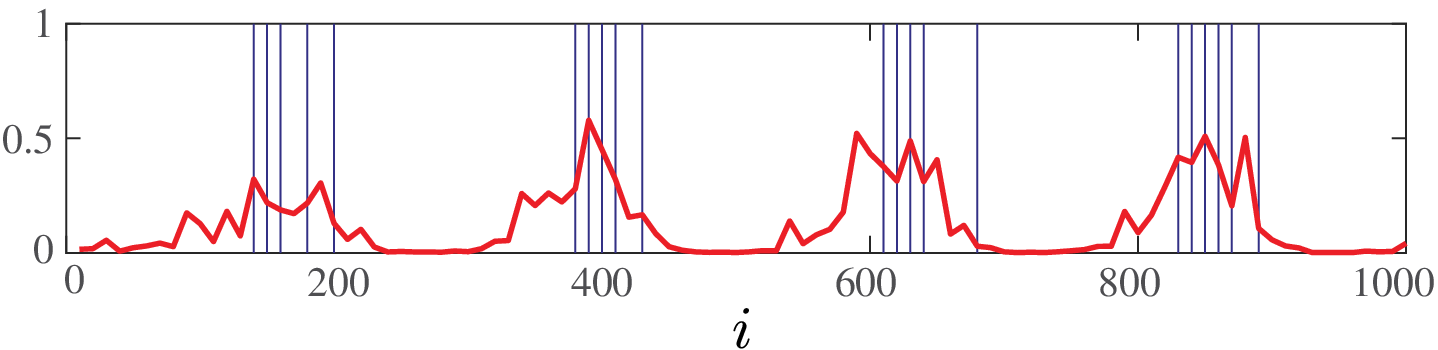}
\subcaption*{(b) $\ell_1$-regularized ML}
\vspace{3mm}
\includegraphics[width=.9\columnwidth]{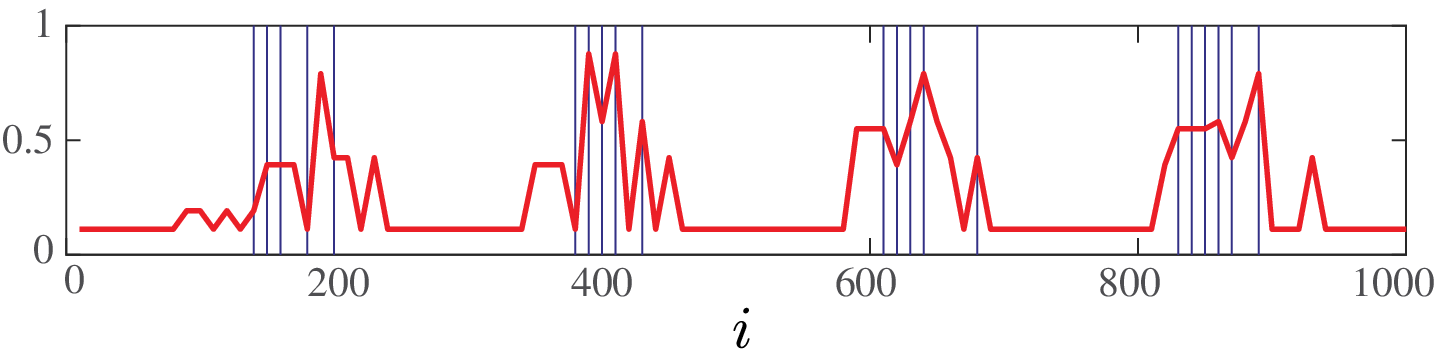}
\subcaption*{(c) POMP}
\end{center}
\vspace{-3mm}
\caption{\small{(a) ML, (b) $\ell_1$-regularized ML, and (c) POMP estimates of the RGC spiking {probability using the unconstrained logistic link model}. Blue vertical lines show the locations of the spikes, and red traces show the estimated {probabilities}.}}
\label{rgc_rates}
\vspace{-5mm}
\end{figure}

Figure \ref{rgc_mlvssp_linear} shows the estimated history components using the three estimators. {All three estimates} capture significant self-exciting history dependence components around the lags of $150~\text{ms}$ and $0.65$--$0.75~\text{s}$ (marked by the upward arrows). Invoking the foregoing argument for the LGN neuron regarding the power spectral density of the process (\ref{bart_spec}), these estimated lag components are consistent with the empirical estimates of \cite{wong1993transient}, as they indicate that the data can be characterized by a combination of $\frac{1}{150~\text{ms}} = 6.66$~Hz bursts separated by gaps of length $0.65$--$0.75~\text{s}$.
{The ML estimator predicts an extra self-inhibitory (negative) component, which results in over-fitting the data.}  This phenomenon can be observed by noting that the ML estimate fails the KS test shown in Figure \ref{rgc_ks_sp_linear}. 

We will next consider a logistic link model of the form $\lambda_i = \frac{\exp(\mu + \boldsymbol{\theta}'x_{i-p}^{i-1})}{C+\exp (\mu + \boldsymbol{\theta}' {\mathbf{x}_{i-p}^{i-1}})}$, with $C = 100$. This model is widely used in neuronal modeling literature (e.g., \cite{truccolo2005point, Brown_pp}), where the assumptions given by (\ref{eq:star}) are dropped and the optimization is performed in an unconstrained fashion. We adopt this approach and obtain all the estimates by dropping the assumptions of $(\star)$. Figure \ref{rgc_mlvssp} shows the estimated history components {using the unconstrained estimators. } {Compared to the canonical self-exciting process model with a linear link, both the regularized ML (Figure \ref{rgc_mlvssp}(b)) and POMP (Figure \ref{rgc_mlvssp}(c)) estimates capture similar significant self-exciting history dependence components, which are consistent across the two sets of estimates.}

{The KS and ACF test results for this case are very similar to Figure \ref{rgc_ks_sp_linear} are are thus omitted for brevity.} In order to further inspect the goodness-of-fit of these methods, we plot the estimated spiking probabilities in Figure \ref{rgc_rates}. The ML estimate shown in Figure \ref{rgc_rates}(a) overfits the spiking events by rapidly saturating the rate to either 0 and 1, which results in undesired high rate estimates where there are no spikes. On the contrary, the regularized ML (Figure \ref{rgc_rates}(b)) and POMP (Figure \ref{rgc_rates}(c)) provide a more reliable estimate of the rates consistent with the spiking events. {This analysis suggests that the sufficient assumptions of $(\star)$ are not necessary for the superior performance of the regularized and POMP estimators over that of ML.}

\section{Conclusion and Future Work}\label{discussions}
In this paper, we studied the sampling properties of $\ell_1$-regularized ML and greedy estimators for a \textcolor{black}{canonical self-exciting process}. The main theorems provide non-asymptotic sampling bounds on the number of measurements, which lead to stable recovery of the parameters of the process. To the best of our knowledge, our results are the first of this kind, and can be readily generalized to various other classes of self-exciting {GLMs}, such as processes with logarithmic or logistic links. 

Compared to the existing literature, our results bring about two major contributions. First, we provide a theoretical underpinning for the advantage of $\ell_1$-regularization in ML estimation as well as greedy estimation in problems involving {binary} observations. These methods have been used in neuroscience in an ad-hoc fashion. Our results establish the utility of these techniques by characterizing the underlying sampling trade-offs. Second, our analysis relaxes the widely-assumed hypotheses of i.i.d. covariates. This assumption is often violated when working with history-dependent data such as neural spiking data. 

We also verified the validity of our theoretical results through simulation studies as well as application to real neuronal spiking data {from mouse's LGN and ferret's RGC neurons.} These results show that both the regularized ML and the greedy estimates significantly outperform the widely-used ML estimate. In particular, through making a connection with the spectrum of discrete point processes, we were able to quantify the estimation of the intrinsic firing frequency of LGN neurons. {In the spirit of easing reproducibility, we have archived a MATLAB implementation of the estimators studied in this work using the CVX package \cite{cvx} on the open source repository GitHub and made it publicly available \cite{hawkes_code}.}

One of the limitations of our analysis is the assumption that the spiking probabilities are bounded by $1/2$, which results in loss of generality. This assumption is made for the sake of theoretical analysis in bounding the mixing rate of the canonical self-exciting process. Our numerical experiments suggest that it is not necessary for the operation of the $\ell_1$-regularized and POMP estimators. We consider further inspection of the mixing properties of this process and thus relaxing this assumption as future work. Our future work also includes generalization of our analysis to multivariate GLMs, which will allow to infer network properties from multi-unit recordings of neuronal ensembles.

\section{Acknowledgement}
We would like to thank L. A. Kontorovich for helpful discussions regarding reference \cite{kontorovich2008concentration}.

\appendices
\section{Proofs of Main Theorems} \label{appprf}

\subsection{Roadmap of the Proofs}
This appendix contains the proofs of Theorems \ref{negahban_lambda} and \ref{thm_OMP}, as well as Corollaries \ref{cor:1} and \ref{cor:2}. Before presenting the proofs, we establish some of the basic properties of the canonical self-exciting process (Proposition \ref{prop:hawkes_properties}) as well as our notational conventions as preliminaries. We then state a key result, namely Lemma \ref{lemma1}, which is at the core of the proofs of Theorems \ref{negahban_lambda} and \ref{thm_OMP}. The proofs are presented via a sequence of three propositions (Propositions \ref{negahban_thm}--\ref{prop_omp}) based on existing results in the literature, in conjunction with Proposition \ref{prop:hawkes_properties} and Lemma \ref{lemma1}. Therefore, Appendix \ref{appprf} is stand-alone modulo the proofs of Proposition \ref{prop:hawkes_properties} and Lemma \ref{lemma1}.

The proofs of Proposition \ref{prop:hawkes_properties} and Lemma \ref{lemma1} are presented in Appendix \ref{app:hawkes_psd}. In particular, the proof of Lemma \ref{lemma1} follows from two propositions (Propositions \ref{eig_conv} and \ref{prop:hawkes_concentration}). Therefore, Appendix \ref{app:hawkes_psd} is stand-alone modulo the proofs of Propositions \ref{eig_conv} and \ref{prop:hawkes_concentration}.

While Proposition \ref{eig_conv} is a well-known result, Proposition \ref{prop:hawkes_concentration} requires a careful proof, which is presented in Appendix \ref{app:c} and relies on an existing result on the concentration of dependent random variables (Proposition \ref{conc_dep}).

\subsection{Preliminaries and Notation}
We state some useful properties of the canonical self-exciting process in the form of the following proposition:
\begin{prop} \label{prop:hawkes_properties}[Properties of the Canonical Self-Exciting Process] The canonical self-exciting process is stationary and we have
\begin{align}
\notag & \pi_\star = \frac{\mu}{1-\mathbf{1}'\boldsymbol{\theta}} >0, \quad  \mu > 0 \Rightarrow  \mathbf{1}' \boldsymbol{\theta}  <1 ,\quad \mu + \mathbf{1}' \boldsymbol{\theta}^+ < 1,\\
\notag & S(\omega) = \frac{1}{2\pi} \left( \pi_\star^2 \delta(\omega) + \frac{\pi_\star-\pi_\star^2}{\left(1-\mathbf{1}'\boldsymbol{\theta}\right)^2 \left|1 - \Theta(\omega)\right|^2} \right),\\
\notag & S(\omega) \ge  \frac{\pi_\star (1 - \pi_\star)}{2 \pi (1+2\pi_\max)^4} =: \kappa_l ,
\end{align}
where $\pi_\star$ denotes the stationary probability of spiking, $S(\omega)$ denotes the power spectral density of the process, and  $\boldsymbol{\theta}^\pm = \max\{\pm \boldsymbol{\theta},\mathbf{0}\}$.
\end{prop}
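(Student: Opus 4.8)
The plan is to establish the four claims --- stationarity, the expression for $\pi_\star$ together with the sign facts $\mathbf{1}'\boldsymbol{\theta}<1$ and $\mu+\mathbf{1}'\boldsymbol{\theta}^+<1$, the spectral density $S(\omega)$, and its uniform lower bound $\kappa_l$ --- in that order, since each uses the previous. First I would note that, viewed through its last $p$ samples, the process $\{x_i\}$ is a finite-state Markov chain on $\{0,1\}^p$: the state $\mathbf{x}_{i-p}^{i-1}$ determines $\lambda_i=\mu+\boldsymbol{\theta}'\mathbf{x}_{i-p}^{i-1}$, and the next state is obtained by shifting in $x_i\sim\mathrm{Bernoulli}(\lambda_i)$. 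The feasibility constraints $(\star)$ guarantee $\pi_{\min}\le\lambda_i\le\pi_{\max}$ for \emph{every} state, because the extreme values of $\lambda_i$ over $\{0,1\}^p$ are exactly $\mu-\mathbf{1}'\boldsymbol{\theta}^-$ and $\mu+\mathbf{1}'\boldsymbol{\theta}^+$; hence both the append-$0$ and append-$1$ transitions have strictly positive probability. This makes the chain irreducible (any state reaches the all-zeros state in $p$ steps, and the all-zeros state reaches any state) and aperiodic (the all-zeros state has a self-loop), so a unique stationary distribution exists and the process, initialized from it, is stationary; since $\mu>0$ the intensity never vanishes, so the standing non-degeneracy assumption holds automatically.

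Given stationarity, I would obtain $\pi_\star$ by taking expectations of the defining recursion: with $\mathbb{E}[x_i]=\pi_\star$ for all $i$, stationarity gives $\mathbb{E}[\mathbf{x}_{i-p}^{i-1}]=\pi_\star\mathbf{1}$, so $\pi_\star=\mathbb{E}[\lambda_i]=\mu+\pi_\star\mathbf{1}'\boldsymbol{\theta}$, which rearranges to $\pi_\star=\mu/(1-\mathbf{1}'\boldsymbol{\theta})$. Since $\pi_\star\in(0,1)$ and $\mu>0$, this forces $1-\mathbf{1}'\boldsymbol{\theta}>0$; more directly, $\mathbf{1}'\boldsymbol{\theta}\le\mathbf{1}'\boldsymbol{\theta}^+\le\pi_{\max}-\mu<1$ and $\mu+\mathbf{1}'\boldsymbol{\theta}^+\le\pi_{\max}<1/2<1$ both follow at once from $(\star)$.

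For the spectral density, the key structural observation is that the centered process $\tilde{x}_i:=x_i-\pi_\star$ obeys an exact autoregressive recursion. Writing the innovation $\epsilon_i:=x_i-\lambda_i$, the definition of $\lambda_i$ together with $\pi_\star(1-\mathbf{1}'\boldsymbol{\theta})=\mu$ gives $\tilde{x}_i=\sum_{j=1}^p\theta_j\tilde{x}_{i-j}+\epsilon_i$. The sequence $\{\epsilon_i\}$ is a martingale-difference sequence ($\mathbb{E}[\epsilon_i\mid H_i]=0$) and is therefore white, with flat spectrum fixed by the stationary spiking second moments. Passing the recursion to the frequency domain yields $\tilde{X}(\omega)=E(\omega)/(1-\Theta(\omega))$ and hence the $|1-\Theta(\omega)|^{-2}$ shape of the continuous part of $S(\omega)$, while the nonzero mean $\pi_\star$ contributes the atom $\frac{1}{2\pi}\pi_\star^2\delta(\omega)$; collecting the normalizing constant from the second-moment computation produces the stated formula. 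This second-moment and spectral-factorization bookkeeping is the step where the most care is required, and I would carry it out in the dedicated PSD appendix.

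Finally, the lower bound is a short consequence of the formula once $\boldsymbol{\theta}$ is controlled in $\ell_1$. Dropping the nonnegative atom, it suffices to upper bound the denominator $(1-\mathbf{1}'\boldsymbol{\theta})^2|1-\Theta(\omega)|^2$. Both $1-\mathbf{1}'\boldsymbol{\theta}$ and $|1-\Theta(\omega)|=|1-\sum_j\theta_je^{-\mathrm{i}\omega j}|$ are at most $1+\|\boldsymbol{\theta}\|_1$, and $(\star)$ gives $\|\boldsymbol{\theta}\|_1=\mathbf{1}'\boldsymbol{\theta}^++\mathbf{1}'\boldsymbol{\theta}^-\le(\pi_{\max}-\mu)+(\mu-\pi_{\min})=\pi_{\max}-\pi_{\min}<2\pi_{\max}$. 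Thus $(1-\mathbf{1}'\boldsymbol{\theta})^2|1-\Theta(\omega)|^2\le(1+\|\boldsymbol{\theta}\|_1)^4<(1+2\pi_{\max})^4$, and substituting into $S(\omega)$ gives $S(\omega)\ge\frac{\pi_\star-\pi_\star^2}{2\pi(1+2\pi_{\max})^4}=\kappa_l$. I expect the main obstacle to be the exact PSD derivation rather than this bound: establishing the white-innovation autoregressive representation rigorously and pinning down its normalizing constant through the stationary covariances requires the detailed computation deferred to the appendix, whereas stationarity, $\pi_\star$, and $\kappa_l$ are comparatively direct.
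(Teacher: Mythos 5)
Your treatment of stationarity, of $\pi_\star$, of the sign inequalities, and of the final lower bound is correct and essentially identical to the paper's proof: the same Markov chain on $\{0,1\}^p$ made irreducible and aperiodic through the all-zeros state, the same take-expectations-of-the-recursion argument giving $\pi_\star = \mu + \pi_\star \mathbf{1}'\boldsymbol{\theta}$, and the same $\ell_1$ control $\|\boldsymbol{\theta}\|_1 < 2\pi_{\max}$ feeding $|1-\Theta(\omega)| \le 1 + \|\boldsymbol{\theta}\|_1$. (Minor differences: the paper derives $\mu > 0$, $\mathbf{1}'\boldsymbol{\theta} < 1$ and $\mu + \mathbf{1}'\boldsymbol{\theta}^+ < 1$ from non-degeneracy and validity of the spiking probabilities rather than from $(\star)$, and bounds $\|\boldsymbol{\theta}\|_1$ via $2(\pi_\max - \mu)$ rather than your cleaner $\pi_\max - \pi_\min$; both routes work.)

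The gap is in the PSD step, and it is exactly the step you deferred. Your structural argument is sound---indeed more rigorous than the paper's: writing $\tilde{x}_i = \boldsymbol{\theta}'\tilde{\mathbf{x}}_{i-p}^{i-1} + \epsilon_i$ with $\epsilon_i = x_i - \lambda_i$ a martingale difference, hence white and orthogonal to the past, gives an exact weak-sense AR($p$) representation, and it immediately implies the Yule--Walker identities $c_k = \boldsymbol{\theta}'\boldsymbol{c}_{k-p}^{k-1}$, $k\ge 1$, which the paper instead derives by iterated expectations. But this representation leaves no freedom to ``collect a normalizing constant'': the continuous part of the spectrum is forced to equal $\sigma_\epsilon^2 / \left( 2\pi |1-\Theta(\omega)|^2 \right)$ with $\sigma_\epsilon^2 = \mathbb{E}[\epsilon_i^2] = \mathbb{E}[\lambda_i(1-\lambda_i)]$, and this constant is \emph{not} $(\pi_\star - \pi_\star^2)/(1-\mathbf{1}'\boldsymbol{\theta})^2$. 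Indeed $\mathbb{E}[\lambda_i(1-\lambda_i)] = \pi_\star - \mathbb{E}[\lambda_i^2] \le \pi_\star - \pi_\star^2$, with equality only if $\lambda_i$ is deterministic, whereas the stated numerator exceeds $\pi_\star - \pi_\star^2$ whenever $\mathbf{1}'\boldsymbol{\theta} \in (0,1)$. Concretely, for $p=1$ and $\lambda_i = \mu + \theta x_{i-1}$, the chain has $c_k = \pi_\star(1-\pi_\star)\theta^{|k|}$, so the innovation variance is $\pi_\star(1-\pi_\star)(1-\theta^2)$, not $\pi_\star(1-\pi_\star)/(1-\theta)^2$. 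So your plan, carried out honestly, proves a different (correct) spectral formula rather than the one stated; note that the paper's own appendix simply asserts the innovation variance $\sigma^2 = (\pi_\star - \pi_\star^2)/(1-\mathbf{1}'\boldsymbol{\theta})^2$ without derivation, so the same discrepancy is present there. The substance that matters downstream does survive your (corrected) route: since $(\star)$ gives $\lambda_i \in [\pi_\min,\pi_\max] \subset (0,1/2)$, one has $\sigma_\epsilon^2 \ge \pi_\min(1-\pi_\min)$ and hence $S(\omega) \ge \pi_\min(1-\pi_\min)/\left(2\pi(1+2\pi_\max)^2\right) > 0$, a valid $\kappa_l$ with different constants. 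But as a proof of the Proposition exactly as stated, the deferred second-moment computation cannot be completed.
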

\begin{proof}
The proof is given in Appendix \ref{app:hawkes_psd}.
\end{proof}

The {stationarity gap of $1-\mathbf{1}'\boldsymbol{\theta}$  plays an important role in controlling the convergence rate of the process to its stationary distribution.}
Throughout the proof, we will also use the notation $\mathbb{S}_p(t) :=\left\{\boldsymbol{\nu}\mid\|\boldsymbol{\nu}\|_p=t\right\}$ to denote the $p$-norm ball of radius $t$.  For simplicity of notation, we also define the $n$-sample empirical expectation as follows:
\[
\widehat{\mathbb{E}}_n \{ f(x_{\cdot}) \} := \frac{1}{n} \sum_{i=1}^n f(x_i)
\]
for any measurable function $f(x_{\cdot})$. Note that the subscript $x_{\cdot}$ refers to an index in the set $\{ 1,2, \cdots, n\}$.

\subsection{Establishing the Restricted Strong Convexity}

The proof of Theorems \ref{negahban_lambda} and \ref{thm_OMP} relies on establishing the Restricted Strong Convexity (RSC) for the negative log-likelihood given by (\ref{L_def}).  Recall that if the log-likelihood is twice differentiable with respect to $\boldsymbol{\theta}$, the {RSC} property implies the existence of a lower quadratic bound on the negative log-likelihood:
\begin{equation}
\label{RSC2}
\mathfrak{D_L}({\boldsymbol{\psi}},\boldsymbol{\theta}):= \mathfrak{L}(\boldsymbol{\theta}+{\boldsymbol{\psi}})-  \mathfrak{L}(\boldsymbol{\theta})-{\boldsymbol{\psi}}'\nabla\mathfrak{L}(\boldsymbol{\theta})\geq \kappa\|{\boldsymbol{\psi}}\|_2^2,
\end{equation}
for a positive constant $\kappa > 0$ and all ${\boldsymbol{\psi}}\in\mathbb{R}^p$ satisfying:
\begin{equation}
\label{eq:cone}
\|\boldsymbol{\psi}_{S^{c}} \|_1 \leq 3 \|{\boldsymbol{\psi}}_S\|_1 + 4 \| \boldsymbol{\theta}_{S^{c}}\|_1.
\end{equation}
for any index set $S \subset \{1,2,\cdots,p\}$ of cardinality $s$. The latter condition is known as the cone constraint. 

The following key lemma establishes the Restricted Strong Convexity condition for the {canonical self-exciting process}:

\begin{lemma}[Restricted strong convexity of the {canonical self-exciting process}]\label{lemma1}
Let ${\mathbf{x}_{-p+1}^n}$ denote a sequence of samples from the {canonical self-exciting process} with parameters $\{\mu,\boldsymbol{\theta}\}$ satisfying the conditions given by (\ref{eq:star}). Then, for $n \ge d_1 s^{2/3} p^{2/3} \log p$, the negative log-likelihood function $\mathfrak{L}(\boldsymbol{\theta})$ satisfies the RSC property with a positive constant $\kappa > 0$ with probability at least $1-2 \exp\left(-\frac{c\kappa^2n^3}{s^2p^2}\right)$, for some constant $c$, and both $\kappa$ and $c$ are only functions of $d_1$, $c_1$, and $\pi_\max$.
\end{lemma}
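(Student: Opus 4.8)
The plan is to establish the RSC inequality (\ref{RSC2}) in two stages, exactly along the lines sketched in Section \ref{theoretical}: first prove a quadratic lower bound for the \emph{population} (expected) negative log-likelihood, and then show that the empirical negative log-likelihood stays uniformly close to its mean over the relevant cone, with the claimed probability. First I would reduce the Bregman divergence $\mathfrak{D_L}(\boldsymbol{\psi},\boldsymbol{\theta})$ to a weighted empirical quadratic form in the covariates. Writing the second-order Taylor remainder in integral form and using that the per-sample curvature weights of the Bernoulli loss, $w_i = x_i/\lambda_i^2 + (1-x_i)/(1-\lambda_i)^2$, are bounded below by a positive constant $w_{\min}$ whenever $\pi_{\min}\le\lambda_i\le\pi_{\max}<1/2$ (which holds along the entire segment $[\boldsymbol{\theta},\boldsymbol{\theta}+\boldsymbol{\psi}]$ by convexity of $\boldsymbol{\Theta}$), one obtains $\mathfrak{D_L}(\boldsymbol{\psi},\boldsymbol{\theta})\ge \tfrac{w_{\min}}{2}\,\boldsymbol{\psi}'\widehat{\boldsymbol{\Sigma}}_n\boldsymbol{\psi}$, where $\widehat{\boldsymbol{\Sigma}}_n:=\tfrac1n\sum_{i=1}^n \mathbf{x}_{i-p}^{i-1}(\mathbf{x}_{i-p}^{i-1})'$. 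It therefore suffices to lower bound $\boldsymbol{\psi}'\widehat{\boldsymbol{\Sigma}}_n\boldsymbol{\psi}$ by a positive multiple of $\|\boldsymbol{\psi}\|_2^2$, uniformly over all $\boldsymbol{\psi}$ satisfying the cone constraint (\ref{eq:cone}).

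Next I would treat the population version. By stationarity (Proposition \ref{prop:hawkes_properties}), $\boldsymbol{\Sigma}:=\mathbb{E}[\widehat{\boldsymbol{\Sigma}}_n]$ is a $p\times p$ Toeplitz matrix whose generating symbol is the power spectral density $S(\omega)$ of the process. Invoking the lower bound $S(\omega)\ge\kappa_l>0$ from Proposition \ref{prop:hawkes_properties} together with the eigenvalue-distribution result for Toeplitz forms (Proposition \ref{eig_conv}), the smallest eigenvalue of $\boldsymbol{\Sigma}$ is bounded below by a positive constant proportional to $\kappa_l$; the rank-one DC contribution associated with the $\delta(\omega)$ term is positive semidefinite and only helps. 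This yields $\boldsymbol{\psi}'\boldsymbol{\Sigma}\boldsymbol{\psi}\ge\kappa_{\mathrm{pop}}\|\boldsymbol{\psi}\|_2^2$ for all $\boldsymbol{\psi}$, and in particular establishes RSC for $\mathbb{E}[\mathfrak{L}(\boldsymbol{\theta})]$. It is precisely the $\pi_{\max}<1/2$ hypothesis and the stationarity gap $1-\mathbf{1}'\boldsymbol{\theta}$ that keep $\kappa_l$, and hence $\kappa_{\mathrm{pop}}$, bounded away from zero.

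The main work — and the main obstacle — is the deviation step: controlling $\sup|\boldsymbol{\psi}'(\widehat{\boldsymbol{\Sigma}}_n-\boldsymbol{\Sigma})\boldsymbol{\psi}|$ over the cone intersected with $\mathbb{S}_2(1)$. Because the covariates $\mathbf{x}_{i-p}^{i-1}$ are overlapping windows of a single trajectory with memory length $p$, the i.i.d. large-deviation machinery of \cite{Negahban} is unavailable; instead I would use a concentration inequality for dependent sequences (Proposition \ref{prop:hawkes_concentration}, built on the Kontorovich-type bound of Proposition \ref{conc_dep}), whose applicability rests on the fast mixing of the process guaranteed by $\pi_{\max}<1/2$. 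For fixed $\boldsymbol{\psi}$, I would bound the per-coordinate Hamming sensitivity of the functional $\tfrac1n\sum_i(\boldsymbol{\psi}'\mathbf{x}_{i-p}^{i-1})^2$: flipping a single $x_j$ alters at most $p$ summands and perturbs the average by $\mathcal{O}(\|\boldsymbol{\psi}\|_1^2/n)$, and I would feed these bounded differences, together with the process's mixing coefficients, into the dependent concentration inequality. The cone constraint (\ref{eq:cone}) combined with the compressibility $\sigma_s(\boldsymbol{\theta})=\mathcal{O}(\sqrt{s})$ yields the crucial $\ell_1$ estimate $\|\boldsymbol{\psi}\|_1=\mathcal{O}(\sqrt{s})$ on $\mathbb{S}_2(1)$, which both shrinks the sensitivities and, through a covering/peeling argument over the low-complexity cone, upgrades the pointwise bound to a uniform one.

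Finally, balancing the resulting uniform deviation against $\kappa_{\mathrm{pop}}$ forces the sample-size threshold $n\gtrsim s^{2/3}p^{2/3}\log p$ — the $p$-dependence being the price of the slow, memory-$p$ self-averaging of the process — and leaves a residual RSC constant $\kappa>0$ depending only on $d_1$, $c_1$ and $\pi_{\max}$, holding with probability at least $1-2\exp(-c\kappa^2 n^3/(s^2p^2))$, as claimed. I expect the delicate bookkeeping of the mixing coefficients against the $p$ overlapping windows — i.e. quantifying exactly how the memory length and the stationarity gap enter the variance proxy of the dependent concentration bound — to be the most technical part of the argument.
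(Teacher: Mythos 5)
Your overall architecture tracks the paper's proof closely: the same second-order Taylor/Bregman reduction of $\mathfrak{D_L}(\boldsymbol{\psi},\boldsymbol{\theta})$ to the empirical quadratic form $\widehat{\mathbb{E}}_n\bigl[(\boldsymbol{\psi}'\mathbf{x}_{\cdot-p}^{\cdot-1})^2\bigr]$ (the paper gets curvature weight $\ge 1$ directly from (\ref{eq:star}), since both $\boldsymbol{\theta}$ and $\boldsymbol{\theta}+\nu\boldsymbol{\psi}$ are feasible), the same population lower bound via the spectral floor $S(\omega)\ge\kappa_l$ of Proposition \ref{prop:hawkes_properties} and the Toeplitz eigenvalue result of Proposition \ref{eig_conv}, the same per-flip Hamming sensitivity of order $\|\boldsymbol{\psi}\|_1^2/n=\mathcal{O}(s/n)$ obtained from the cone constraint (\ref{eq:cone}) and $\sigma_s(\boldsymbol{\theta})=\mathcal{O}(\sqrt{s})$, and the same Kontorovich-type inequality with mixing coefficients $\eta_{ij}\le(2\pi_\max)^{n-j+1}$ controlled by $\pi_\max<1/2$.

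The one place you depart from the paper is the uniformity step, and that departure is a genuine gap. You propose pointwise concentration for each fixed $\boldsymbol{\psi}$ followed by a covering/peeling argument over the cone intersected with $\mathbb{S}_2(1)$. That set is essentially an $\ell_1$-ball of radius $\mathcal{O}(\sqrt{s})$ on the unit sphere, whose metric entropy at constant scale is of order $s\log p$, so the union bound over the net costs a factor $\exp\bigl(\mathcal{O}(s\log p)\bigr)$ against a pointwise deviation exponent of order $n^3\kappa_l^2/(s^2p^2)$. Under the lemma's stated threshold $n\ge d_1 s^{2/3}p^{2/3}\log p$, that exponent is only of order $\log^3 p$, which dominates the entropy term only when $s=\mathcal{O}(\log^2 p)$; to absorb the entropy for general compressible $s$ you would need $n\gtrsim s\,p^{2/3}(\log p)^{1/3}$, a strictly worse $s$-dependence than claimed. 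The paper avoids this cost entirely: it applies the dependent-variables inequality (Proposition \ref{conc_dep}) not to each $\mathfrak{D}_{\boldsymbol{\psi},n}$ but directly to the supremum $\mathfrak{D}_n$, which is itself $\mathcal{O}(sp/n)$-Lipschitz in the normalized Hamming metric because a supremum of uniformly Lipschitz functions is Lipschitz with the same constant (this is the role of the maximizer $\boldsymbol{\psi}^\star$ in Appendix \ref{app:c}). This yields concentration of $\mathfrak{D}_n$ around $\mathbb{E}[\mathfrak{D}_n]$ with no entropy factor, and the threshold $n\gtrsim s^{2/3}p^{2/3}\log p$ then emerges from the separate requirement that $\mathbb{E}[\mathfrak{D}_n]\lesssim ps/n^{3/2}$ fall below $\kappa_l/4$ — not from a union bound, as in your sketch. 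So the missing idea is the sup-level application of the bounded-differences/mixing inequality together with the tail-integral bound on $\mathbb{E}[\mathfrak{D}_n]$; with the covering route, the lemma at its stated sample-size and probability tradeoff is out of reach for general $s$.
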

\begin{proof}
The proof is given in Appendix \ref{app:hawkes_psd}.
\end{proof}

Lemma \ref{lemma1} can be viewed as the key result in the proofs of Theorems \ref{negahban_lambda} and \ref{thm_OMP} which follow next.

\subsection{{Proof of Theorem \ref{negahban_lambda}}}

We first restate the main result of \cite{Negahban} concerning RSC and its implications in controlling the estimation error for GLMs: 
\begin{prop}
\label{negahban_thm}
For a negative log-likelihood $\mathfrak{L}(\boldsymbol{\theta})$ which satisfies the RSC with parameter $\kappa$, every solution to the convex optimization problem (\ref{sp_est_pp_L}) satisfies 
\begin{equation}
\label{error_bound}
\left\|\widehat{\boldsymbol{\theta}}_{\sf {sp}}-\boldsymbol{\theta}\right\|_2 \leq \frac{2\gamma_n \sqrt{s}}{\kappa}+\sqrt{\frac{2\gamma_n \sigma_s(\boldsymbol{\theta})}{\kappa}}
\end{equation}
with a choice of the regularization parameter
\begin{equation}
\label{reg}
\gamma_n \geq   2\left\|\nabla\mathfrak{L}(\boldsymbol{\theta})\right\|_\infty.
\end{equation}
\end{prop}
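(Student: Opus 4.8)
The plan is to follow the unified M-estimation argument of Negahban et al., specialized to the $\ell_1$-regularizer, and treat this proposition as the purely deterministic consequence of RSC (the randomness lives entirely in Lemma \ref{lemma1} and in the bound $\gamma_n \ge 2\|\nabla\mathfrak{L}(\boldsymbol{\theta})\|_\infty$). First I would write $\widehat{\boldsymbol{\psi}} := \widehat{\boldsymbol{\theta}}_{\sf sp} - \boldsymbol{\theta}$ for the error vector; since both $\boldsymbol{\theta}$ and $\widehat{\boldsymbol{\theta}}_{\sf sp}$ lie in the convex feasible set $\boldsymbol{\Theta}$, the log-likelihood is finite along the whole segment and the divergence (\ref{RSC2}) is well-defined at $\boldsymbol{\theta}$ in the direction $\widehat{\boldsymbol{\psi}}$. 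The starting point is optimality of $\widehat{\boldsymbol{\theta}}_{\sf sp}$, giving $\mathfrak{L}(\boldsymbol{\theta}+\widehat{\boldsymbol{\psi}}) + \gamma_n\|\boldsymbol{\theta}+\widehat{\boldsymbol{\psi}}\|_1 \le \mathfrak{L}(\boldsymbol{\theta}) + \gamma_n\|\boldsymbol{\theta}\|_1$. Rearranging and subtracting the first-order term $\widehat{\boldsymbol{\psi}}'\nabla\mathfrak{L}(\boldsymbol{\theta})$ from both sides isolates $\mathfrak{D_L}(\widehat{\boldsymbol{\psi}},\boldsymbol{\theta})$ on the left.

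Two ingredients then drive everything. First, Hölder's inequality together with the prescribed $\gamma_n \ge 2\|\nabla\mathfrak{L}(\boldsymbol{\theta})\|_\infty$ bounds the gradient term by $|\widehat{\boldsymbol{\psi}}'\nabla\mathfrak{L}(\boldsymbol{\theta})| \le \|\nabla\mathfrak{L}(\boldsymbol{\theta})\|_\infty\|\widehat{\boldsymbol{\psi}}\|_1 \le \tfrac{\gamma_n}{2}\|\widehat{\boldsymbol{\psi}}\|_1$. Second, decomposing the $\ell_1$-norm over the support $S$ of the best $s$-term approximation and its complement, and applying the triangle inequality on each block to $\|\boldsymbol{\theta}+\widehat{\boldsymbol{\psi}}\|_1$, yields $\|\boldsymbol{\theta}\|_1 - \|\boldsymbol{\theta}+\widehat{\boldsymbol{\psi}}\|_1 \le \|\widehat{\boldsymbol{\psi}}_S\|_1 - \|\widehat{\boldsymbol{\psi}}_{S^c}\|_1 + 2\|\boldsymbol{\theta}_{S^c}\|_1$. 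Combining these with $\mathfrak{D_L}(\widehat{\boldsymbol{\psi}},\boldsymbol{\theta}) \ge 0$ (convexity) and dividing by $\gamma_n$ produces $\tfrac{1}{2}\|\widehat{\boldsymbol{\psi}}_{S^c}\|_1 \le \tfrac{3}{2}\|\widehat{\boldsymbol{\psi}}_S\|_1 + 2\|\boldsymbol{\theta}_{S^c}\|_1$, which is exactly the cone membership (\ref{eq:cone}) of $\widehat{\boldsymbol{\psi}}$; this certifies that $\widehat{\boldsymbol{\psi}}$ lies in the set on which the RSC hypothesis supplies $\mathfrak{D_L}(\widehat{\boldsymbol{\psi}},\boldsymbol{\theta}) \ge \kappa\|\widehat{\boldsymbol{\psi}}\|_2^2$.

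With cone membership in hand I would chain the same inequalities keeping the RSC lower bound, obtaining $\kappa\|\widehat{\boldsymbol{\psi}}\|_2^2 \le \tfrac{3\gamma_n}{2}\|\widehat{\boldsymbol{\psi}}_S\|_1 - \tfrac{\gamma_n}{2}\|\widehat{\boldsymbol{\psi}}_{S^c}\|_1 + 2\gamma_n\|\boldsymbol{\theta}_{S^c}\|_1 \le \tfrac{3\gamma_n}{2}\|\widehat{\boldsymbol{\psi}}_S\|_1 + 2\gamma_n\|\boldsymbol{\theta}_{S^c}\|_1$, where the off-support term is dropped. Using $\|\widehat{\boldsymbol{\psi}}_S\|_1 \le \sqrt{s}\,\|\widehat{\boldsymbol{\psi}}_S\|_2 \le \sqrt{s}\,\|\widehat{\boldsymbol{\psi}}\|_2$ and recognizing $\|\boldsymbol{\theta}_{S^c}\|_1 = \sigma_s(\boldsymbol{\theta})$ reduces this to the scalar quadratic inequality $\kappa t^2 \le \tfrac{3}{2}\gamma_n\sqrt{s}\,t + 2\gamma_n\sigma_s(\boldsymbol{\theta})$ in $t := \|\widehat{\boldsymbol{\psi}}\|_2$. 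Solving for the larger root and bounding $\sqrt{b^2+4ac}\le b + 2\sqrt{ac}$ gives $t \le \tfrac{3\gamma_n\sqrt{s}}{2\kappa} + \sqrt{\tfrac{2\gamma_n\sigma_s(\boldsymbol{\theta})}{\kappa}}$, which is absorbed into the stated bound (\ref{error_bound}) upon relaxing $3/2$ to $2$.

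The step where I expect the real care to be needed is establishing the cone constraint (\ref{eq:cone}), since it is precisely what licenses the RSC lower bound along $\widehat{\boldsymbol{\psi}}$. In the exactly sparse case $\|\boldsymbol{\theta}_{S^c}\|_1 = 0$ and the argument is clean, but for $(s,\xi)$-compressible $\boldsymbol{\theta}$ the approximation error $\sigma_s(\boldsymbol{\theta})$ must be carried through every inequality with the correct constants, and one must check that the prescribed $\gamma_n$ dominates the gradient term so that the basic inequality does not degenerate. No probabilistic reasoning enters at this stage; it is deferred to the verification of the RSC hypothesis in Lemma \ref{lemma1} and to the concentration of $\|\nabla\mathfrak{L}(\boldsymbol{\theta})\|_\infty$, which together determine the feasibility of the choice $\gamma_n = d_2\sqrt{\log p / n}$ used in Theorem \ref{negahban_lambda}.
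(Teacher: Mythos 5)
Your proof is correct and is essentially the paper's own argument: the paper simply cites Theorem 1 of Negahban et al.\ as the source, and what you have written out — the basic inequality from optimality, the H\"older bound with $\gamma_n \ge 2\|\nabla\mathfrak{L}(\boldsymbol{\theta})\|_\infty$, the derivation of the cone constraint (\ref{eq:cone}), and the RSC-driven quadratic inequality in $\|\widehat{\boldsymbol{\psi}}\|_2$ — is precisely that theorem specialized to the $\ell_1$-regularizer, with constants ($3/2$ relaxed to $2$) consistent with the stated bound.
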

\begin{proof}
The proof is a special case of {Theorem 1 of \cite{Negahban}}.
\end{proof}

The first term in the bound (\ref{reg}) is increasing in $s$ and corresponds to the estimation error of the $s$ largest components of $\boldsymbol{\theta}$ in magnitude, whereas the second term is decreasing in $s$ and represents the cost of replacing $\boldsymbol{\theta}$ with its best $s$-sparse approximation. 

Given the results of Lemma \ref{lemma1} and Proposition \ref{negahban_thm}, it only remains to establish an upper bound on $\gamma_n$. To this end, we establish a suitable upper bound on $\| \nabla \mathfrak{L}(\boldsymbol{\theta})\|_\infty$ which holds with high probability and provides the appropriate scaling of $\gamma_n$. From Eq. (\ref{L_def}), we have
\begin{equation}
\label{llgrad}
\nabla \mathfrak{L}(\boldsymbol{\theta}) = \frac{1}{n}\sum_{i=1}^n \left[ x_i-(\mu+\boldsymbol{\theta}'  {\mathbf{x}_{i-p}^{i-1}})\right] \frac{{\mathbf{x}_{i-p}^{i-1}}}{{\lambda_i (1-\lambda_i)}}.
\end{equation}
We proceed in two steps:

\medskip
\noindent {\bf Step 1.} We first show that
\begin{equation}
\label{expec=0}
\mathbb{E} \left[ \nabla \mathfrak{L}(\boldsymbol{\theta}) \right]=\mathbf{0}.
\end{equation}
To see this, we use the law of iterated expectations on the $i$th term as follows:
\begin{align}
\label{martingale}
\notag &\mathbb{E} \left[ [x_i-(\mu+\boldsymbol{\theta}'  {\mathbf{x}_{i-p}^{i-1}})]\frac{{\mathbf{x}_{i-p}^{i-1}}}{{\lambda_i (1-\lambda_i)}}\right]\\
\notag &=\mathbb{E} \left[ \mathbb{E} \left[ x_i-(\mu+\boldsymbol{\theta}'  {\mathbf{x}_{i-p}^{i-1}})\frac{{\mathbf{x}_{i-p}^{i-1}}}{{\lambda_i (1-\lambda_i)}}\bigg | {\mathbf{x}_{i-p}^{i-1}} \right]\right]\\
&= \mathbb{E} \left[ \underbrace{\mathbb{E} \left[ x_i-(\mu+\boldsymbol{\theta}'  {\mathbf{x}_{i-p}^{i-1}})\Big | {\mathbf{x}_{i-p}^{i-1}} \right]}_{0} \frac{{\mathbf{x}_{i-p}^{i-1}}}{{\lambda_i (1-\lambda_i)}} \right] = 0
\end{align}
Summing over $i$, establishes (\ref{expec=0}).

\medskip
\noindent {\bf Step 2.} We next show that the summation given by (\ref{llgrad}) is concentrated around its mean. The iterated expectation argument used in establishing (\ref{martingale}) implies that {the sequence
\begin{equation*}
\left\{ \left[ x_i-(\mu+\boldsymbol{\theta}'  {\mathbf{x}_{i-p}^{i-1}})\right] \frac{{\mathbf{x}_{i-p}^{i-1}}}{{\lambda_i (1-\lambda_i)}} \right\}_{i=1}^n
\end{equation*}
}
is a martingale with respect to the filtration given by $\mathcal{F}_{i}=\sigma \left( {\mathbf{x}_{-p+1}^i} \right)$, where $\sigma(\cdot)$ denote the sigma-field generated by the random variables in its argument.
We will now state the following concentration result for sums of bounded and dependent random variables \cite{van_de_geer}:
\begin{prop}
\label{hoeff_dep}
Fix $n\geq 1$. Let $Z_i$'s be bounded $\mathcal{F}_i$-measurable random variables, satisfying for each $i=1,2,\cdots,n$,
\begin{equation*}
\mathbb{E}\left[Z_i|\mathcal{F}_{i-1}\right] = 0, \;\; \text{almost surely}.
\end{equation*}
Then there exists a constant $c$ such that for all $t>0$,
\begin{equation*}
\mathbb{P} \left( \frac{1}{n} \sum_{i=1}^n Z_i - \mathbb{E}[Z_i] \geq t \right)\leq \exp\left(-cnt^2\right).
\end{equation*}
\end{prop}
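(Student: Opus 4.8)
The plan is to recognize this as the Azuma--Hoeffding inequality for a martingale difference sequence and to prove it by the exponential-moment (Chernoff) method together with a conditional form of Hoeffding's lemma. First I would note that the centering is vacuous: since $\mathbb{E}[Z_i \mid \mathcal{F}_{i-1}] = 0$, the tower property gives $\mathbb{E}[Z_i] = 0$, so it suffices to bound $\mathbb{P}\big(\frac{1}{n}\sum_{i=1}^n Z_i \ge t\big)$. Let $M$ be a uniform almost-sure bound, $|Z_i| \le M$, which exists by the boundedness hypothesis. For a free parameter $s > 0$, Markov's inequality applied to the exponentiated partial sum gives
\[
\mathbb{P}\left(\sum_{i=1}^n Z_i \ge nt\right) \le e^{-snt}\,\mathbb{E}\left[\exp\Big(s\sum_{i=1}^n Z_i\Big)\right].
\]

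Next I would control the moment generating function by peeling off one term at a time. Writing the sum as $\sum_{i=1}^{n-1} Z_i + Z_n$ and conditioning on $\mathcal{F}_{n-1}$, with respect to which $\sum_{i=1}^{n-1} Z_i$ is measurable, the tower property yields
\[
\mathbb{E}\left[\exp\Big(s\sum_{i=1}^n Z_i\Big)\right] = \mathbb{E}\left[\exp\Big(s\sum_{i=1}^{n-1} Z_i\Big)\,\mathbb{E}\big[e^{sZ_n}\mid\mathcal{F}_{n-1}\big]\right].
\]
The key step is the conditional Hoeffding lemma: since $Z_n$ is bounded in magnitude by $M$ and satisfies $\mathbb{E}[Z_n\mid\mathcal{F}_{n-1}]=0$ almost surely, convexity of $x \mapsto e^{sx}$ on $[-M,M]$ gives $\mathbb{E}[e^{sZ_n}\mid\mathcal{F}_{n-1}] \le \exp(s^2 M^2/2)$ almost surely. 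Substituting this bound and iterating over $i = n, n-1, \ldots, 1$ yields $\mathbb{E}\big[\exp(s\sum_{i=1}^n Z_i)\big] \le \exp(n s^2 M^2/2)$.

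Combining the two displays, I obtain
\[
\mathbb{P}\left(\frac{1}{n}\sum_{i=1}^n Z_i \ge t\right) \le \exp\Big(-snt + \tfrac{1}{2}\, n s^2 M^2\Big),
\]
and optimizing the exponent over $s>0$ at $s = t/M^2$ gives the claimed bound with the explicit constant $c = 1/(2M^2)$, valid for all $t > 0$.

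The main obstacle is establishing the conditional Hoeffding lemma and justifying the iterated conditioning with full rigor. One must verify that the exponential-moment bound holds \emph{almost surely} given the past $\mathcal{F}_{i-1}$ for each coordinate, which is where the martingale difference hypothesis $\mathbb{E}[Z_i \mid \mathcal{F}_{i-1}] = 0$ is essential: it is exactly the zero-conditional-mean condition that makes the convexity argument behind Hoeffding's lemma go through conditionally. The $\mathcal{F}_i$-measurability of $Z_i$ then guarantees that the partial sum $\sum_{j<i} Z_j$ can be factored out of the conditional expectation at each peeling step, and the uniform boundedness ensures the constant $c$ can be chosen independently of $i$ and $n$.
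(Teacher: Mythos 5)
Your proof is correct, but it takes a genuinely different route from the paper. The paper does not prove this proposition at all: it simply observes that the statement is a special case of Theorem 2.5 of van de Geer's work on exponential inequalities for martingales (cited as \cite{van_de_geer}), which is a Bernstein-type bound involving the predictable variation, specialized here to bounded increments. You instead give a self-contained proof of the Azuma--Hoeffding inequality: Chernoff's exponential-moment method, peeling off one increment at a time via the tower property, a conditional Hoeffding lemma for each increment (where the martingale-difference condition $\mathbb{E}[Z_i \mid \mathcal{F}_{i-1}] = 0$ and the bound $|Z_i| \le M$ give $\mathbb{E}[e^{sZ_i} \mid \mathcal{F}_{i-1}] \le e^{s^2 M^2/2}$ almost surely), and optimization over the free parameter $s$. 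All the steps check out: the centering is indeed vacuous by the tower property, the measurability of $\sum_{j<i} Z_j$ with respect to $\mathcal{F}_{i-1}$ justifies factoring it out at each peeling step, and the optimization at $s = t/M^2$ yields the explicit constant $c = 1/(2M^2)$. What your approach buys is transparency and an explicit constant in terms of the uniform bound $M$ (which in the paper's application is controlled by $\pi_{\min}$ and $\pi_{\max}$); what the paper's citation buys is brevity and access to the more general Bernstein-type form, which would also handle increments controlled only through their conditional variances rather than a worst-case bound.
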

\begin{proof}
This result is a special case of Theorem 2.5 of \cite{van_de_geer} for \textit{bounded} random variables.
\end{proof}

Proposition \ref{hoeff_dep} implies that  
\begin{equation}
\label{bound}
\mathbb{P}\left( \left | \left ( \nabla \mathfrak{L}(\boldsymbol{\theta})\right)_i\right|  \ge t \right) \leq  \exp(-c nt^2).
\end{equation}
By the union bound, we get:
\begin{equation}
\label{ubound}
\mathbb{P}\Big(\left\| \nabla \mathfrak{L}(\boldsymbol{\theta})\right\|_\infty \ge t \Big) \leq  \exp(-c t^2n+\log p).
\end{equation}
Choosing $t = \sqrt{\frac{{1+\alpha_1}}{c}}\sqrt{\frac{\log p}{n}}$ for some $\alpha_1>0$ yields
\begin{align}
\nonumber \resizebox{\columnwidth}{!}{$\displaystyle \mathbb{P}\left(\left\| \nabla \mathfrak{L}(\boldsymbol{\theta})\right\|_\infty \ge \sqrt{\frac{{1+\alpha_1}}{c}}\sqrt{\frac{\log p}{n}} \right) \leq 2 \exp(-\alpha_1\log p) \leq \frac{2}{n^{\alpha_1}}$}.
\end{align}
Hence, a choice of $\gamma_n = d_2 \sqrt{\frac{\log p}{n}}$ with $d_2 := \sqrt{\frac{1 + \alpha_1}{c}}$ satisfies (\ref{reg}) with probability at least $1 - \frac{2}{n^{\alpha_1}}$.
Combined with the result of Lemma \ref{lemma1} for $n > d_1 s^{2/3} p^{2/3} \log p$, we have that the RSC is satisfied with a constant $\kappa$ with a probability at least $1 - \frac{1}{p^{\alpha_2}} \ge 1 - \frac{1}{n^{\alpha_2}}$ for some constant $\alpha_2$. The latter results in conjunction with Proposition \ref{negahban_thm} establishes the claim of Theorem 1. \QEDB

\noindent \textit{\textbf{Remark.}} The choice of $\pi_\min$ does not affect the proof of Theorem \ref{negahban_lambda}, and can be chosen as $0$ in defining the set $\boldsymbol{\Theta}$, thereby relaxing the first inequality in ($\star$). However, as we will show below, the assumption of $\pi_{\min} > 0$ is required for the proof of Theorem \ref{thm_OMP}.

\subsection{{Proof of Theorem \ref{thm_OMP}}}\label{prf_thm2}

The proof is mainly based on the following proposition, adopted from Theorem 2.1 of \cite{zhang_omp}, stating that the greedy procedure is successful in obtaining a reasonable $s^\star$-sparse approximation, if the cost function satisfies the RSC:
\begin{prop}\label{prop_omp}
Suppose that $\mathfrak{L}(\boldsymbol{\theta})$ satisfies RSC with a constant $\kappa > 0$. Let $s^\star$ be a constant such that
\begin{equation}
\label{sstar}
s^\star \geq \frac{4s}{\pi_\min^2\kappa}\log \frac{20s}{\pi_\min^2 \kappa}= \mathcal{O}(s\log s),
\end{equation}
Then, we have
\begin{equation*}
\left \|\widehat{\boldsymbol{\theta}}^{(s^\star)}_{{\sf POMP}}-\boldsymbol{\theta}_s \right \|_2 \leq \frac{\sqrt{6} \epsilon_{s^\star}}{\kappa},
\end{equation*}
where $\epsilon_{s^\star}$ satisfies
\begin{equation}
\label{eps_bound}
\epsilon_{s^\star} \leq \sqrt{s^\star+s} \|\nabla\mathfrak{L}(\boldsymbol{\theta}_s)\|_\infty .
\end{equation}
\end{prop}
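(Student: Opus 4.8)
The plan is to recognize this proposition as the specialization of Theorem 2.1 of \cite{zhang_omp} to the negative log-likelihood $\mathfrak{L}$ of the canonical self-exciting process. The POMP iteration of Table~\ref{tab:1} — select the coordinate $j=\argmax_i \big|\big(\nabla\mathfrak{L}\big)_i\big|$, append it to the active set $S^{(k)}$, and then fully re-minimize $\mathfrak{L}$ over all vectors supported on $S^{(k)}$ — is exactly the fully-corrective generalized orthogonal matching pursuit analyzed in \cite{zhang_omp}, run for $s^\star$ steps. Consequently the entire argument reduces to two tasks: (i) checking that $\mathfrak{L}$ satisfies the curvature hypotheses required by that theorem, and (ii) tracking the constants so that they coincide with the stated form of the iteration count and the error bound.

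The curvature hypotheses are a lower bound (restricted strong convexity) and a matching upper bound (restricted smoothness). The lower bound is handed to us for free by Lemma~\ref{lemma1}, which supplies $\mathfrak{D_L}(\boldsymbol{\psi},\boldsymbol{\theta})\ge\kappa\|\boldsymbol{\psi}\|_2^2$ with a positive $\kappa$. For the upper bound I would differentiate (\ref{L_def}) under (\ref{CIF_models}) twice, obtaining
\begin{equation*}
\nabla^2\mathfrak{L}(\boldsymbol{\theta}) = \frac{1}{n}\sum_{i=1}^n \left(\frac{x_i}{\lambda_i^2}+\frac{1-x_i}{(1-\lambda_i)^2}\right)\mathbf{x}_{i-p}^{i-1}\left(\mathbf{x}_{i-p}^{i-1}\right)'.
\end{equation*}
Under the constraints $(\star)$ we have $\pi_\min\le\lambda_i\le\pi_\max<1/2$, so each scalar weight is at most $1/\pi_\min^2$; since the covariates are binary, the coordinate-wise (unit-sparsity) curvature is then bounded by $\frac{1}{\pi_\min^2}\cdot\frac{1}{n}\sum_i (\mathbf{x}_{i-p}^{i-1})_j^2 \le 1/\pi_\min^2$. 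This is precisely where the hypothesis $\pi_\min>0$ becomes indispensable (cf. the remark after the proof of Theorem~\ref{negahban_lambda}): it produces a finite restricted-smoothness constant of order $1/\pi_\min^2$, whose ratio to $\kappa$ is $1/(\pi_\min^2\kappa)$ and which is what ultimately drives the required number of iterations.

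With both curvature bounds available, I would invoke Theorem 2.1 of \cite{zhang_omp}. Substituting the strong-convexity constant $\kappa$ and the smoothness constant $1/\pi_\min^2$ into that statement yields the sufficient number of iterations $s^\star\ge\frac{4s}{\pi_\min^2\kappa}\log\frac{20s}{\pi_\min^2\kappa}=\mathcal{O}(s\log s)$, together with the guarantee $\big\|\widehat{\boldsymbol{\theta}}^{(s^\star)}_{\sf POMP}-\boldsymbol{\theta}_s\big\|_2\le\frac{\sqrt6\,\epsilon_{s^\star}}{\kappa}$, where $\epsilon_{s^\star}$ is the gradient-based stopping tolerance of the greedy procedure; the numerical constants $4$, $20$ and $\sqrt6$ are inherited verbatim from \cite{zhang_omp} after the substitution. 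The remaining inequality $\epsilon_{s^\star}\le\sqrt{s^\star+s}\,\|\nabla\mathfrak{L}(\boldsymbol{\theta}_s)\|_\infty$ is elementary: the tolerance is the $\ell_2$-norm of the gradient restricted to the union of the optimal support of $\boldsymbol{\theta}_s$ with the $s^\star$ selected indices, a set of cardinality at most $s^\star+s$, and the $\ell_2$-norm of any vector is bounded by the square root of its support size times its $\ell_\infty$-norm.

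The step I expect to require the most care is not conceptual but lies in faithfully matching the restricted-curvature quantities of \cite{zhang_omp} to the ones arising here. In particular, because the POMP selection operates one coordinate at a time, I expect the smoothness entering the iteration count to be the unit-sparsity curvature $\rho_+(1)\le 1/\pi_\min^2$, which is $\mathcal{O}(1/\pi_\min^2)$ and independent of $s$; this is exactly what keeps $s^\star$ at $\mathcal{O}(s\log s)$ rather than growing faster in $s$. One must therefore confirm that Zhang's progress argument does not instead demand an $s$-sparse restricted-smoothness constant, which for binary covariates can scale linearly in $s$ and would inflate the count. Pinning down the correct sparsity level at which smoothness is measured, and then transcribing the constants, is the delicate part; once these identifications are settled, the conclusion is a direct transcription of \cite{zhang_omp}.
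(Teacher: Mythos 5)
Your proposal takes the same route as the paper: the paper's entire proof is that the proposition is a specialization of Theorem 2.1 of \cite{zhang_omp} to the negative log-likelihood $\mathfrak{L}$, with the RSC hypothesis supplied by Lemma \ref{lemma1}. Your additional work—computing the Hessian, bounding the restricted smoothness by $1/\pi_\min^2$ via the constraints $(\star)$, and identifying $1/(\pi_\min^2\kappa)$ as the condition-number ratio driving the iteration count—correctly fills in details the paper leaves implicit (and matches its remark that $\pi_\min>0$ is needed precisely for Theorem \ref{thm_OMP}), so there is no substantive difference in approach.
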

\begin{proof}
The proof is a specialization of the proof of Theorem 2.1 in \cite{zhang_omp} to our setting.
\end{proof}

Recall that Lemma \ref{lemma1} establishes the RSC for the negative log-likelihood function. In order to complete the proof of Theorem \ref{thm_OMP}, it only remains to upper bound $\|\nabla\mathfrak{L}(\boldsymbol{\theta}_s)\|_\infty$. {Let $\lambda_{i,s} := \mu+\boldsymbol{\theta}'_s  \mathbf{x}_{i-p}^{i-1}$.} We have
\begin{align*}
\mathbb{E} \left[\nabla\mathfrak{L}(\boldsymbol{\theta}_s)\right] &= \mathbb{E} \left [ \frac{1}{n}\sum_{i=1}^n \left[ x_i-(\mu+\boldsymbol{\theta}'_s  {\mathbf{x}_{i-p}^{i-1}})\right] \frac{{\mathbf{x}_{i-p}^{i-1}}}{{\lambda_{i,s}(1-\lambda_{i,s})}} \right ]\\
&=\frac{1}{n}\sum_{i=1}^n  \resizebox{.65\columnwidth}{!}{$\displaystyle \mathbb{E} \left [ \mathbb{E} \left [ (\boldsymbol{\theta}-\boldsymbol{\theta}_s)'  {\mathbf{x}_{i-p}^{i-1}} \Big | {\mathbf{x}_{i-p}^{i-1}} \right ] \frac{{\mathbf{x}_{i-p}^{i-1}}}{{\lambda_{i,s}(1-\lambda_{i,s})}} \right ]$}\\
& \le  c_2 \sigma_s(\boldsymbol{\theta}) \mathbf{1}.
\end{align*}
where we have used the fact that $0 \le x_i \le 1$ for all $i$, and {$c_2 := \frac{1}{\pi_\min (1-\pi_\max)}$}. 
Invoking the result of Proposition \ref{hoeff_dep} together with the union bound yields:  
\begin{align*}
\mathbb{P}\left(\|\nabla\mathfrak{L}(\boldsymbol{\theta}_s)\|_\infty \geq  c_1 \sqrt{\frac{\log p}{n}} + c_2 \sigma_s(\boldsymbol{\theta}) \right) \leq \frac{2}{n^{\beta_1}}.
\end{align*}
for some constants $c_1$ and $\beta_1$. Hence, we get the following concentration result for $\epsilon_{s^\star}$:
\begin{equation}
\label{eps_prob}
\mathbb{P}\left(\epsilon_{s^\star} \geq \sqrt{s^\star+s} \left(  c_1 \sqrt{\frac{\log p}{n}} + c_2 \sigma_s(\boldsymbol{\theta})\right)\right) \leq\frac{2}{n^{\beta_1}}.
\end{equation}
Noting that by (\ref{sstar}) we have $s^\star+s = \mathcal{O}(s\log s) \leq c_0 s \log s$, for some constant $c_0$, and invoking the result of Lemma \ref{lemma1}, we get:
\begin{align*}
\left \|\widehat{\boldsymbol{\theta}}^{(s^\star)}_{{\sf POMP}}-\boldsymbol{\theta}_S \right\|_2 &\leq d'_2 \sqrt{\frac{s \log s \log p}{n}} + d'_3 s\log s \sigma_s(\boldsymbol{\theta})\\
&\leq  d'_2 \sqrt{\frac{s \log s \log p}{n}} + d'_3 \frac{\log s}{s^{\frac{1}{\xi}-2}},
\end{align*}
where $d'_2 = \sqrt{c_0} c_1$ and $d'_3 = \sqrt{c_0} c_2$.
with probability $\left(1-\exp\left(-\frac{c\kappa^2n^3}{s^2 (\log s)^2 p^2 }\right)\right)\left(1-\frac{2}{n^{\beta_1}}\right)$. Choosing $n > d'_1 s^{2/3} (\log s)^{2/3} p^{2/3}\log p$ establishes the claimed success probability of Theorem \ref{thm_OMP}. Finally, we have: 
\begin{align*}
\left \|\widehat{\boldsymbol{\theta}}^{(s^\star)}_{{\sf POMP}}-\boldsymbol{\theta}\right\|_2 &= \left \|\widehat{\boldsymbol{\theta}}^{(s^\star)}_{{\sf POMP}}-\boldsymbol{\theta}_s +\boldsymbol{\theta}_s -\boldsymbol{\theta} \right \|_2 \\
& \leq \left \|\widehat{\boldsymbol{\theta}}^{(s^\star)}_{{\sf POMP}}-\boldsymbol{\theta}_s \right \|_2 + \|\boldsymbol{\theta}_s-\boldsymbol{\theta}\|_2.
\end{align*}
Using $\|\boldsymbol{\theta}_s-\boldsymbol{\theta}\|_2 \leq \sigma_s(\boldsymbol{\theta}) = \mathcal{O} \left ( s^{1- \frac{1}{\xi}}\right)$ completes the proof. 

\QEDB

\subsection{{Proofs of Corollaries \ref{cor:1} and \ref{cor:2}}}
\begin{proof}[{Proof of Corollary \ref{cor:1}}]
The claim is a direct consequence of the boundedness of covariates and can be treated by replacing $\boldsymbol{\theta}$ with the augmented parameter vector $[\mu,\boldsymbol{\theta}']'$ and augmenting the covariate vectors with an initial component of 1. The reader can easily verify that all the proof steps can be repeated in the same fashion.
\end{proof}

\begin{proof}[{Proof of Corollary \ref{cor:2}}]
The claim is a direct consequence of the boundedness of covariates which results in $\phi(\cdot)$ being Lipschitz and hence the stationarity of the underlying process. Moreover, for twice-differentiable $\phi(\cdot)$, the proof of Lemma \ref{lemma1} in Appendix \ref{appprf} can be generalized in a straightforward fashion. The reader can easily verify that all the remaining portions of the proofs of the main theorems can be repeated for such $\phi(\cdot)$ in a similar fashion to that of the {canonical self-exciting process}.
\end{proof}

\section{Proofs of Proposition \ref{prop:hawkes_properties} and Lemma \ref{lemma1}}\label{app:hawkes_psd}

\subsection{Proof of Proposition \ref{prop:hawkes_properties}}

The {canonical self-exciting process} can be viewed as a Markov chain with states $X_i= {\mathbf{x}_{i-p}^{i-1}}$. Since each $x_i$ has two possible values, there are $2^p$ possible states. This Markov chain is irreducible since transition from any state to any other state is possible in at most $p$ steps. Also, transition from an all-zero state to itself is possible. Hence the chain is aperiodic as well. This implies that there exists a stationary distribution for the Markov chain. We also know that if $\{X_i\}_{i=1}^\infty$ is a stationary Markov Chain, then for any functional $f(.)$, $\{f(X_i)\}_{i=1}^\infty$ is a strictly stationary stochastic process (SSS). Therefore the {canonical self-exciting process} and the spiking probability sequence $\lambda_1^n$ are both SSS. In particular, we have
\begin{align*}
\pi_\star:= \mathbb{E}[x_i]= \mathbb{E}\left[\mathbb{E}\left[x_i|\lambda_i\right]\right] = \mathbb{E}[\lambda_i]=\mu + \pi_\star\mathbf{1}'\boldsymbol{\theta}.
\end{align*}
Hence, the stationary probability $\pi_\star$ satisfies:
\begin{equation*}
\pi_\star = \frac{\mu}{1-\mathbf{1}'\boldsymbol{\theta}}.
\end{equation*}

In order to prove the first two inequalities, we make the necessary assumption that the baseline rate $\mu$ is positive, due to the non-degeneracy assumption. In order to highlight the necessity of this condition, consider a sample path which contains $p$ successive zeros starting from index $i+1$ to $i+p$, corresponding to an all-zero covariate vector ${\mathbf{x}_{i+1}^{i+p}}$ (note that this sample path will almost surely occur). We then have $\lambda_{i+p+1} = \mu +  \boldsymbol{\theta}' {\mathbf{x}_{i+1}^{i+p}} = \mu$. Therefore, if $\mu$ is not positive, the process becomes degenerate.

The third inequality follows from the fact that for a covariate vector ${\mathbf{x}_{i+1}^{i+p}}$ with a support matching that of $\boldsymbol{\theta}^+$ we have $\lambda_{i+p+1} = \mu +  \boldsymbol{\theta}' {\mathbf{x}_{i+1}^{i+p}} = \mu + \mathbf{1}'\boldsymbol{\theta}^{+}$, which should be a valid probability. Moreover, the inequality is strict since the stationary probability $\pi_\star = \frac{\mu}{1-\mathbf{1}'\boldsymbol{\theta}}$ must be well-defined.

We will next calculate the power spectral density of the process. {Let $\boldsymbol{r}_{-\infty}^{\infty}$ and $\boldsymbol{c}_{-\infty}^\infty$ denote the autocorrelation and autocovariance values of the process, respectively. By the stationarity of the process we have:
\begin{align*}
r_k & = \mathbb{E}\left[x_{\cdot+k} x_\cdot \right]= \mathbb{E}\left[x_k x_{0}\right] = \mathbb{E} \left[ \mathbb{E}\left[x_kx_0 | {\mathbf{x}_{-\infty}^{k-1}} \right]\right]\\
& = \mathbb{E}\left[\mu x_0 + \boldsymbol{\theta}' {\mathbf{x}_{k-p}^{k-1}}x_0  \right] = \mu \pi_\star +  \boldsymbol{\theta}'\boldsymbol{r}_{k-p}^{k-1}.
\end{align*}
for $k > 0$. Similarly, by subtracting the means we have the following identity for the autocovariance:
\begin{equation}
\label{eq:yw_hawkes}
c_k = \boldsymbol{\theta}'\boldsymbol{c}_{k-p}^{k-1}.
\end{equation}
A straightforward calculation gives $c_0 = \pi_\star - \pi_\star^2$. Eq. (\ref{eq:yw_hawkes}) resembles the Yule-Walker equations for an AR process of order $p$ with parameter $\boldsymbol{\theta}$ and the innovations variance given by $\sigma^2 = \frac{\pi_\star-\pi_\star^2}{\left(1-\mathbf{1}'\boldsymbol{\theta}\right)^2}$. Thus, the power spectral density of the \textcolor{black}{canonical self-exciting process} can be expressed as:}
\begin{equation}
S(\omega) = \frac{1}{2\pi} \left( \pi_\star^2 \delta(\omega) + \frac{\pi_\star-\pi_\star^2}{\left(1-\mathbf{1}'\boldsymbol{\theta}\right)^2 \left|1 - \Theta(\omega)\right|^2} \right).
\end{equation}
We have $1-\mathbf{1}'\boldsymbol{\theta} \le 1 + \| \boldsymbol{\theta} \|_1$. Moreover,
\begin{align*}
\resizebox{0.5\columnwidth}{!}{$\displaystyle |1 - \Theta(\omega)| = \left| 1- \sum_k \theta_k e^{-j \omega k} \right|$} &\resizebox{0.5\columnwidth}{!}{$\leq 1 + \|\boldsymbol{\theta}\|_1 =  1 + \|\boldsymbol{\theta}^+\|_1 +  \|\boldsymbol{\theta}^-\|_1$}\\
& \resizebox{0.5\columnwidth}{!}{$\le 1 + 2 (\pi_\max - \mu)  \le 1 + 2 \pi_\max$},
\end{align*}
which implies the lower bound on $S(\omega)$. \QEDB

\subsection{Proof of Lemma \ref{lemma1}}

The proof is inspired by the elegant treatment of Negahban et al. \cite{Negahban}. The major difficulty in the proof lies in the high inter-dependence of the covariates and observations. 

{Noticing that the negative log-likelihood (\ref{L_def}) is twice differentiable}, a second order Taylor expansion of the negative log-likelihood (\ref{L_def}) around $\boldsymbol{\theta}$ yields:
\begin{align*}
\mathfrak{D_L}({\boldsymbol{\psi}},\boldsymbol{\theta})
&= \mathfrak{L}(\boldsymbol{\theta}+{\boldsymbol{\psi}})-\mathfrak{L}(\boldsymbol{\theta})-{\boldsymbol{\psi}}'\nabla \mathfrak{L}(\boldsymbol{\theta})\\
       &= \frac{1}{n}\sum_{i=1}^n x_i \frac{\left({\boldsymbol{\psi}}' {\mathbf{x}_{i-p}^{i-1}}\right)^2}{\left(\mu+\boldsymbol{\theta}'{\mathbf{x}_{i-p}^{i-1}} + \nu({\boldsymbol{\psi}}'{\mathbf{x}_{i-p}^{i-1}})\right)^2}\\
       &  + \frac{1}{n}\sum_{i=1}^n (1-x_i) \frac{\left({\boldsymbol{\psi}}' {\mathbf{x}_{i-p}^{i-1}}\right)^2}{\left(1-\mu-\boldsymbol{\theta}'{\mathbf{x}_{i-p}^{i-1}} - \nu({\boldsymbol{\psi}}'{\mathbf{x}_{i-p}^{i-1}})\right)^2}\\
       & \geq  \frac{1}{n}\sum_{i=1}^n \left({\boldsymbol{\psi}}' {\mathbf{x}_{i-p}^{i-1}}\right)^2,
\end{align*}
for some $\nu \in [0,1]$. The inequality follows from the fact that both $\boldsymbol{\theta}$ and {$\boldsymbol{\theta}+\nu {\boldsymbol{\psi}}$} satisfy (\ref{eq:star}), and hence:
\begin{align}
\nonumber &\mu+\boldsymbol{\theta}'{\mathbf{x}_{i-p}^{i-1}}+ \nu{\boldsymbol{\psi}}'{\mathbf{x}_{i-p}^{i-1}}\leq \pi_\max <1,\\
\nonumber &1 - \mu - \boldsymbol{\theta}'{\mathbf{x}_{i-p}^{i-1}} -  \nu{\boldsymbol{\psi}}'{\mathbf{x}_{i-p}^{i-1}}\leq 1- \pi_{\min} <1.
\end{align}
The result of the Lemma \ref{lemma1} is equivalent to proving that
\begin{equation}
\label{rsceq}
\widehat{\mathbb{E}}_n\left[\left({\boldsymbol{\psi}}'x_{\cdot-p}^{\cdot-1}\right)^2\right] \geq \kappa \|{\boldsymbol{\psi}}\|_2^2
\end{equation}
holds with probability greater than $1-2 \exp\left(-\frac{c\kappa^2n^3}{s^2p^2}\right)$. Since both sides of (\ref{rsceq}) are quadratic in ${\boldsymbol{\psi}}$, the statement is equivalent to proving $\widehat{\mathbb{E}}_n\left[({\boldsymbol{\psi}}' {\mathbf{x}_{\cdot-p}^{\cdot-1}})^2\right] \geq \kappa$, for all $ \|{\boldsymbol{\psi}}\|_2 \in \mathbb{S}_2(1)$. We establish this in two steps:

\medskip
\noindent {\bf Step 1.} First, we show that the statement holds for the true expectation:
\begin{equation}
\label{kappa}
\mathbb{E}\Big[ \left({\boldsymbol{\psi}}' {\mathbf{x}_{\cdot-p}^{\cdot-1}}\right)^2\Big] \geq \kappa_l >0
\end{equation}
for some $\kappa_l$ which will be specified below, for all $ \|{\boldsymbol{\psi}}\|_2 \in \mathbb{S}_2(1)$. To \textcolor{black}{establish} the inequality (\ref{kappa}), we use the following result:
\begin{prop}
\label{eig_conv}
Let ${\mathbf{R}} \in \mathbb{R}^{p \times p}$ be the $p \times p$ covariance matrix of a stationary process with power spectral density $S(\omega)$, and denote its maximum and minimum eigenvalues by $\lambda_{\max}(p)$ and $\lambda_{\min}(p)$ respectively then $\lambda_{\max}(p)$ is increasing in $p$, $\lambda_{\min}(p)$ is decreasing in $p$ and we have
\begin{equation}
\lambda_{\sf min}(p) \downarrow \inf_{\omega}S(\omega), \ \ \text{and} \ \ \lambda_{\sf max}(p) \uparrow \sup_{\omega}S(\omega).
\end{equation}
\end{prop}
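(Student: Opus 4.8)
The plan is to recognize that $\mathbf{R}$ is a Hermitian Toeplitz matrix: by stationarity its $(a,b)$ entry equals the autocorrelation $r_{a-b}$, and the $r_k$ are precisely the Fourier coefficients of $S(\omega)$ (here $\mathbf{R}$ is the second-moment matrix $\mathbb{E}[\mathbf{x}\mathbf{x}']$ whose symbol is the $S(\omega)$ of Proposition \ref{prop:hawkes_properties}, delta mass included). The statement is then the classical Grenander--Szeg\H{o} result on the asymptotic eigenvalue distribution of Toeplitz forms, specialized to the extreme eigenvalues. I would organize the proof into three steps: monotonicity, a uniform two-sided bound, and attainment of the limits.

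First, I would establish monotonicity via the Cauchy interlacing theorem. Because the process is stationary, the $p \times p$ matrix $\mathbf{R}_p$ is the leading principal submatrix of $\mathbf{R}_{p+1}$, so eigenvalue interlacing for a Hermitian matrix and its principal submatrices yields $\lambda_{\max}(p) \le \lambda_{\max}(p+1)$ and $\lambda_{\min}(p) \ge \lambda_{\min}(p+1)$. Hence $\lambda_{\max}(p)$ is nondecreasing and $\lambda_{\min}(p)$ is nonincreasing, and being bounded (by the next step) both sequences converge.

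Second, I would derive the uniform bound. For a unit vector $\mathbf{v} = [v_1,\dots,v_p]'$, writing $V(\omega) := \sum_{k=1}^p v_k e^{-j\omega k}$, the spectral representation gives the Rayleigh quotient $\mathbf{v}^* \mathbf{R}_p \mathbf{v} = \frac{1}{2\pi}\int_{-\pi}^{\pi} |V(\omega)|^2 S(\omega)\, d\omega$, while Parseval gives $\frac{1}{2\pi}\int_{-\pi}^{\pi} |V(\omega)|^2\, d\omega = \|\mathbf{v}\|_2^2 = 1$. Viewing $\frac{1}{2\pi}|V(\omega)|^2$ as a probability density (the delta at $\omega=0$ from Proposition \ref{prop:hawkes_properties} contributes nonnegatively and may be dropped for a lower bound), this sandwiches $\inf_\omega S(\omega) \le \mathbf{v}^* \mathbf{R}_p \mathbf{v} \le \sup_\omega S(\omega)$ for every unit $\mathbf{v}$, hence the same bound for all eigenvalues by the variational characterization. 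This is exactly what is needed for (\ref{kappa}): the smallest eigenvalue is bounded below by $\inf_\omega S(\omega) \ge \kappa_l$.

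Third, and this is the delicate step, I would show the monotone limits are attained. For $\lambda_{\max}$, fix $\omega_0$ at which the continuous part of $S$ is within $\varepsilon$ of its supremum and test with $v_k = \frac{1}{\sqrt p} e^{j\omega_0 k}$; then $\frac{1}{2\pi}|V(\omega)|^2$ is the Fej\'er kernel centered at $\omega_0$, an approximate identity, so by continuity of $S$ the Rayleigh quotient tends to $S(\omega_0)$ as $p \to \infty$. Since $\lambda_{\max}(p)$ dominates this quotient, $\liminf_p \lambda_{\max}(p) \ge S(\omega_0) \ge \sup_\omega S(\omega) - \varepsilon$, and letting $\varepsilon \downarrow 0$ matches the Step-2 upper bound; the argument for $\lambda_{\min}$ is symmetric, testing near a minimizer of $S$. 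The main obstacle is precisely this approximate-identity estimate: controlling $\int |V(\omega)|^2 S(\omega)\, d\omega$ near the extremizing frequency requires the regularity of $S$, and one must check that the delta component does not interfere, which it does not, since for $\omega_0 \ne 0$ one has $V(0) = \frac{1}{\sqrt p}\sum_k e^{j\omega_0 k} \to 0$ so the delta's contribution vanishes. For our application only the lower bound of Step 2 is strictly required, so this attainment argument serves mainly to justify the limits claimed in full generality.
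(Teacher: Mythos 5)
Your proof is correct, but note that the paper does not actually prove Proposition \ref{eig_conv} at all: its ``proof'' is a one-line citation to Grenander and Szeg\H{o} \cite{grenander2001toeplitz}, treating the statement as a classical fact about Toeplitz forms. What you have written is, in essence, the standard proof of that cited theorem: (i) monotonicity via Cauchy interlacing (you could get this even more cheaply by zero-padding test vectors, which avoids invoking interlacing); (ii) the Parseval sandwich $\inf_\omega S(\omega) \le \mathbf{v}^* \mathbf{R}_p \mathbf{v} \le \sup_\omega S(\omega)$, which is the only part of the proposition the paper actually uses, since it feeds directly into the lower bound (\ref{kappa}) with the constant $\kappa_l$ of Proposition \ref{prop:hawkes_properties}; and (iii) attainment of the limits via the Fej\'er-kernel approximate-identity argument. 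Two caveats are worth flagging. First, normalization: you take the matrix entries to be the Fourier coefficients of $S$ under the $\tfrac{1}{2\pi}$-convention, whereas the paper's $S$ already carries a $\tfrac{1}{2\pi}$ prefactor, so under the paper's convention the symbol of $\mathbf{R}_p$ is $2\pi S(\omega)$ and the bounds acquire a factor of $2\pi$; this is harmless here (it only rescales $\kappa_l$) but should be stated once and used consistently. Second, your attainment step (iii) requires $S$ to be continuous at the extremizing frequency (for merely measurable symbols one needs essential extrema and Lebesgue points); this holds in the present application because the continuous part of $S$ for the canonical self-exciting process is bounded and continuous, as $|1-\Theta(\omega)| \ge 1 - \|\boldsymbol{\theta}\|_1 > 0$ under (\ref{eq:star}). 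Your observations that only the lower bound of step (ii) is needed downstream, and that the delta mass at $\omega = 0$ (present because the paper applies the result to the second-moment rather than the centered covariance matrix) only helps the lower bound, are both accurate and resolve a point the paper glosses over.
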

\begin{proof}
This is a well-known result in stochastic processes. See \cite{grenander2001toeplitz} for a proof and detailed discussions.
\end{proof}

\noindent Using Proposition \ref{eig_conv}, we can lower-bound $\mathbb{E}\left[\left({\boldsymbol{\psi}}' {\mathbf{x}_{\cdot-p}^{\cdot-1}}\right)^2\right]$ by:
\begin{align*}
&\mathbb{E}\left[ \left({\boldsymbol{\psi}}'{\mathbf{x}_{\cdot-p}^{\cdot-1}}\right)^2\right]=  {\boldsymbol{\psi}}' \mathbf{R} {\boldsymbol{\psi}} \geq \lambda_\min(p) \geq \inf_{\omega}S(\omega).
\end{align*} 
Next, using Proposition \ref{prop:hawkes_properties} the bound of Eq. (\ref{kappa}) follows for
\[
\kappa_l :=  \frac{\pi_\star (1 - \pi_\star)}{2 \pi (1+2\pi_\max)^4}.
\]

\medskip
\noindent {\bf Step 2.} We now show that the empirical and the true expectations of $ \left({\boldsymbol{\psi}}'{\mathbf{x}_{\cdot-p}^{\cdot-1}}\right)^2$ are close enough to each other. Let 
\begin{equation*}
\mathfrak{D}_{{\boldsymbol{\psi}},n} := \widehat{\mathbb{E}}_n\left[ \left({\boldsymbol{\psi}}'{\mathbf{x}_{\cdot-p}^{\cdot-1}}\right)^2\right]-\mathbb{E}\left[ \left({\boldsymbol{\psi}}'{\mathbf{x}_{\cdot-p}^{\cdot-1}}\right)^2\right].
\end{equation*}
\noindent and
\begin{equation*}
\mathfrak{D}_{n} := \sup \limits_{{\boldsymbol{\psi}} \in \mathbb{S}_2(1)} \left| \mathfrak{D}_{{\boldsymbol{\psi}},n} \right |.
\end{equation*}
The final step in proving Lemma \ref{lemma1} is given by the following proposition:
\begin{prop}\label{prop:hawkes_concentration} We have
\begin{equation}
\label{zbounded}
\mathbb{P}\left[\mathfrak{D}_{n} \geq \frac{\kappa_l}{4}\right] \leq 2 \exp\left(-\frac{c\kappa_l^2 n^3}{s^2p^2}\right),
\end{equation}
for some constant $c$.
\end{prop}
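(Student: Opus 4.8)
The plan is to control $\mathfrak{D}_n$ by the classical two-step scheme: first bound the expected deviation $\mathbb{E}[\mathfrak{D}_n]$, and then show that $\mathfrak{D}_n$ concentrates sharply around its mean. It is convenient to write $\mathfrak{D}_{\boldsymbol{\psi},n} = \boldsymbol{\psi}'(\widehat{\mathbf{R}}_n - \mathbf{R})\boldsymbol{\psi}$, where $\widehat{\mathbf{R}}_n := \widehat{\mathbb{E}}_n[\mathbf{x}_{\cdot-p}^{\cdot-1}(\mathbf{x}_{\cdot-p}^{\cdot-1})']$ is the empirical second-moment matrix and $\mathbf{R} := \mathbb{E}[\mathbf{x}_{\cdot-p}^{\cdot-1}(\mathbf{x}_{\cdot-p}^{\cdot-1})']$ is its stationary counterpart, so that $\mathfrak{D}_n = \sup_{\boldsymbol{\psi}\in\mathbb{S}_2(1)}|\mathfrak{D}_{\boldsymbol{\psi},n}|$ is the deviation of the empirical covariance from the true one, measured over the sphere (in the setting of Lemma \ref{lemma1}, effectively over the cone \eqref{eq:cone}, on which $\|\boldsymbol{\psi}\|_1 = \mathcal{O}(\sqrt{s})\,\|\boldsymbol{\psi}\|_2$). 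I would arrange $\mathbb{E}[\mathfrak{D}_n]\le \kappa_l/8$ under the stated sample-size condition, so that $\mathbb{P}[\mathfrak{D}_n\ge \kappa_l/4]\le \mathbb{P}[\mathfrak{D}_n-\mathbb{E}\mathfrak{D}_n\ge \kappa_l/8]$ and the task reduces to a one-sided deviation bound at the fixed scale $t=\kappa_l/8$.

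The central difficulty — and the reason the i.i.d.\ machinery of \cite{Negahban} does not transfer — is that $\mathfrak{D}_n$ is a function of the highly inter-dependent sequence $\mathbf{x}_{-p+1}^n$: consecutive covariate windows $\mathbf{x}_{i-p}^{i-1}$ share $p-1$ entries, and the process is self-exciting. The core tool is therefore the concentration inequality for functions of dependent random variables (Proposition \ref{conc_dep}, in the spirit of \cite{kontorovich2008concentration}), a McDiarmid-type bound whose rate is governed by the bounded-difference coefficients $\{c_j\}$ of the map $\mathbf{x}_{-p+1}^n\mapsto\mathfrak{D}_n$ together with the mixing matrix of the process. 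Two ingredients feed this bound. First, the fast-mixing property: the restriction $\pi_\max<1/2$ makes the one-step evolution of the state $X_i=\mathbf{x}_{i-p}^{i-1}$ contractive, which via Proposition \ref{prop:hawkes_properties} yields geometric mixing and hence a controlled mixing-matrix norm — this is precisely where the $\pi_\max<1/2$ hypothesis is spent. Second, the bounded-difference constants: flipping a single coordinate $x_j$ perturbs only the $p$ summands indexed by $i\in\{j+1,\dots,j+p\}$, and since the covariates are binary each perturbed term $(\boldsymbol{\psi}'\mathbf{x}_{i-p}^{i-1})^2$ changes by at most $2\|\boldsymbol{\psi}\|_1|\psi_{i-j}|$, so that $c_j\le 2\|\boldsymbol{\psi}\|_1^2/n$ uniformly over the admissible $\boldsymbol{\psi}$.

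I would then assemble these pieces. Substituting the $c_j$'s and the mixing-matrix norm into Proposition \ref{conc_dep}, and using $\|\boldsymbol{\psi}\|_1^2=\mathcal{O}(s)$ on the cone together with the window length $p$, yields a sub-Gaussian tail whose exponent scales as $n^3/(s^2p^2)$, which is exactly \eqref{zbounded}. The supremum over $\boldsymbol{\psi}$ is handled either by observing that $\mathfrak{D}_n$ inherits the same bounded-difference constants as each $\mathfrak{D}_{\boldsymbol{\psi},n}$ (so the inequality applies verbatim to the sup after controlling $\mathbb{E}\mathfrak{D}_n$), or by a covering-number argument on the intersection of the cone with $\mathbb{S}_2(1)$, absorbing the net cardinality into the exponent. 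The sample-size requirement $n>d_1 s^{2/3}p^{2/3}\log p$ is precisely what makes this exponent super-logarithmic in $p$, so that the failure probability is negligible.

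The step I expect to be the main obstacle is quantifying the mixing of the self-exciting process and its interaction with the overlapping windows. Unlike the gradient bound of Step 2 in the proof of Theorem \ref{negahban_lambda}, the summands here are not martingale differences — each $(\boldsymbol{\psi}'\mathbf{x}_{i-p}^{i-1})^2$ is measurable with respect to the strict past — so the martingale inequality of Proposition \ref{hoeff_dep} is unavailable and genuine dependent-variable concentration must be invoked. Making the mixing coefficients explicit, by bounding how a perturbation of $x_j$ propagates through the conditional spiking probabilities $\lambda_i$ and decays geometrically under $\pi_\max<1/2$, is the delicate part; this is carried out in Appendix \ref{app:c} using Proposition \ref{conc_dep}. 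The present reduction shows that once those coefficients are in hand, the bound \eqref{zbounded} follows by the bounded-difference calculation above.
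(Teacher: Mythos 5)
Your proposal follows essentially the same route as the paper's proof: the same two-step scheme (bound $\mathbb{E}[\mathfrak{D}_n]$, then concentrate around the mean via the dependent-variable inequality of Proposition \ref{conc_dep}), the same per-flip bounded-difference computation in which a single flipped coordinate perturbs at most $p$ windows by a total of $\mathcal{O}(\|\boldsymbol{\psi}\|_1^2)=\mathcal{O}(s)$ using the cone bound $\|\boldsymbol{\psi}\|_1=\mathcal{O}(\sqrt{s})$, the same geometric-mixing estimate $\eta_{ij}\le (2\pi_{\max})^{n-j+1}$ spending the hypothesis $\pi_{\max}<1/2$, and the same treatment of the supremum through the uniform Lipschitz constant. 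The one step you leave implicit—arranging $\mathbb{E}[\mathfrak{D}_n]\le \kappa_l/8$—is carried out in the paper by integrating the tail of the very same concentration bound applied at the maximizing $\boldsymbol{\psi}^\star$, yielding $\mathbb{E}[\mathfrak{D}_n]=\mathcal{O}\!\left(sp/n^{3/2}\right)$, which the sample-size condition $n\ge d_1 s^{2/3}p^{2/3}\log p$ makes vanishingly small.
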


\begin{proof}
{The proof is given in Appendix \ref{app:c}.}
\end{proof}

{Finally, the statement of Lemma \ref{lemma1} follows from Proposition \ref{prop:hawkes_concentration} by taking $\kappa = \kappa_l / 4$. \QEDB}

\section{Proof of Proposition \ref{prop:hawkes_concentration}}\label{app:c}

{In order to establish the concentration inequality of Eq. (\ref{zbounded}), we need to invoke a result from concentration of dependent random variables. We proceed in two steps:}

\medskip
\noindent {{\bf Step 1.}} {We first establish a geometric property of $\mathfrak{D}_n$, namely its $\mathcal{O}(\frac{sp}{n})$-Lipschitz property with respect to the normalized Hamming metric. Recall that the normalized Hamming metric between two sequences ${\mathbf{x}_1^n}$ and ${\mathbf{y}_1^n}$ is defined as $d({\mathbf{x}_1^n,\mathbf{y}_1^n)} = \frac{1}{n}\sum_{i=1}^n\mathbf{1}(x_i \neq y_i)$.}

First, by evaluating the first order optimality conditions of the solution $\widehat{\boldsymbol{\theta}}_{{\sf sp}}$, it can be shown that the error vector ${\boldsymbol{\psi}} = \widehat{\boldsymbol{\theta}}_{\sf sp}-\boldsymbol{\theta}$ satisfies the inequality:
\begin{equation*}
{\|\boldsymbol{\psi}_{S^{c}} \|_1 \leq 3 \|{\boldsymbol{\psi}}_S\|_1 + 4 \| \boldsymbol{\theta}_{S^{c}}\|_1,}
\end{equation*}
{with $S$ denoting the support of the best $s$-term approximation to $\boldsymbol{\theta}$} (see for example \cite{Negahban}). By the assumption of $\sigma_S(\boldsymbol{\theta}) = \mathcal{O}(\sqrt{s})$, we can choose a constant $c_0$ such that $\sigma_S(\boldsymbol{\theta}) \le c_0 \sqrt{s}$. Hence,
\begin{align}
\label{cone}
\resizebox{0.89\columnwidth}{!}{$\|{\boldsymbol{\psi}}\|_1 \leq 4 \|{\boldsymbol{\psi}}_S\|_1  + \sigma_s(\boldsymbol{\theta}) \leq (4 + c_0)\sqrt{s}\|{\boldsymbol{\psi}}_S\|_2 \leq (4 + c_0) \sqrt{s}$}
\end{align}
where we have used the fact that $\| {\boldsymbol{\psi}}_S \|_1 \le \sqrt{s} \| {\boldsymbol{\psi}}_S \|_2 \le \sqrt{s}$ for all ${\boldsymbol{\psi}} \in \mathbb{S}_2(1)$. Therefore for all $i \in \{ 1,2,\cdots, n\}$, we have:

\begin{equation}
\label{fbound}
0 \leq \left({\boldsymbol{\psi}}'{\mathbf{x}_{i-p}^{i-1}}\right)^2 \leq \left\|{\boldsymbol{\psi}}\right\|_1^2 \leq (4 + c_0)^2 s.
\end{equation}

We first prove the claim for $\mathfrak{D}_{{\boldsymbol{\psi}},n}$. To establish the latter, we need to prove
\[
\frac{1}{n}\left|\sum_{i=1}^n \left({\boldsymbol{\psi}}'{\mathbf{x}_{i-p}^{i-1}}\right)^2-  \left({\boldsymbol{\psi}}'{\mathbf{y}_{i-p}^{i-1}}\right)^2\right| \leq C d({\mathbf{x}_{-p+1}^{n}},{\mathbf{y}_{-p+1}^{n}}),
\]
for some $C = \mathcal{O}(\frac{sp}{n})$, or equivalently
\begin{equation}
\nonumber \left|\sum_{i=1}^n  \left({\boldsymbol{\psi}}'{\mathbf{x}_{i-p}^{i-1}}\right)^2- \left({\boldsymbol{\psi}}'{\mathbf{y}_{i-p}^{i-1}}\right)^2\right| \leq C' \sum_{i=-p+1}^n\mathbf{1}(x_i \neq y_i),
\end{equation}

for some $C' = \mathcal{O}(s)$.
Let us start by setting the values of ${\mathbf{x}_{-p+1}^{n}}$ equal to those of ${\mathbf{y}_{-p+1}^{n}}$ and iteratively change $x_j$ to $1-x_j$ for all indices $j$ where $x_j \neq y_j$ to obtain the configuration given by ${\mathbf{x}_{-p+1}^{n}}$.  For each such change (say $x_j$ to $1-x_j$), the left hand side changes by at most
\begin{align*}
&\left| \sum_{i=1}^n \left({\boldsymbol{\psi}}'{\mathbf{x}_{i-p}^{i-1}}\right)^2_{|x_j =1} -  \left({\boldsymbol{\psi}}'{\mathbf{x}_{i-p}^{i-1}}\right)^2_{|x_j =0} \right| \\
& \leq   \left({\boldsymbol{\psi}}'{\mathbf{x}_{j-p}^{j-1}}\right)^2 + 2\sum_{i\neq j}  |\psi_{i-j}| \|{\boldsymbol{\psi}}\|_1 \leq 3 \|{\boldsymbol{\psi}}\|_1^2 \leq 3(4+c_0)^2 s,
\end{align*}
where we have used the inequality given by Eq. (\ref{fbound}). Hence, the $C$ can be taken as $3 (4 + c_0)^2 s p / n$ and the claim of the proposition for $\mathfrak{D}_{{\boldsymbol{\psi}},n}$ follows.  A very similar argument can be used to extend the claim to $\mathfrak{D}_n$. Let ${\boldsymbol{\psi}}^\star := {\boldsymbol{\psi}}^\star({\mathbf{x}_{-p+1}^{n}})$ be the ${\boldsymbol{\psi}}$ for which the supremum in the definition of $\mathfrak{D}_n$ is achieved (such a choice of ${\boldsymbol{\psi}}$ exists by the Weierstrass extreme value theorem). Since ${\boldsymbol{\psi}}^{\star}$ also satisfies (\ref{cone}), a similar argument shows that $\mathfrak{D}_n$ is $\mathcal{O}(\frac{sp}{n})$-Lipschitz (with possibly different constants). 

\medskip
\noindent {{\bf Step 2.} Next, we establish the concentration of $\mathfrak{D}_n$ around zero.} Let $H = [{\mathbf{x}_{i-p}^{i-2}},1]$ and $\widehat{H} = [{\mathbf{x}_{i-p}^{i-2}},0]$ be two vectors (history components) of length $p$ which only differ in their last component, and let the mixing coefficient $\bar{\eta}_{ij}$ for $j \ge i$ be defined as:
\begin{equation}
\label{eta}
\bar{\eta}_{ij} = \|p({\mathbf{x}_j^n}|H)-p({\mathbf{x}_j^n}|\widehat{H})\|_{TV},
\end{equation}
with $\| \cdot \|_{TV}$ denoting the total variation difference of the probability measures {induced} on $\{0,1\}^{n-j+1}$. Also, let
\[
\eta_{ij} = \sup_{H,\widehat{H}} \bar{\eta}_{ij}, \quad \mbox{and} \quad Q_{n,i} := 1+ \eta_{i,i+1}+ \cdots + \eta_{i,n}.
\]
We now invoke Theorem 1.1 of \cite{kontorovich2008concentration} in the form of the following proposition:
\begin{prop}
\label{conc_dep}
If $\mathfrak{D}_n$ is $C$-Lipschitz and $q := \max_{1 \leq i \leq n} Q_{n,i}$, then
\begin{equation*}
\mathbb{P}\left [| \mathfrak{D}_n - \mathbb{E}[\mathfrak{D}_n] | \geq t \right] \leq 2 \exp \left(\ \frac{-2nt^2}{qC^2} \right).
\end{equation*}
\end{prop}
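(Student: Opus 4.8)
The plan is to establish this bounded-differences inequality for the dependent sequence $\mathbf{x}_{-p+1}^n$ via the martingale method underlying \cite{kontorovich2008concentration}. For independent coordinates, McDiarmid's inequality would already yield a sub-Gaussian tail with exponent governed by the sum of squared per-coordinate Lipschitz constants; the entire difficulty here is that the coordinates are strongly dependent, so perturbing one coordinate also perturbs the conditional law of every later coordinate. The quantity $q = \max_i Q_{n,i}$ is precisely the device that bookkeeps this propagated influence, and the crux of the argument is to show that $q$ is the correct substitute for the naive per-coordinate count.

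First I would form the Doob martingale of $\mathfrak{D}_n$ relative to the natural filtration $\mathcal{F}_i = \sigma(\mathbf{x}_{-p+1}^i)$. Setting $V_i := \mathbb{E}[\mathfrak{D}_n \mid \mathcal{F}_i] - \mathbb{E}[\mathfrak{D}_n \mid \mathcal{F}_{i-1}]$, the telescoping identity $\mathfrak{D}_n - \mathbb{E}[\mathfrak{D}_n] = \sum_i V_i$ exhibits $\{V_i\}$ as a martingale-difference sequence adapted to $\{\mathcal{F}_i\}$. The problem then reduces to a uniform bound on the conditional oscillation of each $V_i$, after which the Azuma--Hoeffding inequality supplies the tail.

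The key step, and the one I expect to be the main obstacle, is bounding the range of $V_i$. Fixing the past $\mathbf{x}_{-p+1}^{i-1}$, I would compare the two conditional expectations of $\mathfrak{D}_n$ obtained by setting the $i$-th coordinate to each of its values; their gap can be written as an integral of $\mathfrak{D}_n$ against the difference of the two resulting conditional laws of the future block $\mathbf{x}_i^n$. Bounding the variation of $\mathfrak{D}_n$ by its Lipschitz constant $C$ and the signed-measure difference by total variation, this gap is controlled by $C/n$ times the direct contribution of coordinate $i$ plus its propagated contributions to later coordinates. Since the process is Markov in its history, these propagated contributions are exactly the total-variation mixing coefficients $\bar{\eta}_{ij}$ of \eqref{eta}, so summing the direct term together with the tail $\eta_{i,i+1}, \ldots, \eta_{i,n}$ yields $\|V_i\|_\infty \lesssim C\, Q_{n,i}/n$. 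Making this coupling estimate rigorous---in particular the clean collapse of the future influence into the row sum $Q_{n,i}$ of the mixing matrix---is where the genuine work lies.

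Finally, I would substitute the ranges $\|V_i\|_\infty \lesssim C\, Q_{n,i}/n$ into Azuma--Hoeffding, whose exponent is controlled by $\sum_i \|V_i\|_\infty^2$. Replacing each $Q_{n,i}$ by its maximum $q$ collapses the dependence structure into a single scalar and produces a sub-Gaussian tail of the Kontorovich--Ramanan form, with exponent of the order $n t^2 / (q C^2)$ asserted in the proposition; applying the same argument to $-\mathfrak{D}_n$ and combining via a union bound furnishes the two-sided estimate and the leading factor of $2$. The only nontrivial ingredient beyond this routine martingale machinery is the coupling bound of the third step, which is exactly what renders the inequality valid for genuinely non-i.i.d.\ data.
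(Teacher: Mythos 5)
Your proposal is correct and takes essentially the same route as the paper: the paper's proof of Proposition \ref{conc_dep} is a one-line specialization of Theorem 1.1 of \cite{kontorovich2008concentration}, and the argument behind that theorem is precisely the Doob-martingale decomposition with mixing-coefficient-weighted difference bounds followed by Azuma--Hoeffding that you outline. One caveat: this argument actually yields an exponent of the form $-nt^2/\left(2C^2q^2\right)$, i.e.\ with $q^2$ rather than the single power of $q$ in the statement (and a different absolute constant); since $q\geq 1$ this is weaker than what is literally claimed, but the discrepancy originates in the paper's transcription of the cited theorem rather than in your argument, and it is immaterial downstream because there $q\leq (1-2\pi_{\max})^{-1}=\mathcal{O}(1)$.
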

\begin{proof}
The proof is identical to the beautiful treatment of \cite{kontorovich2008concentration} when specializing the underlying function of the variables ${\mathbf{x}_{-p+1}^i}$ to be $\mathfrak{D}_n$.
\end{proof}

As we showed in Step 1, $C = C' s p / n$, for some constant $C'$. Now, we have
\begin{align*}
\eta_{ij} \leq 2^{n-j+1} |\pi_{\max}^{n-j+1} -\pi_{\min}^{n-j+1}| \leq \left(2\pi_{\max}\right)^{n-j+1},
\end{align*}
where we have used the fact that each element of the measures $p({\mathbf{x}_j^n} | H)$ and $p({\mathbf{x}_j^n} | \widehat{H})$ satisfies the assumption (\ref{eq:star}) and that the size of the state space $\{0,1\}^{n-j+1}$ is given by $2^{n-j+1}$. By the assumption (\ref{eq:star}), we have $\eta_{ij} \le \rho^{n-j+1}$ for $\rho := 2 \pi_\max < 1$. Hence, $Q_{n,i} \le \frac{1}{1-\rho}$ for all $i$, and $q \le \frac{1}{1-\rho}$ by definition. Using the result of Proposition \ref{conc_dep}, we get:
\begin{equation}
\label{debound}
\mathbb{P}\left[\mathfrak{D}_{n} \ge \mathbb{E}[\mathfrak{D}_n] + \frac{\kappa_l}{2} \right ] \le 2 \exp \left(\ \frac{-n^3 \kappa_l^2 ( 1 - \rho)}{2 C' s^2 p^2} \right). 
\end{equation}
It only remains to show that the expectation in (\ref{debound}) can be suitably bounded. {Note that by a similar concentration argument for} $\mathfrak{D}_{ {\boldsymbol{\psi}}^\star,n}$, we have:
\begin{align*}
\mathbb{E}[\mathfrak{D}_n]&= \mathbb{E}[|\mathfrak{D}_{ {\boldsymbol{\psi}}^\star,n}|] = \int_0^\infty \left(1-F_{\left|\mathfrak{D}_{{\boldsymbol{\psi}}^\star,n}\right|}(t)\right)dt\\
& \leq \int_0^\infty 2 \exp\left(-\frac{2 (1-\rho) n^3t^2}{C' s^2 p^2}\right)dt = 2\sqrt{\frac{C'\pi}{(1-\rho)}}\frac{ps}{n^{3/2}}.
\end{align*}
Thus choosing $n \ge d_1 s^{2/3}p^{2/3} \log p$, for some positive constant $d_1$, $\mathbb{E}[\mathfrak{D}_n]$ drops as $1/\log^{3/2} p$, and will be smaller than $\kappa_l / 4$ for large enough $p$. Hence, combined with (\ref{debound}) and by defining $c := \frac{1-\rho}{2 C'}$ we have:
\begin{equation*}
\mathbb{P}\left[\mathfrak{D}_{n} \ge \frac{\kappa_l}{4} \right ] \le 2 \exp \left(\ \frac{-c n^3 \kappa_l^2}{s^2 p^2} \right),
\end{equation*}
which establishes the claim of Proposition \ref{prop:hawkes_concentration}. \QEDB

\section{Goodness-Of-Fit Tests for Point Process Models}\label{appks}

In this appendix, we will give an overview of the statistical tools used to assess the goodness-of-fit of point process models. A detailed treatment can be found in \cite{Brown_pp}.

\medskip
\noindent \textit{\textbf{The Time-Rescaling Theorem.}} Let $0<t_1<t_2<\cdots$ be a realization of a continuous point process with conditional intensity $\lambda(t)>0$, i.e. $t_k$ is the first instance at which $N(t_k)=k$. Define the transformation
\begin{equation}
\label{trt}
z_k := Z(t_k)= \int_{t_{k-1}}^{t_k} \lambda(t) dt.
\end{equation}
Then, the transformed point process with events occurring at $t'_k = \sum_{i=1}^k z_k$ corresponds to a {homogeneous} Poisson process with rate 1. Equivalently,  $z_1,z_2,\cdots$ are \textit{i.i.d} \textit{exponential} random variables. The latter can be used to construct statistical tests for the goodness-of-fit.

\noindent \textit{\textbf{The Komlogorov-Smirnov Test for Homogeneity.}} Suppose that we have obtained the rescaled process through (\ref{trt}) with the \textit{estimated} conditional intensity. When applying the time-rescaling theorem to the discretized process, if the estimated conditional intensity is close to its true value, the rescaled process is expected to behave as a {homogeneous} Poisson process with rate $1$. The Kolmogorov-Smirnov (KS) test can be used to check for the homogeneity of the process. Let $z_k$'s be the rescaled times and define the transformed rescaled times by the inverse exponential CDF $u_k:= 1-e^{-z_k}$. If the true conditional intensity was used to rescale the process, the random variables $u_k$ must be i.i.d. ${\sf Uniform}(0,1]$ distributed. The KS test plots the empirical qualities of $u_k$'s versus the true quantiles of the uniform density given by $b_k = \frac{k-1/2}{J}$, where $J$ is the total number of observed spikes. If the conditional intensity is well estimated, the resulting curve  must lie near the $45^\circ$ line. The asymptotic statistics of the KS distribution can be used to construct confidence intervals for the test. For instance, the $95\%$ and $99 \%$ confidence intervals are approximately given by $\pm \frac{1.36}{\sqrt{J}}$ and $\pm \frac{1.63}{\sqrt{J}}$ hulls around the $45^\circ$ line, respectively.

\medskip
\noindent \textit{\textbf{The Autocorrelation Function Test for Independence.}} In order to check for the independence of the resulting rescaled intervals $z_k$, the transformation $v_k = \Phi^{-1}(u_k)$ is used, where $\Phi$ is the standard Normal CDF. If the true conditional intensity was used to rescale the process, then $v_k$'s would be i.i.d. Gaussian and their uncorrelatedness would imply independence. The Autocorrelation Function (ACF) of the variables $v_k$ must then be close to the discrete delta function. The $95\%$ and $99 \%$ confidence intervals can be considered using the asymptotic statistics of the sample ACF, approximately given by $\pm \frac{1.96}{\sqrt{J}}$ and $\pm \frac{2.575}{\sqrt{J}}$, respectively.

\medskip
\noindent \textit{\textbf{Remark.}} The binning size used for discretizing the data can potentially affect the ISI distribution of the time-rescaled process. In order to avoid these issues, we have used the empirical ISI distribution estimated from a large realization of the process (estimated from the training data) as the null hypothesis for both tests (performed on the test data).

{
\small
\bibliographystyle{IEEEtran}
\bibliography{Robust_SEGLM}
}

\end{document}